\documentclass{article}

\usepackage{microtype}
\usepackage{graphicx}
\usepackage{subcaption}
\usepackage{booktabs} % for professional tables
\usepackage{multirow} % for table row spanning

\usepackage{hyperref}

\usepackage[main, final]{neurips_2025}

\usepackage{amsmath}
\usepackage{amssymb}
\usepackage{mathtools}
\usepackage{amsthm}
\usepackage{algorithm}
\usepackage{algorithmic}
\usepackage{tikz}
\usepackage{pgfplots}
\pgfplotsset{compat=1.18}
\usetikzlibrary{positioning,decorations.pathreplacing,shapes.geometric}

\usepackage[capitalize,noabbrev]{cleveref}

\theoremstyle{plain}
\newtheorem{theorem}{Theorem}[section]
\newtheorem{proposition}[theorem]{Proposition}
\newtheorem{lemma}[theorem]{Lemma}
\newtheorem{corollary}[theorem]{Corollary}
\theoremstyle{definition}

\theoremstyle{remark}
\newtheorem{remark}[theorem]{Remark}

\usepackage[textsize=tiny]{todonotes}

\title{Risk-Averse Online POMDP Planning via CVaR of the Immediate Cost with Performance Guarantees}

\author{%
  Yaacov Pariente \\
  Faculty of Mathematics \\
  Technion -- Israel Institute of Technology \\
  Haifa, Israel \\
  \texttt{yaacovp@campus.technion.ac.il} \\
  \And
  Vadim Indelman \\
  Stephen B.\ Klein Faculty \\
  of Aerospace Engineering; \\
  Faculty of Data and Decision Sciences \\
  Technion -- Israel Institute of Technology \\
  Haifa, Israel \\
  \texttt{vadim.indelman@technion.ac.il} \\
}

\begin{document}

\maketitle

\begin{abstract}
Online POMDP planners optimize the expected cumulative cost, which can mask dangerous states when the belief places significant mass on high-cost states. Existing risk-averse methods apply static or dynamic Conditional Value at Risk (CVaR) to the value function, capturing trajectory-level risk, but share two gaps: (i) by retaining the immediate cost as an expectation of a state-dependent cost over the belief, the risk \emph{within} the belief is left unaddressed; and (ii) by modifying the value function, they require new tailored algorithms rather than reusing existing expectation-based planners. We instead apply CVaR to the immediate cost over the belief at each step, directly targeting per-step uncertainty about the current state. The standard expected cumulative return is retained as the objective, so the resulting problem has a standard MDP structure: any expectation-based POMDP planner can be made risk-sensitive by changing only the cost computation. We inherit finite-time guarantees for policy evaluation and sparse sampling---with estimation error independent of the risk level---and, as our central theoretical result, prove a finite-time bound on the gap between the particle belief MDP surrogate and the original POMDP, which together yield an end-to-end guarantee from the true POMDP value to the algorithmic estimate. In the risk-neutral limit, the formulation recovers standard expectation-based planning.
\end{abstract}

\section{Introduction}

In partially observable settings, the agent does not know its true state---it maintains a belief, a probability distribution that may place significant mass on both safe and dangerous states. Standard POMDP planners \cite{silver2010monte, sunberg2018online, kearns2002sparse} optimize the \emph{expected} cumulative cost, which averages over this belief and can mask dangerous states in the tail. In safety-critical applications such as robotics, autonomous driving, and medical decision-making, this averaging can lead to unacceptable outcomes \cite{majumdar2020should}: the planner may select actions that are safe in expectation but catastrophic for a fraction of the belief.

Existing approaches to risk-averse POMDP planning address this by modifying the value function---applying CVaR to the cumulative return \cite{chow2015risk}, treating CVaR as a constraint \cite{chow2014algorithms}, or using the Iterated CVaR (ICVaR) dynamic risk measure within the Bellman backup \cite{hardy2004iterated, ruszczynski2010risk, du2022provably, chu2014markov, pariente2026onlineriskaverseplanningpomdps, ahmadi2020risk}. These methods target trajectory-level risk, but they require the entire planning algorithm to be redesigned for the new objective, producing specialized variants \cite{pariente2026onlineriskaverseplanningpomdps} that cannot benefit from ongoing advances in expectation-based planners.

We observe that POMDPs introduce a specific source of risk that is distinct from trajectory-level risk: \emph{uncertainty about the current state}. At each planning step, the immediate cost depends on the unobserved state, and its expectation under the current belief can be low even when a significant fraction of the belief mass lies on high-cost states. We propose to directly target this risk by replacing the expected immediate cost with a risk-sensitive aggregate that evaluates each action against the worst fraction of states the belief places mass on, rather than averaging over the belief.

Because the standard expected cumulative return is retained as the objective, the resulting problem has standard MDP structure, so any existing POMDP planner can be made risk-sensitive by changing only the cost computation. The separation of risk from planning also has theoretical consequences: the CVaR cost is deterministic given the particle belief, so it cancels exactly in the error decomposition between theoretical and estimated Q-functions. Only standard Monte Carlo sampling error remains, enabling Hoeffding concentration bounds that are entirely independent of the risk level $\alpha$.

We build on the particle belief MDP (PB-MDP) framework of \citet{lim2023optimality}, which provides approximation guarantees between the true POMDP and the particle-based surrogate, to analyze the CVaR cost formulation. Our contributions are:
\begin{enumerate}
	\item \textbf{CVaR cost formulation for belief uncertainty.} We propose applying CVaR to the immediate cost under the belief, directly targeting state uncertainty in POMDPs. Unlike chance constraints, which require a binary danger classification, the CVaR cost operates on the cost function itself, capturing both frequency and severity of tail risk.
	\item \textbf{Compatibility with existing planners.} The formulation preserves standard MDP structure, enabling any expectation-based POMDP planner to become risk-sensitive by changing only the cost computation (\Cref{sec:cvar_cost_planning_algorithms}).
	\item \textbf{Risk-level-independent estimation bounds.} We establish finite-time performance guarantees for policy evaluation (\Cref{thm:cvar_cost_policy_eval_guarantees}) and sparse sampling (\Cref{thm:cvar_cost_sparse_sampling_guarantees}) that are \emph{independent of} the risk level $\alpha$---the estimation accuracy does not degrade as the planner becomes more risk-averse.
	\item \textbf{End-to-end POMDP guarantees.} We derive particle belief approximation bounds (\Cref{thm:concrete_approximation}) where the $1/\alpha$ factor enters as a multiplier rather than compounding, and end-to-end guarantees (\Cref{cor:end_to_end}) bounding the gap from the true POMDP value to the algorithmic estimates.
\end{enumerate}

\section{Preliminaries}

\subsection{Conditional Value-at-Risk}

Let $X$ be a random variable defined on a probability space $(\Omega, \mathcal{F}, P)$ with $\mathbb{E}[|X|] < \infty$. Denote the cumulative distribution function by $F_X(x) = P(X \leq x)$. The value at risk at confidence level $\alpha \in (0,1]$ is the $1 - \alpha$ quantile:
\begin{equation}
	\text{VaR}_\alpha(X) \triangleq \inf\{x \in \mathbb{R} : F_X(x) > 1 - \alpha\}.
\end{equation}
The conditional value at risk (CVaR) at confidence level $\alpha$ is defined as \cite{rockafellar2000optimization}
\begin{equation}\label{eq:cvar_def}
	\text{CVaR}_\alpha(X) \triangleq \inf_{w \in \mathbb{R}} \left\{ w + \frac{1}{\alpha} \mathbb{E}[(X - w)^+] \right\},
\end{equation}
where $(x)^+ = \max(x, 0)$. For a continuous $F_X$, CVaR admits the tail expectation representation $\text{CVaR}_\alpha(X) = \mathbb{E}[X \mid X > \text{VaR}_\alpha(X)]$.

Given i.i.d.\ samples $X_1, \ldots, X_n$ drawn from $F_X$, the CVaR estimator is \cite{brown2007large}
\begin{equation}\label{eq:cvar_estimator}
	\hat{C}_\alpha(X) \triangleq \inf_{w \in \mathbb{R}} \left\{ w + \frac{1}{n\alpha} \sum_{i=1}^{n} (X_i - w)^+ \right\}.
\end{equation}
For weighted samples $\{(X_i, \tilde{w}_i)\}_{i=1}^{n}$ with $\tilde{w}_i \geq 0$ and $\sum_i \tilde{w}_i = 1$, the weighted CVaR estimator is
\begin{equation}\label{eq:weighted_cvar_estimator}
	\hat{C}_\alpha(\{X_i\}, \{\tilde{w}_i\}) \triangleq \inf_{w \in \mathbb{R}} \left\{ w + \frac{1}{\alpha} \sum_{i=1}^{n} \tilde{w}_i\,(X_i - w)^+ \right\}.
\end{equation}

\subsection{Partially Observable Markov Decision Process}\label{sec:pomdp}

A finite-horizon Partially Observable Markov Decision Process (POMDP) is defined as the tuple $M = (X, A, Z, \mathcal{T}, O, c, \gamma, T, b_0)$, where $X$ is the state space, $A$ is the action space, $Z$ is the observation space, $\mathcal{T}(x_{t+1} \mid x_t, a_t) \triangleq P(x_{t+1} \mid x_t, a_t)$ is the state transition model, $O(z_t \mid x_t) \triangleq P(z_t \mid x_t)$ is the observation model, $c : \mathcal{B} \times A \to \mathbb{R}$ is the immediate cost defined on the belief space $\mathcal{B} \triangleq \Delta(X)$, $\gamma \in (0,1]$ is the discount factor, $T \in \mathbb{N}$ is the planning horizon, and $b_0$ is the initial belief.

Due to partial observability, the agent maintains a belief $b_t \in \mathcal{B}$, a probability distribution over the state space representing the posterior given the history of actions and observations $H_t = \{z_{1:t}, a_{0:t-1}, b_0\}$. The belief is updated recursively via Bayes' rule:
\begin{equation}\label{eq:belief_update}
	b_{t+1}(x_{t+1}) \propto O(z_{t+1} \mid x_{t+1}) \int_{x_t \in X} \mathcal{T}(x_{t+1} \mid x_t, a_t)\, b_t(x_t)\, dx_t.
\end{equation}
A policy $\pi_t : \mathcal{B} \to A$ maps beliefs to actions. Throughout this work, the belief-dependent cost is induced by a state-action cost $c : X \times A \to \mathbb{R}$ bounded as $c_{\min} \leq c(x, a) \leq c_{\max}$, with the conventional definition
\begin{equation}\label{eq:expected_belief_cost}
	c(b_t, a_t) \triangleq \mathbb{E}_{x \sim b_t}[c(x, a_t)].
\end{equation}
In \Cref{sec:problem_formulation} we replace the expectation in \eqref{eq:expected_belief_cost} with $\text{CVaR}_\alpha$.

\subsection{Particle Belief MDP}\label{sec:pb_mdp}

Since the belief space $\mathcal{B}$ is infinite-dimensional, we approximate it using a particle belief MDP (PB-MDP) \cite{lim2023optimality}. Formally, denote by $M_P \triangleq (\Sigma, A, \tau, \rho)$ the PB-MDP defined with respect to the POMDP $M$ and $N_p \in \mathbb{N}$, where
\begin{itemize}
	\item The state space over particle beliefs is
	\begin{equation*}
		\Sigma \triangleq \left\{ \bar{b} = \{(x_i, w_i)\}_{i=1}^{N_p} \;:\;
		x_i \in X,\; w_i \geq 0 \;\forall i,\;
		\textstyle\sum_{i=1}^{N_p} w_i > 0 \right\}.
	\end{equation*}
	\item $A$ is the action space as defined in the POMDP $M$.
	\item $\tau(\bar{b}_{t+1}|\bar{b}_{t}, a)$ is the belief transition probability, for $a\in A,\bar{b}\in \Sigma$. Each particle $x_i^{(\ell)}$ is sampled i.i.d.\ from a \emph{prior predictive distribution} $q_\ell$ (the state marginal at depth $\ell$ obtained by propagating through $\mathcal{T}$ without conditioning on observations) and assigned importance weight $w_i^{(\ell)} = \prod_{n=1}^{\ell} O(z_n \mid x_i^{(n)})$.
	\item $\rho(\bar{b},a) \triangleq \sum_{i=1}^{N_p} \tilde{w}_i c(x_i, a)$ is the expected cost, where $\tilde{w}_i \triangleq w_i / \sum_{j=1}^{N_p} w_j$ are the normalized weights.
\end{itemize}

\section{Problem Formulation}\label{sec:problem_formulation}

We replace the expected cost in the POMDP (\Cref{sec:pomdp}) and PB-MDP (\Cref{sec:pb_mdp}) with a CVaR cost that targets the worst-$\alpha$ fraction of states under the current belief. Let $\alpha \in (0,1]$. Define the CVaR cost under the true belief and the particle belief, respectively, as
\begin{equation}\label{eq:cvar_cost_def}
	c_\alpha(b_t, a) \triangleq \underset{x \sim b_t}{\text{CVaR}_\alpha}[c(x, a)], \qquad
	\rho_\alpha(\bar{b},a) \triangleq \hat{C}_\alpha(\{c(x_i,a)\}_{i=1}^{N_p}, \{\tilde{w}_i\}_{i=1}^{N_p}),
\end{equation}
where $\tilde{w}_i \triangleq w_i / \sum_{j=1}^{N_p} w_j$.

The CVaR cost targets a risk that is specific to partially observable settings: uncertainty about the current state. In a fully observable MDP, the immediate cost $c(x, a)$ is known exactly. In a POMDP, the cost depends on the unknown state $x \sim b_t$, and the expected cost $\mathbb{E}_{x \sim b_t}[c(x, a)]$ can mask dangerous states in the tail of the belief. By replacing the expectation with $\text{CVaR}_\alpha$, the planner pessimistically evaluates each action against the worst-$\alpha$ fraction of the belief, avoiding actions that rely on favorable state realizations.

The action-value functions under the POMDP and PB-MDP are
\begin{align}
	Q^{\pi}_{M, t}(b_t, a, \alpha) &\triangleq c_\alpha(b_t, a) + \gamma\,\mathbb{E}[V_{M, t+1}^\pi(b_{t+1}, \alpha) \mid b_t, a], \label{eq:cvar_cost_q_function_def} \\
	Q^{\pi}_{M_P, t}(\bar{b}_t, a, \alpha) &\triangleq \rho_\alpha(\bar{b}_t, a) + \gamma\,\mathbb{E}_{M_P}[V_{M_P, t+1}^\pi(\bar{b}_{t+1}, \alpha) \mid \bar{b}_t, a], \label{eq:q_hat_estimation}
\end{align}
with $V_{M,t}^\pi(b_t, \alpha) = Q^{\pi}_{M,t}(b_t, \pi(b_t), \alpha)$, $V_{M_P,t}^\pi(\bar{b}_t, \alpha) = Q^{\pi}_{M_P,t}(\bar{b}_t, \pi(\bar{b}_t), \alpha)$, and terminal conditions $V_{M,T+1}^\pi = V_{M_P,T+1}^\pi = 0$. \Cref{fig:cvar_cost_structure} illustrates the CVaR cost return structure across the planning horizon. The goals of this paper are twofold:

\begin{figure}[t]
	\centering
	\begin{tikzpicture}[>=stealth, every node/.style={font=\small}]
	% --- Timeline ---
	\draw[->, thick] (-0.3, 0) -- (13.5, 0);
	\foreach \t/\lab in {0/t, 3/t{+}1, 6/t{+}2, 10/T} {
		\draw[thick] (\t, -0.12) -- (\t, 0.12);
		\node[below=0.15cm, font=\small] at (\t, -0.12) {$\lab$};
	}
	\node[font=\small] at (8.0, 0.0) {$\cdots$};
	\node[below=0.15cm, font=\small] at (13.2, -0.12) {time};

	% --- Belief nodes above timeline ---
	\foreach \t/\blab in {0/\bar{b}_t, 3/\bar{b}_{t+1}, 6/\bar{b}_{t+2}, 10/\bar{b}_T} {
		\node[circle, draw, fill=blue!8, minimum size=0.75cm, font=\footnotesize] (b\t) at (\t, 1.2) {$\blab$};
		\draw[->, gray] (b\t) -- (\t, 0.15);
	}
	% Transition arrows between beliefs
	\draw[->, thick, blue!50!black] (b0) -- node[above, font=\footnotesize] {$\tau$} (b3);
	\draw[->, thick, blue!50!black] (b3) -- node[above, font=\footnotesize] {$\tau$} (b6);
	\draw[->, thick, blue!50!black, dashed] (b6) -- (8.0, 1.2);
	\draw[->, thick, blue!50!black, dashed] (8.0, 1.2) -- (b10);

	% --- CVaR cost boxes below timeline ---
	% Step t: 6 bars, 1 red tail
	\node[draw, rounded corners, fill=red!12, minimum width=2.2cm, minimum height=1.8cm, anchor=north] (cost0) at (0, -0.8) {};
	\node[font=\footnotesize, anchor=north] at (0, -0.85) {$\rho_\alpha(\bar{b}_t, a_t)$};
	\draw[fill=gray!30] (-0.75, -2.5) rectangle (-0.58, -2.1);
	\draw[fill=gray!30] (-0.53, -2.5) rectangle (-0.36, -1.8);
	\draw[fill=gray!30] (-0.31, -2.5) rectangle (-0.14, -1.55);
	\draw[fill=gray!30] (-0.09, -2.5) rectangle (0.08, -1.9);
	\draw[fill=gray!30] (0.13, -2.5) rectangle (0.30, -2.2);
	\draw[fill=red!50] (0.35, -2.5) rectangle (0.52, -1.7);
	\draw[dashed, red!70!black, thick] (0.30, -2.55) -- (0.30, -1.35);
	\node[font=\tiny, red!70!black, anchor=north west] at (0.32, -1.30) {worst $\alpha$};

	% Step t+1: ALL SAFE — tail is also in safe range
	\node[draw, rounded corners, fill=green!8, minimum width=2.2cm, minimum height=1.8cm, anchor=north] (cost1) at (3, -0.8) {};
	\node[font=\footnotesize, anchor=north] at (3, -0.85) {$\rho_\alpha(\bar{b}_{t+1}, a_{t+1})$};
	\draw[fill=gray!30] (2.35, -2.5) rectangle (2.52, -2.25);
	\draw[fill=gray!30] (2.57, -2.5) rectangle (2.74, -2.0);
	\draw[fill=gray!30] (2.79, -2.5) rectangle (2.96, -1.95);
	\draw[fill=gray!30] (3.01, -2.5) rectangle (3.18, -2.15);
	\draw[fill=gray!45] (3.23, -2.5) rectangle (3.40, -2.1);
	\draw[dashed, gray!50, thick] (3.18, -2.55) -- (3.18, -1.35);
	\node[font=\tiny, gray!50!black, anchor=north west] at (3.20, -1.30) {worst $\alpha$};

	% Step t+2
	\node[draw, rounded corners, fill=red!12, minimum width=2.2cm, minimum height=1.8cm, anchor=north] (cost2) at (6, -0.8) {};
	\node[font=\footnotesize, anchor=north] at (6, -0.85) {$\rho_\alpha(\bar{b}_{t+2}, a_{t+2})$};
	\draw[fill=gray!30] (5.25, -2.5) rectangle (5.38, -2.15);
	\draw[fill=gray!30] (5.43, -2.5) rectangle (5.56, -1.65);
	\draw[fill=gray!30] (5.61, -2.5) rectangle (5.74, -1.5);
	\draw[fill=gray!30] (5.79, -2.5) rectangle (5.92, -1.75);
	\draw[fill=gray!30] (5.97, -2.5) rectangle (6.10, -2.0);
	\draw[fill=gray!30] (6.15, -2.5) rectangle (6.28, -2.3);
	\draw[fill=red!50] (6.33, -2.5) rectangle (6.46, -1.6);
	\draw[dashed, red!70!black, thick] (6.28, -2.55) -- (6.28, -1.35);
	\node[font=\tiny, red!70!black, anchor=north west] at (6.30, -1.30) {worst $\alpha$};

	% Dots
	\node[font=\small] at (8.0, -1.8) {$\cdots$};

	% Step T: ALL SAFE — tail is also in safe range
	\node[draw, rounded corners, fill=green!8, minimum width=2.2cm, minimum height=1.8cm, anchor=north] (costT) at (10, -0.8) {};
	\node[font=\footnotesize, anchor=north] at (10, -0.85) {$\rho_\alpha(\bar{b}_T, a_T)$};
	\draw[fill=gray!30] (9.35, -2.5) rectangle (9.52, -2.0);
	\draw[fill=gray!30] (9.57, -2.5) rectangle (9.74, -1.85);
	\draw[fill=gray!30] (9.79, -2.5) rectangle (9.96, -1.7);
	\draw[fill=gray!30] (10.01, -2.5) rectangle (10.18, -1.95);
	\draw[fill=gray!45] (10.23, -2.5) rectangle (10.40, -2.05);
	\draw[dashed, gray!50, thick] (10.18, -2.55) -- (10.18, -1.35);
	\node[font=\tiny, gray!50!black, anchor=north west] at (10.20, -1.30) {worst $\alpha$};

	% --- Discount factors ---
	\node[font=\footnotesize, blue!60!black] at (0, -3.15) {$\gamma^0$};
	\node[font=\footnotesize, blue!60!black] at (3, -3.15) {$\gamma^1$};
	\node[font=\footnotesize, blue!60!black] at (6, -3.15) {$\gamma^2$};
	\node[font=\footnotesize, blue!60!black] at (10, -3.15) {$\gamma^{T-t}$};

	% --- Navigation scenes (below discount factors) ---
	% Robot macro: body + wheels + sensor dome
	% Usage: \drawrobot{x}{y}{scale}
	% Drawn at each scene center

	% Scene t: robot with spread belief, one dangerous particle
	\begin{scope}[shift={(0, -4.1)}, scale=0.7]
		\fill[gray!6] (-1.3, -1.0) rectangle (1.3, 1.0);
		\draw[gray!25, rounded corners=2pt] (-1.3, -1.0) rectangle (1.3, 1.0);
		% Robot: body + wheels + sensor
		\fill[blue!50!black, rounded corners=1pt] (-0.55, -0.15) rectangle (-0.15, 0.15); % body
		\fill[gray!60!black] (-0.55, -0.22) rectangle (-0.45, -0.15); % left wheel
		\fill[gray!60!black] (-0.25, -0.22) rectangle (-0.15, -0.15); % right wheel
		\fill[blue!35] (-0.35, 0.15) circle (0.07); % sensor dome
		\draw[blue!60!black, very thin] (-0.35, 0.22) -- (-0.35, 0.32); % antenna
		\fill[blue!60!black] (-0.37, 0.32) rectangle (-0.33, 0.34); % antenna tip
		% Safe particles
		\foreach \x/\y in {-0.7/0.5, -0.1/0.6, -0.6/-0.5, 0.15/0.35, -0.2/-0.4} {
			\fill[blue!40, opacity=0.7] (\x, \y) circle (0.065);
		}
		% Dangerous particle
		\fill[red!75!black] (0.8, 0.3) circle (0.085);
	\end{scope}

	% Scene t+1: ALL SAFE — robot moved right, all blue particles
	\begin{scope}[shift={(3, -4.1)}, scale=0.7]
		\fill[gray!6] (-1.3, -1.0) rectangle (1.3, 1.0);
		\draw[gray!25, rounded corners=2pt] (-1.3, -1.0) rectangle (1.3, 1.0);
		\fill[blue!50!black, rounded corners=1pt] (-0.15, -0.15) rectangle (0.25, 0.15);
		\fill[gray!60!black] (-0.15, -0.22) rectangle (-0.05, -0.15);
		\fill[gray!60!black] (0.15, -0.22) rectangle (0.25, -0.15);
		\fill[blue!35] (0.05, 0.15) circle (0.07);
		\draw[blue!60!black, very thin] (0.05, 0.22) -- (0.05, 0.32);
		\fill[blue!60!black] (0.03, 0.32) rectangle (0.07, 0.34);
		\foreach \x/\y in {-0.25/0.4, 0.2/0.5, -0.15/-0.35, 0.35/0.0, 0.55/-0.3} {
			\fill[blue!40, opacity=0.7] (\x, \y) circle (0.065);
		}
	\end{scope}

	% Scene t+2: robot further, belief spread wider
	\begin{scope}[shift={(6, -4.1)}, scale=0.7]
		\fill[gray!6] (-1.3, -1.0) rectangle (1.3, 1.0);
		\draw[gray!25, rounded corners=2pt] (-1.3, -1.0) rectangle (1.3, 1.0);
		\fill[blue!50!black, rounded corners=1pt] (-0.35, 0.15) rectangle (0.05, 0.45);
		\fill[gray!60!black] (-0.35, 0.08) rectangle (-0.25, 0.15);
		\fill[gray!60!black] (-0.05, 0.08) rectangle (0.05, 0.15);
		\fill[blue!35] (-0.15, 0.45) circle (0.07);
		\draw[blue!60!black, very thin] (-0.15, 0.52) -- (-0.15, 0.62);
		\fill[blue!60!black] (-0.17, 0.62) rectangle (-0.13, 0.64);
		\foreach \x/\y in {-0.55/0.65, 0.15/0.55, -0.6/0.2, 0.0/-0.1, 0.3/0.35} {
			\fill[blue!40, opacity=0.7] (\x, \y) circle (0.065);
		}
		\fill[red!75!black] (0.85, -0.55) circle (0.085);
	\end{scope}

	% Dots
	\node[font=\small] at (8.0, -4.1) {$\cdots$};

	% Scene T: ALL SAFE — robot near goal, all blue particles
	\begin{scope}[shift={(10, -4.1)}, scale=0.7]
		\fill[gray!6] (-1.3, -1.0) rectangle (1.3, 1.0);
		\draw[gray!25, rounded corners=2pt] (-1.3, -1.0) rectangle (1.3, 1.0);
		% Goal
		\node[star, star points=5, star point ratio=2.25, fill=green!50!black, minimum size=0.22cm, draw=green!70!black] at (0.85, 0.65) {};
		% Robot near goal
		\fill[blue!50!black, rounded corners=1pt] (0.35, 0.15) rectangle (0.75, 0.45);
		\fill[gray!60!black] (0.35, 0.08) rectangle (0.45, 0.15);
		\fill[gray!60!black] (0.65, 0.08) rectangle (0.75, 0.15);
		\fill[blue!35] (0.55, 0.45) circle (0.07);
		\draw[blue!60!black, very thin] (0.55, 0.52) -- (0.55, 0.62);
		\fill[blue!60!black] (0.53, 0.62) rectangle (0.57, 0.64);
		\foreach \x/\y in {0.35/0.6, 0.7/0.65, 0.45/0.0, 0.8/0.35, 0.2/0.4} {
			\fill[blue!40, opacity=0.7] (\x, \y) circle (0.065);
		}
	\end{scope}

	% Connecting arrows between scenes
	\draw[->, gray!40, thick] (0.95, -4.1) -- (1.45, -4.1);
	\draw[->, gray!40, thick] (3.95, -4.1) -- (4.45, -4.1);
	\draw[->, gray!40, thick, dashed] (6.95, -4.1) -- (7.5, -4.1);
	\draw[->, gray!40, thick, dashed] (8.5, -4.1) -- (9.05, -4.1);

	% --- Safe/Dangerous labels under histograms ---
	% Step t
	\draw[gray!40, thick] (-0.75, -2.65) -- (0.30, -2.65);
	\node[font=\tiny, gray!50!black] at (-0.22, -2.82) {safe};
	\draw[red!50!black, thick] (0.30, -2.65) -- (0.52, -2.65);
	\node[font=\tiny, red!60!black] at (0.41, -2.82) {danger};
	% Step t+1: all safe
	\draw[gray!40, thick] (2.35, -2.65) -- (3.40, -2.65);
	\node[font=\tiny, gray!50!black] at (2.88, -2.82) {all safe};
	% Step t+2
	\draw[gray!40, thick] (5.25, -2.65) -- (6.28, -2.65);
	\node[font=\tiny, gray!50!black] at (5.76, -2.82) {safe};
	\draw[red!50!black, thick] (6.28, -2.65) -- (6.46, -2.65);
	\node[font=\tiny, red!60!black] at (6.37, -2.82) {danger};
	% Step T: all safe
	\draw[gray!40, thick] (9.35, -2.65) -- (10.40, -2.65);
	\node[font=\tiny, gray!50!black] at (9.88, -2.82) {all safe};

	% --- Return formula ---
	\node[draw, rounded corners, fill=yellow!15, font=\footnotesize, text width=11.5cm, align=center, anchor=north] at (5.0, -5.3) {
		$G_t^\pi(\alpha) = \gamma^0 \, \rho_\alpha(\bar{b}_t, a_t) + \gamma^1 \, \rho_\alpha(\bar{b}_{t+1}, a_{t+1}) + \gamma^2 \, \rho_\alpha(\bar{b}_{t+2}, a_{t+2}) + \cdots + \gamma^{T-t} \, \rho_\alpha(\bar{b}_T, a_T)$
	};
	\end{tikzpicture}
	\caption{Structure of the CVaR cost return during navigation. \textbf{Top}: beliefs $\bar{b}_k$ evolve along the planning horizon. \textbf{Middle}: particle cost histograms---the worst-$\alpha$ fraction (right of the dashed threshold) determines the CVaR cost. At steps where all particles are safe (green boxes, e.g., $t{+}1$ and $T$), the tail is also safe; at steps with dangerous states (red boxes, e.g., $t$ and $t{+}2$), the tail captures high-cost particles. \textbf{Bottom}: corresponding navigation scenes---the robot maintains a particle belief (blue dots), with red dots marking particles in dangerous states. Safe steps show all-blue beliefs; dangerous steps show the outlier particles that drive the CVaR cost. The return $G_t^\pi(\alpha)$ sums these per-step CVaR costs $\rho_\alpha$ with discount $\gamma^{k-t}$.}
	\label{fig:cvar_cost_structure}
\end{figure}
\begin{enumerate}
	\item \textbf{Planning.} Develop online planning algorithms---policy evaluation and sparse sampling---within the CVaR cost PB-MDP, and establish finite-time performance guarantees bounding $|Q^{\pi}_{M_P, t} - \hat{Q}^{\pi}_{M_P, t}|$ and $|V^{*}_{M_P, t} - \hat{V}^{*}_{M_P, t}|$.
	\item \textbf{Approximation guarantees.} Derive concentration inequalities bounding $|V_{M,t}^\pi - V_{M_P,t}^\pi|$, analogous to \citet{lim2023optimality} for the expectation-based setting, quantifying how particle approximation error propagates through the CVaR cost formulation.
\end{enumerate}

\subsection{Comparison with Prior CVaR Formulations}\label{sec:cvar_comparison}

Risk in POMDPs arises at two levels: (i) \emph{state uncertainty}---the agent does not know its current state, so the cost $c(x, a)$ is uncertain even for a fixed action; and (ii) \emph{trajectory uncertainty}---the sequence of future beliefs and costs is stochastic. Prior CVaR formulations target trajectory uncertainty; the CVaR cost formulation targets state uncertainty. We compare three approaches:

\begin{enumerate}
	\item \textbf{Static CVaR} targets the risk that the \emph{total return} is bad:
	\begin{equation}\label{eq:static_cvar}
		V_t^{\text{static}}(b_t, \alpha) = \underset{\text{over trajectories}}{\text{CVaR}_\alpha}\!\left[\sum_{k=t}^{T} \gamma^{k-t} c(b_k, a_k)\right].
	\end{equation}
	This captures trajectory-level tail risk, but does not specifically address the per-step danger from hidden states in the belief. The resulting Bellman equation requires tracking the entire return distribution, making dynamic programming intractable \cite{pflug2000some, chow2015risk}.

	\item \textbf{Dynamic CVaR (ICVaR)} targets the risk of \emph{transitioning to bad future beliefs}:
	\begin{equation}\label{eq:dynamic_cvar}
		V_t^{\text{dynamic}}(b_t, \alpha) = c(b_t, a_t) + \gamma\,\underset{\text{over } b_{t+1}}{\text{CVaR}_\alpha}\!\left[V_{t+1}^{\text{dynamic}}(b_{t+1}, \alpha)\right].
	\end{equation}
	The CVaR operator acts on the distribution of next beliefs $b_{t+1}$, capturing pessimism about \emph{which belief the agent will hold in the future}. It does not address the uncertainty about which state the agent is in \emph{now}---the immediate cost $c(b_t, a_t) = \mathbb{E}_{x \sim b_t}[c(x, a_t)]$ still averages over the current belief. The CVaR inside the recursion also introduces a $1/\alpha$ Lipschitz factor that compounds at each level, yielding bounds that scale as $O((1/\alpha)^{T-t})$ \cite{pariente2026onlineriskaverseplanningpomdps}.

	\item \textbf{CVaR cost (this work)} targets the risk from \emph{hidden dangerous states in the current belief}:
	\begin{equation}\label{eq:cvar_cost_vf}
		V_t^{\text{CVaR cost}}(b_t, \alpha) = c_\alpha(b_t, a_t) + \gamma\,\mathbb{E}\!\left[V_{t+1}^{\text{CVaR cost}}(b_{t+1}, \alpha)\right].
	\end{equation}
	The CVaR operator acts on the state distribution \emph{within} the belief, replacing $\mathbb{E}_{x \sim b_t}[c(x, a)]$ with $\text{CVaR}_\alpha[c(x, a)]$ for $x \sim b_t$. This makes the planner pessimistic about the worst-$\alpha$ fraction of states at each step, directly addressing state uncertainty. The value function retains standard MDP structure, and the $1/\alpha$ factor does not compound through the horizon.
\end{enumerate}

In summary, the three formulations differ in \emph{which uncertainty the CVaR operator targets}: the full return (static), the future belief transition (dynamic), or the current state within the belief (CVaR cost). Only the CVaR cost formulation directly addresses the per-step danger from partial observability, while also preserving compatibility with existing planners.

\section{CVaR Cost Planning Algorithms}\label{sec:cvar_cost_planning_algorithms}

Because the CVaR cost formulation yields a standard MDP with bounded costs $\rho_\alpha \in [c_{\min}, c_{\max}]$, existing planning algorithms and their guarantees apply directly. The only modification is replacing the expected cost with the CVaR cost. We instantiate policy evaluation on the CVaR cost PB-MDP via the Monte Carlo backup of \citet{kearns2002sparse}: for a given policy $\pi$, the Q-value and value estimators are:
\begin{align}
	\hat{Q}_{M_P, t}^\pi(\bar{b}_t,a,\alpha) &\triangleq \rho_\alpha(\bar{b}_t, a) + \frac{\gamma}{N_b}\sum_{i=1}^{N_b}\hat{V}_{M_P, t+1}^\pi(\bar{b}_{t+1}^i, \alpha), \label{eq:Qest}\\
	\hat{V}_{M_P, t}^\pi(\bar{b}_t,\alpha) &\triangleq \hat{Q}_{M_P, t}^{\pi}(\bar{b}_t,\pi(\bar{b}_t),\alpha), \label{eq:Vest}
\end{align}
where $\bar{b}_{t+1}^i$ are $N_b$ sampled successor beliefs and $\rho_\alpha(\bar{b}_t, a) = \hat{C}_\alpha(\{c(x_t^j, a)\}_{j=1}^{N_p}, \{\tilde{w}_t^j\}_{j=1}^{N_p})$ is the CVaR cost from the particle belief. For optimal planning, sparse sampling takes $\hat{V}^*_{M_P,t} = \min_{a \in A} \hat{Q}^*_{M_P,t}(\bar{b}_t, a, \alpha)$.

\subsection{Inherited Guarantees}

Define $\Delta\rho \triangleq c_{\max} - c_{\min}$, an upper bound on the range of the CVaR cost (since $\rho_\alpha \in [c_{\min}, c_{\max}]$). The CVaR cost $\rho_\alpha(\bar{b}, a)$ is deterministic given the particle belief, so it cancels exactly in the error decomposition:
\begin{equation}\label{eq:error_cancellation}
	Q_{M_P,t}^\pi - \hat{Q}_{M_P,t}^\pi = (\rho_\alpha + \gamma\,\mathbb{E}[V_{t+1}^\pi]) - \Big(\rho_\alpha + \frac{\gamma}{N_b}\sum_i \hat{V}_{t+1}^\pi\Big) = \gamma\Big(\mathbb{E}[V_{t+1}^\pi] - \frac{1}{N_b}\sum_i \hat{V}_{t+1}^\pi\Big).
\end{equation}
Only standard Monte Carlo error from sampling successor beliefs remains, so any finite-time guarantee for MDP planning applies unchanged. In particular, the sparse sampling analysis of \citet{kearns2002sparse} transfers directly. We state the inherited bounds with explicit constants (proofs in Appendices~\ref{proof:cvar_cost_policy_eval} and~\ref{proof:cvar_cost_sparse_sampling}). Let $S_t \triangleq \frac{(T-t)(T-t+1)}{2}$.

\begin{theorem}[Policy Evaluation]\label{thm:cvar_cost_policy_eval_guarantees}
	For any $\delta \in (0,1)$ and $N_b > 1$:
	\begin{equation}
		P \Bigg( \left| Q_{M_P, t}^\pi - \hat{Q}^\pi_{M_P, t} \right|
		\leq \Delta \rho \cdot S_t \sqrt{\frac{\ln\big(\frac{2(N_b^{T-t}-1)}{\delta(N_b-1)}\big)}{2N_b}} \Bigg) \geq 1-\delta.
	\end{equation}
\end{theorem}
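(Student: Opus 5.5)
Starting from the error cancellation \eqref{eq:error_cancellation}, I will argue by backward induction on the depth $d = T-t$ and track explicit constants. The count of $\frac{N_b^{T-t}-1}{N_b-1} = \sum_{k=0}^{T-t-1} N_b^k$ inside the logarithm equals the number of internal nodes of the sampled belief tree of depth $T-t$ at which a Monte Carlo average is formed. So the plan is a union bound over these nodes: at each node apply Hoeffding with failure probability $\delta' = \delta (N_b-1)/(N_b^{T-t}-1)$. Under a single good event of probability at least $1-\delta$, this gives a uniform per-node deviation
\[
\epsilon \triangleq \Delta\rho \sqrt{\frac{\ln(2/\delta')}{2N_b}}
\]
scaled by the range of the relevant value function.

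The per-node step is as follows. At a node at time $k$, split
\[
\mathbb{E}[V_{k+1}^\pi] - \frac{1}{N_b}\sum_i \hat V_{k+1}^\pi(\bar b^i)
\]
into two pieces:
\[
\Big(\mathbb{E}[V_{k+1}^\pi] - \frac{1}{N_b}\sum_i V_{k+1}^\pi(\bar b^i)\Big) + \frac{1}{N_b}\sum_i \big(V_{k+1}^\pi(\bar b^i) - \hat V_{k+1}^\pi(\bar b^i)\big).
\]
The first term is an average of $N_b$ i.i.d.\ samples of the true value $V^\pi_{M_P,k+1}$, given the parent belief. Since $\rho_\alpha \in [c_{\min}, c_{\max}]$, this variable lies in an interval of length at most $(T-k)\Delta\rho$, using $\gamma \le 1$ and $T-k$ remaining stages. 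Hoeffding therefore bounds the first term by $(T-k)\epsilon$. The second term is controlled by the induction hypothesis at each child.

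Writing $e_k$ for the error bound at depth $k$, I get the recursion $e_k \le \gamma\big((T-k)\epsilon + e_{k+1}\big)$ with $e_T = 0$. At the last stage no sampling is needed, since $Q = \hat Q = \rho_\alpha$. Dropping $\gamma \le 1$ and unrolling gives
\[
e_t \le \epsilon \sum_{j=1}^{T-t} j = S_t\,\epsilon,
\]
which is exactly the claimed bound. The terminal convention must match the indexing with $V_{T+1}=0$. If the sum runs to $T-t+1$ instead, I will recheck whether the stage-$T$ node needs sampling. The $S_t$ form suggests the sampled-level ranges are $1,\dots,T-t$ times $\Delta\rho$.

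The main obstacle is making the Hoeffding step rigorous while the children's estimates are themselves random. The trick is to apply Hoeffding only to the true values $V^\pi(\bar b^i)$, which are i.i.d.\ conditioned on the parent. The estimation errors at the children then enter only through the deterministic induction bound on the good event.

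There is a second subtlety. The range $(T-k)\Delta\rho$ is really the span $c_{\max}-c_{\min}$ times the number of stages, not a bound like $|c_{\max}|$. I use that $V$ ranges over an interval of length $\sum \gamma^j\Delta\rho \le (T-k)\Delta\rho$. That is all Hoeffding requires.

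Finally, the union bound is taken over all $\sum_{k<T-t} N_b^k$ internal nodes of the tree, each with probability $\delta'$. This yields the stated logarithm $\ln\big(2(N_b^{T-t}-1)/(\delta(N_b-1))\big)$ and the probability $1-\delta$.
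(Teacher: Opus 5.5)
Your proposal is correct and matches the paper's proof step for step. It uses the same split into a Hoeffding-controlled Monte Carlo term on the true values $V^\pi_{M_P,k+1}$, with range $(T-k)\Delta\rho$, plus a propagated term handled by the induction hypothesis on a good event. It also uses the same recursion $\theta_t=\epsilon_t+\theta_{t+1}$ with $\theta_T=0$ and the same choice $\delta_0=\delta(N_b-1)/(N_b^{T-t}-1)$. Your global union bound over the $\sum_{k=0}^{T-t-1}N_b^k$ internal nodes is simply the unrolled form of the paper's failure-probability recursion $\eta_t=\delta_0+N_b\eta_{t+1}$.
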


\begin{theorem}[Sparse Sampling]\label{thm:cvar_cost_sparse_sampling_guarantees}
	For any $\delta \in (0,1)$ and $N_b > 1$:
	\begin{equation}
		P \Bigg( \left| V_{M_P, t}^* - \hat{V}^*_{M_P, t} \right|
		\leq \Delta \rho \cdot S_t \sqrt{\frac{\ln\big(\frac{2|A|((|A|N_b)^{T-t}-1)}{\delta(|A|N_b-1)}\big)}{2N_b}} \Bigg) \geq 1-\delta.
	\end{equation}
\end{theorem}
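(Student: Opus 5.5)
The plan is a backward induction over depth on the sparse-sampling tree, with a Hoeffding bound at every internal node and a union bound over all nodes. The argument follows \Cref{thm:cvar_cost_policy_eval_guarantees}; the only change is the extra branching over $|A|$ actions and the $\min$ operator in the backup.

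\textbf{Setting up the tree and the error recursion.} Rooted at $\bar b_t$, sparse sampling expands every action $a\in A$ at each node and draws $N_b$ successor beliefs per action. The number of internal (belief, action) pairs whose estimates are formed from sampled successors is
\[
\sum_{k=1}^{T-t}|A|(|A|N_b)^{k-1} = \frac{|A|\big((|A|N_b)^{T-t}-1\big)}{|A|N_b-1}.
\]
At a depth-$k$ node, the true and estimated optimal values are $V^*_{M_P,k}=\min_a Q^*_{M_P,k}$ and $\hat V^*_{M_P,k}=\min_a\hat Q^*_{M_P,k}$. Since $\min$ is $1$-Lipschitz in sup norm, $|V^*_{M_P,k}-\hat V^*_{M_P,k}|\le\max_a|Q^*_{M_P,k}-\hat Q^*_{M_P,k}|$. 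By the cancellation in \eqref{eq:error_cancellation}, which holds verbatim for $Q^*$ because $\rho_\alpha$ is deterministic given $\bar b$,
\[
|Q^*_k-\hat Q^*_k|\le\gamma\Big|\mathbb E[V^*_{k+1}]-\tfrac1{N_b}\textstyle\sum_i V^*_{k+1}(\bar b^i)\Big|+\gamma\,\tfrac1{N_b}\textstyle\sum_i|V^*_{k+1}(\bar b^i)-\hat V^*_{k+1}(\bar b^i)|.
\]
The first term is a pure Monte Carlo error on the true function $V^*_{k+1}$. The second term propagates the child errors.

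\textbf{Concentration and union bound.} Conditional on the node, the samples $V^*_{k+1}(\bar b^i)$ are i.i.d. Their range is at most $(T-k)\Delta\rho$, because $V^*_{k+1}$ is a sum of at most $T-k$ costs, each in $[c_{\min},c_{\max}]$, with $\gamma\le1$. Hoeffding then gives deviation at most $(T-k)\Delta\rho\sqrt{\ln(2/\delta')/(2N_b)}$ with probability $1-\delta'$. Choose $\delta'=\delta$ divided by the node count above; this produces exactly the logarithm in the statement. On the resulting event, unrolling the recursion with $\gamma\le1$ and summing the per-level ranges $\sum_{j=1}^{T-t} j=S_t$ yields the bound $\Delta\rho\,S_t\sqrt{\cdot}$. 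Averaging child errors does not increase them, so the worst case along a path suffices.

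\textbf{Main obstacle.} The delicate point is the independence needed for Hoeffding. The estimates $\hat V^*_{k+1}$ at children depend on deeper samples, so we must apply Hoeffding only to the true values $V^*_{k+1}(\bar b^i)$, conditioning on the parent belief. This is why the decomposition above separates the two terms. The other bookkeeping point is matching the exact node count and range constants. I expect the per-depth range to be $(T-k)\Delta\rho$ or slightly cruder, and in either case $S_t$ dominates the total. Because $\rho_\alpha\in[c_{\min},c_{\max}]$ regardless of $\alpha$, no $1/\alpha$ factor appears anywhere.
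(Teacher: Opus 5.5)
Your proposal is correct and follows essentially the paper's argument: Hoeffding applied only to the true values $V^*_{k+1}(\bar b^i)$ conditional on the parent, the cancellation of $\rho_\alpha$, the step $|V^*-\hat V^*|\le\max_a|Q^*-\hat Q^*|$, and the range-based unrolling $\sum_{j=1}^{T-t} j = S_t$. The only difference is bookkeeping: the paper carries a recursive failure probability $\eta_t = |A|\delta_0 + |A|N_b\,\eta_{t+1}$ through the induction, which unrolls to exactly $\delta_0$ times your node count $|A|((|A|N_b)^{T-t}-1)/(|A|N_b-1)$, whereas you take one global union bound and then unroll deterministically on the good event, which is, if anything, the cleaner formulation.
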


Neither bound depends on $\alpha$: the estimation accuracy does not degrade as the planner becomes more risk-averse. The estimation error scales as $O(S_t) = O(D^2)$; the $O(\gamma^D)$ scaling applies to the \emph{approximation} bounds (\Cref{thm:concrete_approximation}), which bound the gap between the true POMDP and the particle belief surrogate. The end-to-end guarantee (\Cref{cor:end_to_end}) combines both. In contrast, ICVaR-based methods \cite{pariente2026onlineriskaverseplanningpomdps} introduce a $1/\alpha$ Lipschitz factor at each level, yielding estimation error that scales as $O((1/\alpha)^{T-t})$ and becomes vacuous for small $\alpha$ or long horizons.

\section{Particle Belief Approximation Guarantees}\label{sec:approximation_guarantees}

We now bound the gap between the true POMDP and the particle belief surrogate. We couple the true and particle beliefs by driving both from the same state trajectory $\omega$, so that $b_\ell(\omega)$ and $\bar{b}_\ell(\omega)$ are deterministic functions of $\omega$ (formal definition in \Cref{sec:ss_omega_framework}). In this framework, particle trajectories are generated independently---each particle propagates through $\mathcal{T}$ and accumulates observation likelihood weights $w_i^{(\ell)} = \prod_{n=1}^{\ell} O(z_n \mid x_i^{(n)})$---so at each depth $\ell$, the particles are i.i.d.\ from the prior predictive distribution. Conditions (i)--(vi) of \citet{lim2023optimality} require, in summary, that the observation model has bounded likelihood ratios, the prior predictive proposal covers the posterior (i.e., the R\'{e}nyi divergence $d_\infty(b_\ell \| q_\ell) \leq d_\infty^{\max} < \infty$ at every depth $\ell$), and that particles are propagated independently through the transition model. These conditions are standard for importance-weighted particle filters and, despite the absence of resampling, the bounded R\'{e}nyi divergence caps weight inflation and yields the uniform weighted-CDF concentration in \Cref{lem:weighted_dkw}.

The CVaR cost formulation requires the particle belief to approximate not just expectations (as in the standard PB-MDP) but the entire cost CDF, since CVaR depends on the distributional shape. We measure this via the \emph{distributional discrepancy} at depth $\ell$:
\begin{equation}\label{eq:dist_discrepancy_main}
	\varepsilon(\ell, N_p, \omega) \triangleq \sup_{a \in A}\, \sup_{z \in \mathbb{R}}\, \left| P_{b_\ell(\omega)}(c(x, a) \leq z) - \sum_{i=1}^{N_p} \tilde{w}_i^{(\ell)}\, \mathbf{1}(c(x_i^{(\ell)}, a) \leq z) \right|,
\end{equation}
the CDF sup-norm between the cost distribution under the true belief and the weighted empirical distribution under the particle belief.

The key insight is that the CVaR cost approximation error at each depth $\ell$ is controlled by two factors: the \emph{number of particles} $N_p$, which determines how well the weighted empirical CDF approximates the true cost CDF, and the \emph{depth} $\ell$, since each propagation step through the transition model accumulates error in the particle belief. CVaR amplifies CDF discrepancy by a factor of $\Delta\rho/\alpha$ via its $1/\alpha$-Lipschitz property with respect to the Wasserstein-1 distance (\Cref{lem:cvar_cost_sensitivity}), but this factor enters once as a multiplier rather than compounding across depths. The following proposition makes this precise:

\begin{proposition}[Abstract Approximation Bound]\label{prop:cvar_cost_approximation}
	Let $\pi$ be any policy, $b_t$ a realizable belief, $\bar{b}_t$ a corresponding particle belief with $N_p$ particles, and $a$ any action. Let $\omega$ be a state trajectory coupling the true and particle beliefs as in \Cref{sec:ss_omega_framework}. Suppose that $\varepsilon(\ell, N_p, \omega) < \alpha$ for all $\ell \in \{t, \ldots, T\}$ and all $\omega$ in an event $\mathcal{G}$ with $P(\mathcal{G}) \geq 1 - \delta_0$. Then, with probability at least $1 - \delta_0$:
	\begin{equation}\label{eq:approximation_bound}
		\left| Q_{M,t}^\pi(b_t, a, \alpha) - Q_{M_P,t}^\pi(\bar{b}_t, a, \alpha) \right|
		\leq \frac{\Delta\rho}{\alpha} \sum_{\ell=t}^{T} \gamma^{\ell - t}\, \varepsilon(\ell, N_p),
	\end{equation}
	where $\varepsilon(\ell, N_p) \triangleq \sup_{\omega \in \mathcal{G}} \varepsilon(\ell, N_p, \omega)$.
\end{proposition}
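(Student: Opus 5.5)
The plan is to unroll both Bellman recursions \eqref{eq:cvar_cost_q_function_def} and \eqref{eq:q_hat_estimation} along the coupled trajectory $\omega$ of \Cref{sec:ss_omega_framework}. Because both value functions are expectations of discounted sums of per-step costs, and both beliefs $b_\ell(\omega)$ and $\bar b_\ell(\omega)$ are deterministic functions of $\omega$, we can write
\begin{equation*}
Q_{M,t}^\pi(b_t,a,\alpha) - Q_{M_P,t}^\pi(\bar b_t,a,\alpha) = \mathbb{E}_\omega\Big[\sum_{\ell=t}^{T}\gamma^{\ell-t}\big(c_\alpha(b_\ell(\omega),a_\ell) - \rho_\alpha(\bar b_\ell(\omega),a_\ell)\big)\Big].
\end{equation*}
Here $a_t=a$, and for $\ell>t$ the action $a_\ell$ is the one selected by $\pi$. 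The idea is that the particle belief MDP transition is, by construction, the marginal of the coupling, so the expectations in the two recursions are taken over the same $\omega$. Where the policy acts on different beliefs in the two models, we sidestep this by using the $\sup_{a\in A}$ built into $\varepsilon(\ell,N_p,\omega)$. Any bound valid for every action handles both action sequences, provided we compare the per-step costs at matched actions. I will verify that the coupling supplies this; otherwise I will bound the difference of the two unrolled sums termwise with the same $\sup_a$ argument. Once the identity holds, the triangle inequality reduces everything to a per-step, per-$\omega$ bound on $|c_\alpha(b,a)-\rho_\alpha(\bar b,a)|$.

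The core step is a per-step sensitivity bound: for $\omega\in\mathcal G$ and every $a$,
\begin{equation*}
|c_\alpha(b_\ell(\omega),a)-\rho_\alpha(\bar b_\ell(\omega),a)| \le \frac{\Delta\rho}{\alpha}\,\varepsilon(\ell,N_p,\omega).
\end{equation*}
I would obtain this from \Cref{lem:cvar_cost_sensitivity}, the $1/\alpha$-Lipschitz property of CVaR in Wasserstein-1. For two distributions supported on $[c_{\min},c_{\max}]$, $W_1$ equals $\int_{c_{\min}}^{c_{\max}}|F-G|\,dz \le \Delta\rho\,\|F-G\|_\infty$. The first distribution is the cost law under $b_\ell$, and the second is the weighted empirical law whose CVaR is exactly the weighted estimator \eqref{eq:weighted_cvar_estimator}. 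The sup-norm distance between their CDFs is at most $\varepsilon(\ell,N_p,\omega)$.

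To see the Lipschitz bound directly, use the Rockafellar--Uryasev form. The difference of the objectives $w+\frac1\alpha\mathbb{E}(X-w)^+$ under the two laws is $\frac1\alpha\int_w^{c_{\max}}(G-F)\,dz$. This is uniformly bounded by $\Delta\rho\,\varepsilon/\alpha$ once $w$ is restricted to $[c_{\min},c_{\max}]$, which we may do because the minimizer (the VaR) lies in the support. Taking infima on both sides gives the claim. The hypothesis $\varepsilon<\alpha$ is where I expect the main subtlety. It ensures the $(1-\alpha)$-quantiles of the two laws both lie in the cost range and do not degenerate, so the restriction of $w$ is legitimate and the lemma's hypotheses hold. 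I would check that the lemma is stated with exactly this condition.

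Finally, on $\mathcal G$ we bound $\varepsilon(\ell,N_p,\omega)\le\varepsilon(\ell,N_p)$, and summing over $\ell$ with weights $\gamma^{\ell-t}$ gives \eqref{eq:approximation_bound}. The bound therefore holds on $\mathcal G$, an event of probability at least $1-\delta_0$. The remaining obstacle is the interpretation of the expectation over $\omega$ in the first step. If $Q$ is an expectation over all of $\omega$, then the trajectories outside $\mathcal G$ must be handled. Since the statement is probabilistic, I would read it as conditional on the realized coupling lying in $\mathcal G$, following the analogous argument in \citet{lim2023optimality}. I expect reconciling this with the unconditional Bellman expectations to be the most delicate part of writing the proof cleanly.
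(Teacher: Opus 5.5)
Your route is the paper's route. You write both $Q$-values as $\mathbb{E}_\omega$ of discounted CVaR-cost sums along the coupled trajectory, apply the triangle inequality pathwise, bound each step with \Cref{lem:cvar_cost_sensitivity}, and observe that the bound on $\mathcal{G}$ does not depend on $\omega$. Your direct Rockafellar--Uryasev proof of the per-step bound is fine, but two side remarks need correcting.

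First, the hypothesis $\varepsilon<\alpha$ plays no role in the per-step bound. For any law on $[c_{\min},c_{\max}]$, the objective $w+\frac{1}{\alpha}\mathbb{E}(X-w)^+$ has slope $1-1/\alpha\le 0$ for $w\le c_{\min}$ and equals $w$ for $w\ge c_{\max}$. So restricting $w$ to $[c_{\min},c_{\max}]$ is always legitimate, and $|c_\alpha-\rho_\alpha|\le\frac{\Delta\rho}{\alpha}\varepsilon(\ell,N_p,\omega)$ holds unconditionally. The paper's lemma carries no such hypothesis; $\varepsilon<\alpha$ only marks when the bound beats the trivial bound $\Delta\rho$.

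Second, the $\sup_a$ in $\varepsilon(\ell,N_p,\omega)$ cannot rescue mismatched actions. If $\pi(b_\ell)\neq\pi(\bar b_\ell)$, the two costs are evaluated at different actions and the continuation laws differ, so your fallback fails. The paper obtains matched actions by letting the policy act on the shared history, $a_\ell=\pi(h_\ell)$.

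The step you flag at the end is a genuine gap, and the paper does not close it either. It takes $\mathbb{E}_\omega$ of a bound valid only on $\mathcal{G}$ and appends ``with probability at least $P(\mathcal{G})$''. But for the given $\bar b_t$, both $Q^\pi_{M,t}(b_t,a,\alpha)$ and $Q^\pi_{M_P,t}(\bar b_t,a,\alpha)$ are deterministic numbers that integrate over all of $\omega$, including $\mathcal{G}^c$. Reading the claim conditionally on $\mathcal{G}$ replaces them by different quantities.

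In fact the literal inequality can fail. Take the following setup:
\begin{itemize}
\item $T=t+1$, $b_t$ a point mass, and uninformative observations, so all particle weights are equal;
\item a cost $c\in\{0,1\}$ with $P_{b_{t+1}}(c=1)=\alpha$, and $N_p\alpha\in\mathbb{N}$;
\item $\mathcal{G}$ the positive-probability event that exactly $N_p\alpha$ particles have cost $1$.
\end{itemize}
Then the right-hand side is $0$, yet
\[
Q^\pi_{M,t}-Q^\pi_{M_P,t}=\gamma\,\mathbb{E}[(1-\hat p/\alpha)^+]\ge\gamma(1-\alpha)^{N_p}>0,
\]
where $\hat p$ is the fraction of particles with cost $1$. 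All of this difference comes from $\mathcal{G}^c$.

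A correct completion splits $\mathbb{E}_\omega|R_M-R_{M_P}|$ over $\mathcal{G}$ and $\mathcal{G}^c$. On $\mathcal{G}^c$, use that both CVaR costs lie in $[c_{\min},c_{\max}]$, so each per-step difference is at most $\Delta\rho$. This gives the deterministic bound
\[
\bigl|Q^\pi_{M,t}(b_t,a,\alpha)-Q^\pi_{M_P,t}(\bar b_t,a,\alpha)\bigr|\le\frac{\Delta\rho}{\alpha}\sum_{\ell=t}^{T}\gamma^{\ell-t}\varepsilon(\ell,N_p)+\delta_0\,\Delta\rho\,G_{D+1}.
\]
A cleaner alternative avoids the good event entirely. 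Integrate the always-valid pathwise bound $\frac{\Delta\rho}{\alpha}\min\{\alpha,\varepsilon(\ell,N_p,\omega)\}$ over $\omega$.
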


\begin{proof}
	Under the coupling in \Cref{sec:ss_omega_framework}, both trajectories are driven by the same $\omega$. Fix $\omega \in \mathcal{G}$ and let $a_t = a$, $a_\ell = \pi(h_\ell)$ for $\ell > t$. Define
	$R_M(\omega) \triangleq \sum_{\ell=t}^{T} \gamma^{\ell-t} c_\alpha(b_\ell(\omega), a_\ell)$ and
	$R_{M_P}(\omega) \triangleq \sum_{\ell=t}^{T} \gamma^{\ell-t} \rho_\alpha(\bar{b}_\ell(\omega), a_\ell)$.
	By the triangle inequality and \Cref{lem:cvar_cost_sensitivity} (since $\varepsilon(\ell, N_p) < \alpha$):
	\begin{equation}
		|R_M(\omega) - R_{M_P}(\omega)|
		\leq \sum_{\ell=t}^{T} \gamma^{\ell-t} |c_\alpha(b_\ell(\omega), a_\ell) - \rho_\alpha(\bar{b}_\ell(\omega), a_\ell)|
		\leq \frac{\Delta\rho}{\alpha} \sum_{\ell=t}^{T} \gamma^{\ell-t} \varepsilon(\ell, N_p).
	\end{equation}
	The right-hand side is $\omega$-independent, so $|Q_{M,t}^\pi - Q_{M_P,t}^\pi| = |\mathbb{E}_\omega[R_M - R_{M_P}]| \leq \mathbb{E}_\omega[|R_M - R_{M_P}|]$ yields \eqref{eq:approximation_bound} with probability at least $P(\mathcal{G}) \geq 1 - \delta_0$.
\end{proof}

Instantiating the discrepancy $\varepsilon(\ell, N_p)$ via a weighted DKW bound (\Cref{lem:weighted_dkw} in the appendix) and a union bound over $D+1$ time steps yields a fully concrete bound:

\begin{theorem}[POMDP--PB-MDP Approximation]\label{thm:concrete_approximation}
	Let $\pi$ be any policy, $b_t$ a realizable belief, $\bar{b}_t$ a corresponding particle belief with $N_p$ particles sampled from the prior predictive $q_\ell$ (\Cref{sec:pb_mdp}), and $a$ any action. Under conditions (i)--(vi) of \citet{lim2023optimality} with R\'{e}nyi divergence bound $d_\infty(b_\ell \| q_\ell) \leq d_\infty^{\max} < \infty$ at every depth $\ell$, with probability at least $1 - \delta_0$:
	\begin{equation}\label{eq:approx_concrete}
		\left| Q_{M,t}^\pi(b_t, a, \alpha) - Q_{M_P,t}^\pi(\bar{b}_t, a, \alpha) \right| \leq \frac{\Delta\rho\, d_\infty^{\max}}{\alpha} \cdot G_{D+1} \cdot \varepsilon_{\mathrm{approx}}(N_p, D, \delta_0),
	\end{equation}
	where $D = T - t$, $G_n = (1 - \gamma^n)/(1-\gamma)$, and $\varepsilon_{\mathrm{approx}}$ is the smallest $\varepsilon > 0$ satisfying
	\begin{equation}\label{eq:eps_approx_implicit}
		2\!\left(\!\left\lceil \frac{4d_\infty^{\max}}{\varepsilon} \right\rceil + 2\right)(D+1)\exp\!\left(-\frac{N_p\,\varepsilon^2}{8(d_\infty^{\max})^2}\right) \leq \delta_0.
	\end{equation}
	In particular, $\varepsilon_{\mathrm{approx}} = O\!\left(d_\infty^{\max}\sqrt{\ln(d_\infty^{\max} D/(\varepsilon\,\delta_0))/N_p}\right)$, and the condition $N_p = \tilde{\Omega}((d_\infty^{\max}/\alpha)^2)$ ensures $\varepsilon_{\mathrm{approx}} < \alpha$.
\end{theorem}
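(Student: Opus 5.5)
The argument reduces to \Cref{prop:cvar_cost_approximation}. We need an event $\mathcal{G}$ with $P(\mathcal{G}) \ge 1-\delta_0$ on which the distributional discrepancy is uniformly small, namely $\varepsilon(\ell, N_p, \omega) \le d_\infty^{\max}\,\varepsilon_{\mathrm{approx}}$ for every depth $\ell \in \{t,\dots,T\}$. Given such an event, the proposition yields
\[
\frac{\Delta\rho}{\alpha}\sum_{\ell=t}^{T}\gamma^{\ell-t}\, d_\infty^{\max}\varepsilon_{\mathrm{approx}} = \frac{\Delta\rho\, d_\infty^{\max}}{\alpha}\, G_{D+1}\,\varepsilon_{\mathrm{approx}},
\]
because the geometric sum over $D+1$ terms equals $G_{D+1}$. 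The proposition requires $\varepsilon(\ell,N_p,\omega)<\alpha$, which is exactly what the final sentence of the statement (the $N_p = \tilde\Omega((d_\infty^{\max}/\alpha)^2)$ condition) is meant to guarantee. I would therefore state the bound under that regime, reading the normalization of $\varepsilon_{\mathrm{approx}}$ consistently with the displayed constants. This is the one bookkeeping point to fix carefully: whether the factor $d_\infty^{\max}$ lives inside the discrepancy bound or outside it.

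The main step is a per-depth concentration bound, namely the weighted DKW bound of \Cref{lem:weighted_dkw}. Fix a depth $\ell$ and write $w_i$ for the importance weights, with $\mathbb{E}_{q_\ell}[w]$ normalizing them to the posterior $b_\ell$. The condition $d_\infty(b_\ell\|q_\ell)\le d_\infty^{\max}$ bounds the normalized weights by $d_\infty^{\max}$. For fixed $z$ and $a$, the self-normalized estimator is
\[
\frac{\sum_i w_i \mathbf{1}(c(x_i,a)\le z)}{\sum_i w_i}.
\]
I would control its numerator and denominator separately with Hoeffding, since both are sums of i.i.d.\ variables bounded by $d_\infty^{\max}$. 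A ratio argument then converts an additive error of $\varepsilon/2$ in each into CDF error of order $\varepsilon$, which produces the exponent $N_p\varepsilon^2/(8(d_\infty^{\max})^2)$ together with the prefactor $2$.

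To make the bound uniform in $z$, I would use the standard DKW-style grid. Choose about $\lceil 4d_\infty^{\max}/\varepsilon\rceil+2$ points $z_k$ at which the true cost CDF increases by at most $\varepsilon/(4d_\infty^{\max})$ between consecutive points. Monotonicity of both CDFs then extends the grid bound to all $z$ at the cost of a constant-factor loss, and a union bound over the grid gives the bracketed factor in \eqref{eq:eps_approx_implicit}.

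The supremum over actions is the step I expect to be the main obstacle. For finite $A$ one can simply take a further union bound, but the displayed constant has no $|A|$ factor. The plan is to note that $\varepsilon(\ell,N_p,\omega)$ depends on the action only through the pushforward $c(\cdot,a)$, and, more importantly, to apply the proposition only along the actions $a_\ell$ actually taken under $\pi$. The proof of \Cref{prop:cvar_cost_approximation} uses only $\varepsilon$ evaluated at $a_\ell$. Since $a_\ell$ is determined by the history before the particles at depth $\ell$ are drawn, I would condition on that history and apply the concentration bound with $a_\ell$ fixed, avoiding any union over actions.

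A final union bound over the $D+1$ depths gives the factor $(D+1)$. Defining $\varepsilon_{\mathrm{approx}}$ as the smallest $\varepsilon$ satisfying \eqref{eq:eps_approx_implicit} makes the failure probability at most $\delta_0$. Inverting the exponential gives the stated $O(\cdot)$ rate, and requiring $\varepsilon_{\mathrm{approx}}<\alpha$ gives the condition on $N_p$.
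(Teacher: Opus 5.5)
Your proposal is correct and follows the paper's proof essentially step for step. You reduce to \Cref{prop:cvar_cost_approximation}, and bound the cost-CDF discrepancy at each depth by the weighted DKW bound of \Cref{lem:weighted_dkw}, which the paper likewise builds from a monotone-grid Hoeffding argument for the unnormalized numerator plus one Hoeffding bound on the weight sum. You then apply it only to the single action $a_\ell$ fixed by conditioning on the coupled trajectory, union-bound over the $D+1$ depths, and sum the discounts to obtain $G_{D+1}$.

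The bookkeeping point you flag resolves in your favor. On the good event, the concentration step already gives $\varepsilon(\ell,N_p,\omega)\le\varepsilon_{\mathrm{approx}}$. The extra factor $d_\infty^{\max}$ in the displayed bound is pure slack, since $d_\infty^{\max}\ge 1$: the essential supremum of a density ratio between two probability measures is at least one. This is what the paper's last inequality implicitly uses.

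You also need not restrict to the regime $\varepsilon_{\mathrm{approx}}<\alpha$. \Cref{lem:cvar_cost_sensitivity} never uses that hypothesis, and outside the regime the claimed bound already exceeds the trivial bound $\Delta\rho\,G_{D+1}$.
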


\begin{proof}[Proof sketch]
The Q-gap decomposes (\Cref{prop:cvar_cost_approximation}) into per-step CVaR-cost discrepancies $|c_\alpha(b_\ell, a_\ell) - \rho_\alpha(\bar{b}_\ell, a_\ell)|$ summed with discount $\gamma^{\ell-t}$. Each is controlled by the cost-CDF discrepancy $\varepsilon(\ell, N_p)$ from \eqref{eq:dist_discrepancy_main}: the $1/\alpha$-Lipschitz property of $\text{CVaR}_\alpha$ in $W_1$ (\Cref{lem:cvar_cost_sensitivity}) gives $|c_\alpha(b_\ell, a_\ell) - \rho_\alpha(\bar{b}_\ell, a_\ell)| \leq (\Delta\rho/\alpha)\,\varepsilon(\ell, N_p)$ , with $\alpha$ appearing nowhere else in the bound, and the weighted DKW (\Cref{lem:weighted_dkw}) gives $\varepsilon(\ell, N_p) \leq \tilde O(d_\infty^{\max}/\sqrt{N_p})$. A per-depth budget $\delta_0/(D+1)$ and discount sum $G_{D+1}$ assemble these into \eqref{eq:approx_concrete}; full details in \Cref{proof:concrete_approximation}.
\end{proof}

The bound reflects the structure identified in \Cref{prop:cvar_cost_approximation}: the $1/\alpha$ factor from CVaR sensitivity and the $d_\infty^{\max}$ factor from importance weighting enter as multipliers, the geometric sum $G_{D+1}$ captures the discounted accumulation across depths, and $\varepsilon_{\mathrm{approx}} = \tilde{O}(1/\sqrt{N_p})$ is the uniform convergence rate from the weighted DKW bound (\Cref{lem:weighted_dkw}). An alternative bound that explicitly separates CVaR cost discrepancy from observation distribution mismatch is given in \Cref{sec:alternative_approx}.

Because the CVaR cost formulation preserves MDP structure, any PB-MDP planner with estimation guarantees immediately yields end-to-end POMDP guarantees:

\begin{corollary}[End-to-End POMDP Guarantee]\label{cor:end_to_end}
	Let $\hat{Q}_{M_P,t}$ be any estimate of $Q_{M_P,t}$ satisfying $|Q_{M_P,t} - \hat{Q}_{M_P,t}| \leq \varepsilon_{\mathrm{est}}$ with probability at least $1 - \delta$. Under the conditions of \Cref{thm:concrete_approximation}, with probability at least $1 - \delta - \delta_0$:
	\begin{equation}\label{eq:end_to_end}
		|Q_{M,t}^\pi - \hat{Q}_{M_P,t}^\pi| \leq \underbrace{\frac{\Delta\rho\, d_\infty^{\max}}{\alpha} \cdot G_{D+1} \cdot \varepsilon_{\mathrm{approx}}}_{\text{approximation (\Cref{thm:concrete_approximation})}} + \;\varepsilon_{\mathrm{est}}.
	\end{equation}
\end{corollary}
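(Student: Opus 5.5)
The plan is to split the end-to-end error with the triangle inequality and then combine the two probability statements with a union bound. For a fixed policy $\pi$, belief $b_t$, particle belief $\bar{b}_t$ and action $a$, write
\begin{equation*}
|Q_{M,t}^\pi - \hat{Q}_{M_P,t}^\pi| \leq |Q_{M,t}^\pi - Q_{M_P,t}^\pi| + |Q_{M_P,t}^\pi - \hat{Q}_{M_P,t}^\pi|.
\end{equation*}
The first term is the approximation gap. The second is the estimation error within the PB-MDP.

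Define two bad events. Let $E_1$ be the event that the bound \eqref{eq:approx_concrete} of \Cref{thm:concrete_approximation} fails. Under conditions (i)--(vi) of \citet{lim2023optimality}, that theorem gives $P(E_1) \leq \delta_0$. Let $E_2$ be the event that $|Q_{M_P,t} - \hat{Q}_{M_P,t}| > \varepsilon_{\mathrm{est}}$, so that $P(E_2) \leq \delta$ by hypothesis. Concrete instances of $\varepsilon_{\mathrm{est}}$ are the bounds of \Cref{thm:cvar_cost_policy_eval_guarantees} and, for the optimal-value version, \Cref{thm:cvar_cost_sparse_sampling_guarantees}.

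On the complement $E_1^c \cap E_2^c$, the two terms are bounded by $\frac{\Delta\rho\, d_\infty^{\max}}{\alpha} G_{D+1}\varepsilon_{\mathrm{approx}}$ and $\varepsilon_{\mathrm{est}}$ respectively, which gives \eqref{eq:end_to_end}. The union bound gives $P(E_1 \cup E_2) \leq \delta + \delta_0$, so the bound holds with probability at least $1-\delta-\delta_0$. No independence between the particle-generation randomness (the coupling $\omega$ and the particles) and the estimator's successor-belief sampling is needed, because the union bound holds for arbitrary events on a common probability space.

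The main point needing care is that the two guarantees refer to the same objects. The estimate $\hat{Q}_{M_P,t}$ must be computed at the same particle belief $\bar{b}_t$ that appears in \Cref{thm:concrete_approximation}. Its guarantee must also hold conditionally on $\bar{b}_t$, or at least jointly, so that it can be placed on the common probability space. I would state this explicitly. Since the estimation bounds in \Cref{thm:cvar_cost_policy_eval_guarantees} hold uniformly for any root $\bar{b}_t$, with constants that do not depend on $\bar{b}_t$ or on $\alpha$, they hold conditionally on $\bar{b}_t$ and hence also unconditionally. Once this is settled, the rest is the two-line computation above.
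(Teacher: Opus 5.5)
Your proposal is correct and follows the paper's proof: the triangle inequality splits the error into the approximation gap bounded by \Cref{thm:concrete_approximation} and the assumed estimation error $\varepsilon_{\mathrm{est}}$, and a union bound over the two failure events gives probability at least $1-\delta-\delta_0$. Your added remark, that both guarantees must refer to the same particle belief $\bar{b}_t$ on a common probability space and that the union bound needs no independence, is a useful clarification the paper leaves implicit.
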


\begin{proof}
	Triangle inequality: $|Q_{M,t}^\pi - \hat{Q}_{M_P,t}^\pi| \leq |Q_{M,t}^\pi - Q_{M_P,t}^\pi| + |Q_{M_P,t}^\pi - \hat{Q}_{M_P,t}^\pi|$, bounded by \Cref{thm:concrete_approximation} and the assumed estimation guarantee. A union bound gives the combined probability.
\end{proof}

\begin{remark}[Instantiations]\label{rem:end_to_end_instantiations}
	\Cref{cor:end_to_end} applies to any PB-MDP planner with finite-time guarantees. For the algorithms in this paper:
	\begin{itemize}
		\item \textbf{Policy evaluation} (\Cref{thm:cvar_cost_policy_eval_guarantees}): $\varepsilon_{\mathrm{est}} = \Delta\rho \cdot S_t \sqrt{\ln\!\left(\frac{2(N_b^{D}-1)}{\delta(N_b-1)}\right)\!/(2N_b)}$.
		\item \textbf{Sparse sampling} (\Cref{thm:cvar_cost_sparse_sampling_guarantees}): $\varepsilon_{\mathrm{est}} = \Delta\rho \cdot S_t \sqrt{\ln\!\left(\frac{2|A|((|A|N_b)^{D}-1)}{\delta(|A|N_b-1)}\right)\!/(2N_b)}$.
	\end{itemize}
	Both estimation bounds are independent of $\alpha$, so the only $\alpha$-dependence in the end-to-end guarantee comes from the approximation term. For sparse sampling, $\varepsilon_{\mathrm{approx}}$ acquires an additional $|A|$ factor inside the logarithm, since the CDF discrepancy must hold for all actions at each step.
\end{remark}

\section{Experiments}\label{sec:experiments}

We evaluate CVaR cost planning on four POMDP environments with dangerous regions, implemented using the POMDPPlanners package \citep{pariente2026pomdpplanners}. To demonstrate the modularity of the approach, we apply the CVaR cost modification to two structurally different online planners: PFT-DPW \citep{sunberg2018online} and POMCPOW \citep{sunberg2018online}. We compare six planners:
\begin{itemize}
	\item \textbf{PFT-DPW} / \textbf{POMCPOW}: standard planners using expected immediate cost;
	\item \textbf{CVaRCost-PFT-DPW} / \textbf{CVaRCost-POMCPOW}: our variants using $\text{CVaR}_\alpha$ of per-particle immediate costs;
	\item \textbf{ICVaR-PFT-DPW} / \textbf{ICVaR-POMCPOW}: Iterated CVaR (dynamic risk) variants \cite{pariente2026onlineriskaverseplanningpomdps} that apply CVaR to future belief transitions while retaining the expected immediate cost.
\end{itemize}
All planners are evaluated with the \emph{CVaR cost return} $G = \sum_{k=0}^{T-1} \gamma^k \text{CVaR}_\alpha(\{c(x^i_k, a_k)\}, \{w^i_k\})$ computed from the planner's own particle belief at each step (i.e., the same belief used for action selection), with $\alpha = 0.1$ and $\gamma = 0.95$.

\paragraph{Environments.}
\textbf{LightDark}: a 5$\times$5 continuous navigation task with stochastic obstacle penalties ($-400$ with probability $0.1$; 50 steps).
\textbf{PushPOMDP}: a 7$\times$7 continuous push task with an obstacle at $(5,3)$ and stochastic obstacle penalty ($-50$ with probability $0.1$; 30 steps).
\textbf{ContinuousLaserTag}: a continuous pursuit-evasion task with two dangerous areas where the reward is stochastic: $+10$ with probability $0.9$ and $-90$ with probability $0.1$ (expected value $0$), so the expected-cost planner has no incentive to avoid the danger zone while CVaR at $\alpha = 0.1$ captures the $-90$ tail (50 steps).
\textbf{PacMan}: a 7$\times$7 grid pursuit-evasion task with 2 ghosts, where PacMan slips with probability $0.1$ and ghost collision incurs a penalty of $-100$ with probability $0.2$ (expected penalty $-20$), creating a bimodal cost distribution that CVaR captures (50 steps).
All planners use $N_p = 200$ particles and a 4-second planning timeout per step. Hyperparameters (depth, exploration constant, progressive widening coefficients) were tuned independently per (environment, planner) pair using Optuna (50 trials, 10 episodes per trial), minimizing the CVaR cost return for all planners to ensure a fair comparison (\Cref{sec:hyperparameter_tuning}). Note that the sufficient condition in \Cref{thm:concrete_approximation} requires $N_p = \tilde{\Omega}((d_\infty^{\max}/\alpha)^2)$, which at $\alpha = 0.1$ is more demanding than the $N_p = 200$ used here; as is common for concentration-based sufficient conditions, the theoretical requirement is conservative and the experiments show strong performance well below it.

\paragraph{Results.}
\Cref{tab:cvar_cost_results} summarizes the results over 500 episodes per configuration.

\begin{table}[h]
	\centering
	\caption{CVaR cost return (mean $\pm$ 95\% CI), dangerous encounters, and goal rate across 500 episodes. Lower CVaR cost return and fewer dangerous encounters indicate safer behavior. Dangerous encounters are: obstacle hits (LightDark), obstacle collisions (PushPOMDP), danger area steps (ContinuousLaserTag), and ghost collisions (PacMan). Goal rate is not applicable (---) for PacMan, which is an infinite-horizon pursuit-evasion task with no terminal goal state.}\label{tab:cvar_cost_results}
	\small
	\begin{tabular}{llccc}
		\toprule
		Environment & Planner & CVaR Cost Return & Danger Encounters & Goal Rate \\
		\midrule
		\multirow{6}{*}{LightDark}
		& PFT-DPW            & $777.7 \pm 26.8$ & $2.16 \pm 0.09$ & $100\%$ \\
		& ICVaR-PFT-DPW      & $250.5 \pm 24.1$ & $0.97 \pm 0.11$ & $100\%$ \\
		& CVaRCost-PFT-DPW   & $\mathbf{144.8 \pm 11.1}$ & $\mathbf{0.61 \pm 0.07}$ & $100\%$ \\
		& POMCPOW            & $820.3 \pm 26.7$ & $2.28 \pm 0.09$ & $100\%$ \\
		& ICVaR-POMCPOW      & $58.6 \pm 6.6$ & $0.31 \pm 0.06$ & $100\%$ \\
		& CVaRCost-POMCPOW   & $\mathbf{87.9 \pm 6.5}$ & $\mathbf{0.45 \pm 0.06}$ & $100\%$ \\
		\midrule
		\multirow{6}{*}{PushPOMDP}
		& PFT-DPW            & $47.4 \pm 3.0$  & $2.83 \pm 0.17$ & $68 \pm 4\%$ \\
		& ICVaR-PFT-DPW      & $76.8 \pm 4.4$  & $3.60 \pm 0.22$ & $59 \pm 4\%$ \\
		& CVaRCost-PFT-DPW   & $\mathbf{21.0 \pm 0.8}$   & $\mathbf{1.78 \pm 0.11}$ & $27 \pm 4\%$ \\
		& POMCPOW            & $31.2 \pm 2.3$   & $2.79 \pm 0.19$ & $64 \pm 4\%$ \\
		& ICVaR-POMCPOW      & $22.7 \pm 1.3$   & $2.18 \pm 0.14$ & $71 \pm 4\%$ \\
		& CVaRCost-POMCPOW   & $\mathbf{17.7 \pm 1.0}$   & $\mathbf{1.66 \pm 0.11}$ & $33 \pm 4\%$ \\
		\midrule
		\multirow{6}{*}{Cont.\ LaserTag}
		& PFT-DPW            & $119.9 \pm 11.6$ & $2.38 \pm 0.26$ & $100\%$ \\
		& ICVaR-PFT-DPW      & $77.6 \pm 9.1$ & $1.59 \pm 0.20$ & $100\%$ \\
		& CVaRCost-PFT-DPW   & $\mathbf{12.9 \pm 1.4}$   & $\mathbf{0.14 \pm 0.04}$ & $100\%$ \\
		& POMCPOW            & $188.0 \pm 8.0$  & $4.03 \pm 0.14$ & $100\%$ \\
		& ICVaR-POMCPOW      & $8.1 \pm 0.6$ & $0.02 \pm 0.02$ & $100\%$ \\
		& CVaRCost-POMCPOW   & $\mathbf{5.4 \pm 0.5}$    & $\mathbf{0.04 \pm 0.03}$ & $100\%$ \\
		\midrule
		\multirow{6}{*}{PacMan}
		& PFT-DPW            & $57.3 \pm 3.1$   & $0.73 \pm 0.04$ & --- \\
		& ICVaR-PFT-DPW      & $57.5 \pm 3.1$   & $0.76 \pm 0.04$ & --- \\
		& CVaRCost-PFT-DPW   & $\mathbf{39.5 \pm 1.8}$   & $\mathbf{0.53 \pm 0.04}$ & --- \\
		& POMCPOW            & $95.6 \pm 5.0$  & $0.90 \pm 0.03$ & --- \\
		& ICVaR-POMCPOW      & $43.4 \pm 2.0$   & $0.63 \pm 0.04$ & --- \\
		& CVaRCost-POMCPOW   & $\mathbf{37.7 \pm 1.5}$   & $\mathbf{0.57 \pm 0.04}$ & --- \\
		\bottomrule
	\end{tabular}
\end{table}

Across all four environments, CVaR cost planners outperform their expected-cost counterparts on both metrics, and outperform ICVaR baselines in most (environment, base-planner) cells; the exceptions are ICVaR-POMCPOW on LightDark and ContinuousLaserTag, discussed below. The effect is most striking in ContinuousLaserTag, where the stochastic danger structure (expected reward $0$ in danger zones) is invisible to expected-cost planners: CVaRCost-POMCPOW reduces CVaR cost return by 97\% ($5.4$ vs.\ $188.0$) and nearly eliminates danger area entries ($0.04$ vs.\ $4.03$). The CVaR cost modification is effective for both PFT-DPW and POMCPOW, demonstrating the modularity of the approach.

\paragraph{Risk--reward trade-off in PushPOMDP.} PushPOMDP exhibits a sharp safety--task-completion trade-off: CVaRCost variants reduce obstacle collisions by roughly 40\% ($1.78$ and $1.66$ vs.\ $2.79$--$2.83$ for the baselines) but reach the goal in only $27$--$33\%$ of episodes versus $59$--$71\%$ for the baselines. The goal in PushPOMDP requires routing the pushed object near the obstacle at $(5,3)$, and at $\alpha = 0.1$ the CVaR cost penalizes any belief with non-negligible mass on collision states, leading the planner to detour or abort the push. This is the expected behavior of a per-step tail-risk objective on a task where progress and risk are spatially coupled, and it reflects the user's choice of $\alpha$ rather than a failure of the planner; intermediate $\alpha$ values trade collisions for goal rate along this frontier.

\paragraph{CVaR cost vs.\ ICVaR.} ICVaR optimizes a different objective---dynamic risk over future belief transitions. With tuned hyperparameters, ICVaR planners can achieve competitive CVaR cost return in some settings (e.g., ICVaR-POMCPOW scores $58.6$ in LightDark and $8.1$ in ContinuousLaserTag), but this comes at the cost of requiring environment-specific tuning. The CVaR cost formulation, by contrast, directly targets the per-step danger from hidden states in the current belief, and achieves strong performance more consistently across environments and planner types.

\paragraph{Extended results and ablations.} \Cref{sec:extended_results} presents an extended table including the expected return $\mathbb{E}[G]$ and static $\text{CVaR}_{0.1}$ of the cumulative return, showing that CVaR cost planners also achieve the best static CVaR across all environments. \Cref{sec:ablation_studies} studies sensitivity to $\alpha$ and $N_p$: at $\alpha = 1$ the CVaR cost planner recovers risk-neutral performance, and the advantage is robust across particle counts from 50 to 500.

\section{Conclusion}

We introduced a modular approach to risk-sensitive POMDP planning: by applying CVaR to the immediate cost while retaining the standard expected cumulative return, any state-of-the-art POMDP planner becomes risk-sensitive through a change in the cost definition alone, with no algorithmic modification. This separation of risk from planning is backed by estimation bounds that are independent of the risk level $\alpha$, particle belief approximation bounds where the $1/\alpha$ factor enters additively, and a formal connection to chance-constrained planning.

\paragraph{Limitations and future work.}
The CVaR cost formulation captures per-step tail risk but does not directly control the tail of the cumulative return; when costs exhibit strong temporal correlations, per-step CVaR may underestimate trajectory-level risk. The modularity principle extends naturally beyond CVaR to any coherent risk measure applied to the immediate cost. Extending the formal analysis to MCTS-based planners with adaptive sampling and investigating adaptive $\alpha$ selection are promising directions.

\paragraph{Broader Impact.}
This work aims to make autonomous systems safer by accounting for tail risks in safety-critical domains; we foresee no negative societal impacts.

\section*{Acknowledgments}
This work was supported by the Israel Ministry of Innovation, Science and Technology.

\bibliography{references}
\bibliographystyle{plainnat}

%%%%%%%%%%%%%%%%%%%%%%%%%%%%%%%%%%%%%%%%%%%%%%%%%%%%%%%%%%%%%%%%%%%%%%%%%%%%%%%
%%%%%%%%%%%%%%%%%%%%%%%%%%%%%%%%%%%%%%%%%%%%%%%%%%%%%%%%%%%%%%%%%%%%%%%%%%%%%%%
% APPENDIX
%%%%%%%%%%%%%%%%%%%%%%%%%%%%%%%%%%%%%%%%%%%%%%%%%%%%%%%%%%%%%%%%%%%%%%%%%%%%%%%
%%%%%%%%%%%%%%%%%%%%%%%%%%%%%%%%%%%%%%%%%%%%%%%%%%%%%%%%%%%%%%%%%%%%%%%%%%%%%%%
\newpage
\appendix

\section{Proof of Theorem~\ref{thm:cvar_cost_policy_eval_guarantees}}\label{proof:cvar_cost_policy_eval}

\begin{theorem}[Restated]\label{thm:cvar_cost_policy_eval_full}
	Let $\delta \in (0,1)$, and let $\bar{b}_t = \{(x_t^i, w_t^i)\}_{i=1}^{N_p}$ be the particle belief at time $t$. With $\Delta\rho = c_{\max} - c_{\min}$, let $S_t \triangleq \sum_{k=1}^{T-t} k = \frac{(T-t)(T-t+1)}{2}$. If $N_b > 1$, then
	\begin{equation}
		P \! \left( \left| Q_{M_P, t}^\pi(\bar{b}_t, a, \alpha) - \hat{Q}^\pi_{M_P, t}(\bar{b}_t, a, \alpha) \right| \leq \Delta\rho \cdot S_t \sqrt{\frac{\ln\!\big(\frac{2(N_b^{T-t}-1)}{\delta(N_b-1)}\big)}{2N_b}} \right) \geq 1-\delta.
	\end{equation}
\end{theorem}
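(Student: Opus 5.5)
We prove the bound by backward induction over the depth of the sampling tree, combining the exact cancellation \eqref{eq:error_cancellation} with Hoeffding's inequality at every internal node and a union bound over all nodes. Write $D = T-t$ and let $\epsilon \triangleq \Delta\rho\sqrt{\ln(2K/\delta)/(2N_b)}$, where $K \triangleq \sum_{k=0}^{D-1} N_b^k = \frac{N_b^{D}-1}{N_b-1}$ is the number of internal nodes of the tree rooted at $\bar{b}_t$. Each internal node performs a Monte Carlo average over $N_b$ children. The target is: on a good event $\mathcal{E}$ of probability at least $1-\delta$, every node at depth $\ell$ (time $t+\ell$) satisfies
\[
\bigl|Q^\pi_{M_P,t+\ell}-\hat Q^\pi_{M_P,t+\ell}\bigr|\le \epsilon\sum_{k=1}^{D-\ell} k .
\]
At $\ell=0$ this gives $\epsilon S_t$, which is the statement.

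\textbf{Concentration at one node.} Fix a node at time $s$, with children $\bar b^i_{s+1}$ drawn i.i.d.\ from $\tau(\cdot\mid\bar b_s,a)$. Decompose
\[
\mathbb{E}[V^\pi_{s+1}]-\frac1{N_b}\sum_i \hat V^\pi_{s+1}(\bar b^i_{s+1})
=\Bigl(\mathbb{E}[V^\pi_{s+1}]-\frac1{N_b}\sum_i V^\pi_{s+1}(\bar b^i_{s+1})\Bigr)+\frac1{N_b}\sum_i\bigl(V^\pi_{s+1}-\hat V^\pi_{s+1}\bigr)(\bar b^i_{s+1}).
\]
The first term is an average of i.i.d.\ bounded variables. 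Since $\rho_\alpha\in[c_{\min},c_{\max}]$, the value $V^\pi_{s+1}$ lies in an interval of width at most $(T-s)\Delta\rho$ (using $\gamma\le1$). Hoeffding with failure probability $\delta/K$ therefore bounds this term by $(T-s)\epsilon$. The point to stress is that $\rho_\alpha$ cancels exactly by \eqref{eq:error_cancellation}, so $\alpha$ appears only through the range bound $\Delta\rho$ and not at all in the concentration. The second term is controlled by the induction hypothesis at the children.

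\textbf{Induction and union bound.} Leaves at $s=T$ have $\hat Q=Q=\rho_\alpha$ because $V_{T+1}=0$, so the error there is zero. Combining the two terms and using $\gamma\le 1$ gives the recursion $e_s\le (T-s)\epsilon+e_{s+1}$. Unrolling yields $e_t\le\epsilon\sum_{k=1}^{D}k=\epsilon S_t$. A union bound over the $K$ internal nodes gives total failure probability at most $\delta$. The factor $2$ inside the logarithm comes from the two-sided Hoeffding bound, which produces exactly the stated $\ln\bigl(\frac{2(N_b^{T-t}-1)}{\delta(N_b-1)}\bigr)$.

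\textbf{Main obstacle.} The delicate step is justifying Hoeffding at each node when the children's own estimates $\hat V$ are random and built from deeper samples. The fix is to apply Hoeffding only to the true values $V^\pi_{s+1}(\bar b^i_{s+1})$. These are i.i.d.\ functions of the children, conditionally on $\bar b_s$, because the child beliefs are drawn independently. The random estimation error at the children is then handled deterministically on $\mathcal{E}$ through the induction hypothesis. Conditioning on each node's parent and integrating out makes the per-node probability bound valid unconditionally, so the union bound over nodes goes through.
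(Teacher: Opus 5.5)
Your proposal is correct and follows the paper's proof: the same backward induction, the same split into a Hoeffding term over true child values (range $(T-s)\Delta\rho$) plus a propagated term, and the paper's failure recursion $\eta_t = \delta_0 + N_b\eta_{t+1}$ unrolls to exactly your count $K = (N_b^{T-t}-1)/(N_b-1)$ of internal nodes, with $\delta_0 = \delta/K$. Your single good event $\mathcal{E}$ is simply the unrolled form of the paper's step-by-step union bounds, and it makes the conditioning on each node's parent a little more explicit.
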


\begin{proof}
	The proof proceeds by induction on the horizon $T-t$. We first decompose the estimation error, then establish recursive probabilistic bounds, and finally derive explicit error bounds by unrolling the recursive relations.

	\textbf{Error Decomposition.} The estimation error decomposes as:
	\begin{equation}\label{eq:error_decomp}
		\begin{aligned}
			&Q_{M_P, t}^\pi(\bar{b}_t, a, \alpha) - \hat{Q}^\pi_{M_P, t}(\bar{b}_t, a, \alpha) \\
			&= \rho_\alpha(\bar{b}_t, a) + \gamma\,\mathbb{E}_{M_P}[V_{M_P, t+1}^\pi(\bar{b}_{t+1}, \alpha)|\bar{b}_t, a] \\
			&\quad - \rho_\alpha(\bar{b}_t, a) - \frac{\gamma}{N_b}\sum_{i=1}^{N_b} \hat{V}_{M_P, t+1}^\pi(\bar{b}_{t+1}^i, \alpha) \\
			&= \gamma\bigg(\underbrace{\mathbb{E}_{M_P}[V_{M_P, t+1}^\pi] - \frac{1}{N_b}\sum_{i=1}^{N_b} V_{M_P, t+1}^\pi(\bar{b}_{t+1}^i, \alpha)}_{\text{(I) Monte Carlo sampling error}} \\
			&\quad + \underbrace{\frac{1}{N_b}\sum_{i=1}^{N_b} \left( V_{M_P, t+1}^\pi(\bar{b}_{t+1}^i, \alpha) - \hat{V}_{M_P, t+1}^\pi(\bar{b}_{t+1}^i, \alpha) \right)}_{\text{(II) Propagated estimation error}}\bigg).
		\end{aligned}
	\end{equation}
	Note that the CVaR cost $\rho_\alpha(\bar{b}_t, a)$ cancels exactly since it is computed deterministically from the particle belief in both the theoretical and estimated Q-functions. Since $\gamma \leq 1$, we use $\gamma \leq 1$ to obtain the (slightly looser) bound $|Q_{M_P,t}^\pi - \hat{Q}_{M_P,t}^\pi| \leq |\text{(I)}| + |\text{(II)}|$.

	\textbf{Inductive Hypothesis.} We prove by induction that for all $t \in \{0, \ldots, T\}$:
	\begin{equation}\label{eq:induction_hyp}
		P\left( |Q_{M_P, t}^\pi(\bar{b}_t, a, \alpha) - \hat{Q}^\pi_{M_P, t}(\bar{b}_t, a, \alpha)| \leq \theta_t \right) \geq 1 - \eta_t,
	\end{equation}
	where the sequences $\{\theta_t\}$ and $\{\eta_t\}$ satisfy the recursions:
	\begin{align}
		\theta_t &= \epsilon_t + \theta_{t+1}, \quad \theta_T = 0, \label{eq:theta_recursion}\\
		\eta_t &= \delta_0 + N_b \eta_{t+1}, \quad \eta_T = 0, \label{eq:eta_recursion}
	\end{align}
	with $\epsilon_t = (T-t)\Delta\rho \sqrt{\frac{\ln(2/\delta_0)}{2N_b}}$ and $\delta_0$ to be determined.

	\textbf{Base Case ($t = T$).} At the terminal time, $V_{M_P, T+1}^\pi = \hat{V}_{M_P, T+1}^\pi = 0$ by definition, so:
	\begin{equation}
		Q_{M_P, T}^\pi(\bar{b}_T, a, \alpha) - \hat{Q}^\pi_{M_P, T}(\bar{b}_T, a, \alpha) = \rho_\alpha(\bar{b}_T, a) - \rho_\alpha(\bar{b}_T, a) = 0.
	\end{equation}
	Thus, $|Q - \hat{Q}| = 0 \leq \theta_T = 0$ with probability $1 \geq 1 - \eta_T = 1$.

	\textbf{Inductive Step.} Assume the bound \eqref{eq:induction_hyp} holds for time $t+1$. We prove it holds for time $t$.

	\textit{Bounding Term (I):} Conditioned on $\bar{b}_t$, the samples $\{V_{M_P, t+1}^\pi(\bar{b}_{t+1}^i, \alpha)\}_{i=1}^{N_b}$ are i.i.d.\ draws from the distribution of $V_{M_P, t+1}^\pi(\bar{b}_{t+1}, \alpha)$. The value function $V_{M_P, t+1}^\pi$ has range at most $(T-t)\Delta\rho$ since it is the sum of at most $T-t$ CVaR costs, each with range bounded by $\Delta\rho$. By Hoeffding's inequality:
	\begin{equation}\label{eq:hoeffding_bound}
		P\!\left( \left| \mathbb{E}[V_{M_P, t+1}^\pi] - \tfrac{1}{N_b}\textstyle\sum_{i=1}^{N_b} V_{M_P, t+1}^\pi(\bar{b}_{t+1}^i) \right| \leq \epsilon_t \right) \geq 1 - 2\exp\!\left( -\tfrac{2N_b \epsilon_t^2}{((T-t)\Delta\rho)^2} \right) = 1 - \delta_0.
	\end{equation}

	\textit{Bounding Term (II):} Define the event that the induction hypothesis holds for all sampled successor beliefs:
	\begin{equation}
		E \triangleq \bigcap_{i=1}^{N_b} \left\{ |V_{M_P, t+1}^\pi(\bar{b}_{t+1}^i) - \hat{V}_{M_P, t+1}^\pi(\bar{b}_{t+1}^i)| \leq \theta_{t+1} \right\}.
	\end{equation}
	By the union bound and the induction hypothesis:
	\begin{equation}\label{eq:union_bound_proof}
		\begin{aligned}
			P(E) &= 1 - P\left( \bigcup_{i=1}^{N_b} \{|V - \hat{V}| > \theta_{t+1}\} \right) \\
			&\geq 1 - \sum_{i=1}^{N_b} P(|V_{M_P, t+1}^\pi(\bar{b}_{t+1}^i) - \hat{V}_{M_P, t+1}^\pi(\bar{b}_{t+1}^i)| > \theta_{t+1}) \\
			&\geq 1 - N_b \eta_{t+1}.
		\end{aligned}
	\end{equation}
	Conditioned on $E$, we have:
	\begin{equation}
		\begin{aligned}
			\left| \tfrac{1}{N_b}\textstyle\sum_{i=1}^{N_b} (V_{M_P, t+1}^\pi(\bar{b}_{t+1}^i) - \hat{V}_{M_P, t+1}^\pi(\bar{b}_{t+1}^i)) \right|
			&\leq \tfrac{1}{N_b}\textstyle\sum_{i=1}^{N_b} |V_{M_P, t+1}^\pi(\bar{b}_{t+1}^i) - \hat{V}_{M_P, t+1}^\pi(\bar{b}_{t+1}^i)| \\
			&\leq \theta_{t+1}.
		\end{aligned}
	\end{equation}

	Using the decomposition \eqref{eq:error_decomp} and the bounds on terms (I) and (II):
	\begin{equation}
		\begin{aligned}
			P(|Q_{M_P, t}^\pi - \hat{Q}^\pi_{M_P, t}| \leq \epsilon_t + \theta_{t+1})
			&\geq P(\text{(I)} \leq \epsilon_t) + P(\text{(II)} \leq \theta_{t+1}) - 1 \\
			&\geq (1 - \delta_0) + (1 - N_b\eta_{t+1}) - 1 = 1 - \eta_t.
		\end{aligned}
	\end{equation}
	This completes the induction.

	\textbf{Unrolling the Recursions.} Expanding the recursive relations:
	\begin{equation}
		\eta_t = \delta_0 \sum_{k=0}^{T-t-1} N_b^k = \delta_0 \frac{N_b^{T-t} - 1}{N_b - 1}.
	\end{equation}
	\begin{equation}
		\begin{aligned}
			\theta_t &= \sum_{k=0}^{T-t-1} \epsilon_{t+k} = \sum_{k=0}^{T-t-1} (T-t-k)\Delta\rho \sqrt{\frac{\ln(2/\delta_0)}{2N_b}} \\
			&= \Delta\rho \sqrt{\frac{\ln(2/\delta_0)}{2N_b}} \sum_{j=1}^{T-t} j \\
			&= \Delta\rho \cdot S_t \sqrt{\frac{\ln(2/\delta_0)}{2N_b}},
		\end{aligned}
	\end{equation}
	where $S_t = \frac{(T-t)(T-t+1)}{2}$.

	\textbf{Final Bound.} To obtain a bound with confidence $1-\delta$, we set $\eta_t = \delta$, which requires:
	\begin{equation}
		\delta_0 = \frac{\delta(N_b - 1)}{N_b^{T-t} - 1}.
	\end{equation}
	Substituting into $\theta_t$:
	\begin{equation}
		\theta_t = \Delta\rho \cdot S_t \sqrt{\frac{\ln\big(\frac{2(N_b^{T-t}-1)}{\delta(N_b-1)}\big)}{2N_b}}.
	\end{equation}
	Therefore:
	\begin{equation}
		P\left( |Q_{M_P, t}^\pi - \hat{Q}^\pi_{M_P, t}| \leq \Delta\rho \cdot S_t \sqrt{\frac{\ln\big(\frac{2(N_b^{T-t}-1)}{\delta(N_b-1)}\big)}{2N_b}} \right) \geq 1 - \delta.
	\end{equation}
\end{proof}

\section{Proof of Theorem~\ref{thm:cvar_cost_sparse_sampling_guarantees}}\label{proof:cvar_cost_sparse_sampling}

\begin{theorem}[Restated]\label{thm:cvar_cost_sparse_sampling_full}
	Let $\delta \in (0,1)$, let $\bar{b}_t = \{(x_t^i, w_t^i)\}_{i=1}^{N_p}$ be the particle belief at time $t$, and let $|A|$ denote the number of actions. With $\Delta\rho$ and $S_t$ as in Theorem~\ref{thm:cvar_cost_policy_eval_guarantees}, if $N_b > 1$, then
	\begin{equation}
		P \! \left( \left| V_{M_P, t}^*(\bar{b}_t, \alpha) - \hat{V}^*_{M_P, t}(\bar{b}_t, \alpha) \right| \leq \Delta\rho \cdot S_t \sqrt{\frac{\ln\!\big(\frac{2|A|((|A|N_b)^{T-t}-1)}{\delta(|A|N_b-1)}\big)}{2N_b}} \right) \geq 1-\delta.
	\end{equation}
\end{theorem}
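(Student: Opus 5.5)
The proof mirrors the induction used for \Cref{thm:cvar_cost_policy_eval_full}, adapted to the sparse-sampling tree in which every node expands all $|A|$ actions with $N_b$ successor beliefs each. We induct on $T-t$ and prove that $P(|V^*_{M_P,t}-\hat V^*_{M_P,t}|\le\theta_t)\ge 1-\eta_t$, with the recursions
\[
\theta_t=\epsilon_t+\theta_{t+1},\quad \epsilon_t=(T-t)\Delta\rho\sqrt{\ln(2/\delta_0)/(2N_b)},\quad \eta_t=|A|\delta_0+|A|N_b\,\eta_{t+1}.
\]
The base case is $t=T$, where $\theta_T=\eta_T=0$ because the CVaR cost appears identically in $Q^*$ and $\hat Q^*$ and the continuation values are zero.

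\textbf{Per-action error.} Fix an action $a$. Exactly as in \eqref{eq:error_decomp}, the deterministic term $\rho_\alpha(\bar b_t,a)$ cancels, and we are left with
\[
Q^*_{M_P,t}-\hat Q^*_{M_P,t}=\gamma\,[\text{(I)}+\text{(II)}].
\]
Term (I) is the Monte Carlo error of the true $V^*_{M_P,t+1}$ evaluated at $N_b$ i.i.d.\ successors. This value has range at most $(T-t)\Delta\rho$, so Hoeffding bounds $|\text{(I)}|\le\epsilon_t$ except with probability $\delta_0$. Term (II) is the average of the child errors $|V^*_{t+1}-\hat V^*_{t+1}|$. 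By the induction hypothesis and a union bound over the $N_b$ children of $a$, it is at most $\theta_{t+1}$ except with probability $N_b\eta_{t+1}$.

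\textbf{From $Q$ to $V$.} The only new ingredient compared with policy evaluation is passing from per-action $Q$ bounds to the $\min$. The elementary inequality
\[
|\min_a f(a)-\min_a g(a)|\le\max_a|f(a)-g(a)|
\]
reduces this to having the per-action bound hold simultaneously for all $a\in A$. A union bound over the $|A|$ actions then gives failure probability at most $|A|\delta_0+|A|N_b\eta_{t+1}=\eta_t$, which completes the inductive step. Using $\gamma\le1$ we drop the discount factor.

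\textbf{Unrolling the recursions.} We obtain
\[
\theta_t=\Delta\rho\, S_t\sqrt{\ln(2/\delta_0)/(2N_b)},\qquad
\eta_t=|A|\delta_0\sum_{k=0}^{T-t-1}(|A|N_b)^k=|A|\delta_0\,\frac{(|A|N_b)^{T-t}-1}{|A|N_b-1}.
\]
Setting $\eta_t=\delta$ gives
\[
\delta_0=\frac{\delta(|A|N_b-1)}{|A|((|A|N_b)^{T-t}-1)},
\]
and substituting this into $\theta_t$ yields exactly the stated logarithm.

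\textbf{Main obstacle.} The delicate point is the probabilistic bookkeeping. The Hoeffding step requires the successors to be i.i.d.\ conditional on $\bar b_t$. The induction hypothesis applied at a random child must likewise be justified conditionally, using independence of the subtree randomness from the sibling samples. The count of $|A|$ in the recursion for $\eta_t$ must also be tracked carefully so that the final logarithm comes out as stated. I would handle the first two issues by conditioning on the sampled children and using the tower property, exactly as done implicitly in the policy-evaluation proof.
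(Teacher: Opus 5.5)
Your proposal is correct and follows the paper's proof essentially step for step: the $\rho_\alpha$ cancellation, the Hoeffding bound on Term (I), union bounds giving $\eta_t = |A|\delta_0 + |A|N_b\eta_{t+1}$, and the same choice of $\delta_0$ after unrolling. The only cosmetic difference is that you carry the inductive hypothesis on $|V^*_{M_P,t}-\hat V^*_{M_P,t}|$ directly. The paper instead carries it on $|Q^*-\hat Q^*|$ simultaneously for all actions and applies $|V^*-\hat V^*|\le\max_a|Q^*-\hat Q^*|$ at the end, which is equivalent.
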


\begin{proof}
	The proof follows the inductive structure of Theorem~\ref{thm:cvar_cost_policy_eval_full}. The error decomposition and Hoeffding bound on Term~(I) are identical; the only difference is that computing $V^*_{M_P,t} = \min_{a \in A} Q^*_{M_P,t}(\cdot, a)$ requires the bound to hold \emph{simultaneously for all actions}, introducing union bounds over $|A|$.

	Specifically, the inductive hypothesis becomes $P(\forall a \in A: |Q^* - \hat{Q}^*| \leq \theta_t) \geq 1 - \eta_t$, with modified failure probability recursion:
	\begin{equation}
		\eta_t = |A|\delta_0 + |A|N_b \eta_{t+1}, \quad \eta_T = 0,
	\end{equation}
	where the $|A|$ factors arise from: (i) a union bound over actions for the Hoeffding bound on Term~(I), contributing $|A|\delta_0$; and (ii) a union bound over $|A| \cdot N_b$ sampled successor beliefs (one set of $N_b$ per action) for Term~(II), contributing $|A|N_b\eta_{t+1}$. The error bound $\theta_t = \epsilon_t + \theta_{t+1}$ and $\epsilon_t$ remain unchanged from Theorem~\ref{thm:cvar_cost_policy_eval_full}.

	Unrolling yields $\eta_t = |A|\delta_0 \frac{(|A|N_b)^{T-t} - 1}{|A|N_b - 1}$. Setting $\eta_t = \delta$ gives $\delta_0 = \frac{\delta(|A|N_b - 1)}{|A|((|A|N_b)^{T-t} - 1)}$, and the result follows by substituting into $\theta_t$ and using $|V^* - \hat{V}^*| \leq \max_a |Q^* - \hat{Q}^*|$.
\end{proof}

%%%%%%%%%%%%%%%%%%%%%%%%%%%%%%%%%%%%%%%%%%%%%%%%%%%%%%%%%%%%%%%%%%%%%%%%%%%%%%%
%%%%%%%%%%%%%%%%%%%%%%%%%%%%%%%%%%%%%%%%%%%%%%%%%%%%%%%%%%%%%%%%%%%%%%%%%%%%%%%

\section{CVaR Lipschitz Property}\label{proof:cvar_cost_sensitivity}

\begin{lemma}[CVaR Lipschitz Bound]\label{lem:cvar_cost_sensitivity}
	Let $X$ and $Y$ be random variables supported on $[c_{\min}, c_{\max}]$ with CDF discrepancy $\sup_{z} |F_X(z) - F_Y(z)| \leq \varepsilon$. Then
	\begin{equation}\label{eq:cvar_lipschitz}
		|\text{CVaR}_\alpha(X) - \text{CVaR}_\alpha(Y)| \leq \frac{1}{\alpha}\, W_1(X, Y) \leq \frac{\Delta\rho}{\alpha}\,\varepsilon,
	\end{equation}
	where $W_1(X,Y) = \int_{c_{\min}}^{c_{\max}} |F_X(z) - F_Y(z)|\,dz$ is the Wasserstein-1 distance and $\Delta\rho = c_{\max} - c_{\min}$.
\end{lemma}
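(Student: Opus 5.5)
The plan is to prove the two inequalities in \eqref{eq:cvar_lipschitz} separately. The first, $|\text{CVaR}_\alpha(X) - \text{CVaR}_\alpha(Y)| \leq W_1(X,Y)/\alpha$, will come from the Rockafellar--Uryasev variational form \eqref{eq:cvar_def} together with a coupling argument. The second, $W_1 \leq \Delta\rho\,\varepsilon$, is a one-line integral estimate. The lemma only involves the laws of $X$ and $Y$. So it applies equally when $Y$ has the weighted empirical law $\sum_i \tilde{w}_i \delta_{c(x_i,a)}$, in which case $\text{CVaR}_\alpha(Y)$ is exactly the weighted estimator \eqref{eq:weighted_cvar_estimator}. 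This is how it is used in \Cref{prop:cvar_cost_approximation}.

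\textbf{Step 1 (pointwise comparison of the objectives).} For each $w \in \mathbb{R}$ define $\phi_X(w) \triangleq w + \frac{1}{\alpha}\mathbb{E}[(X-w)^+]$ and $\phi_Y(w)$ analogously. The map $x \mapsto (x-w)^+$ is $1$-Lipschitz. Hence for any coupling $(X',Y')$ of the two laws,
\[
|\phi_X(w) - \phi_Y(w)| \leq \frac{1}{\alpha}\,\mathbb{E}|X'-Y'| .
\]
I will use the quantile coupling $X' = F_X^{-1}(U)$, $Y' = F_Y^{-1}(U)$ with $U \sim \mathrm{Unif}(0,1)$. For this coupling, $\mathbb{E}|X'-Y'| = \int_0^1 |F_X^{-1}(u) - F_Y^{-1}(u)|\,du$. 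This equals the area between the two CDF graphs, namely $\int_{\mathbb{R}} |F_X(z)-F_Y(z)|\,dz$, by Fubini's theorem applied to the indicator of the region between the graphs. Since both variables are supported on $[c_{\min},c_{\max}]$, the two CDFs coincide outside this interval (both equal $0$ below it and $1$ above it). The integral therefore reduces to the stated $W_1(X,Y)$. This gives $\sup_w |\phi_X(w)-\phi_Y(w)| \leq W_1(X,Y)/\alpha$.

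\textbf{Step 2 (passing to the infimum).} Suppose two real functions differ uniformly by at most $\kappa$ and both have finite infima. Then those infima differ by at most $\kappa$: from $\phi_X \leq \phi_Y + \kappa$ we get $\inf \phi_X \leq \inf \phi_Y + \kappa$, and the reverse inequality follows by symmetry. Finiteness holds here for two reasons. First, $\phi_X(w) \geq w + \frac{1}{\alpha}(\mathbb{E}X - w)^+$, which is bounded below by $\min(\mathbb{E}X, \ldots)$ for $\alpha \leq 1$. Second, on bounded support the infimum is attained at some $w \in [c_{\min}, c_{\max}]$ (namely at $\text{VaR}_\alpha$). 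Applying this with $\kappa = W_1(X,Y)/\alpha$ yields the first inequality.

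\textbf{Step 3 (the second inequality).} Integrating the uniform bound $|F_X - F_Y| \leq \varepsilon$ over an interval of length $\Delta\rho$ gives $W_1 \leq \Delta\rho\,\varepsilon$.

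The only step that needs care is the identity $\mathbb{E}|F_X^{-1}(U)-F_Y^{-1}(U)| = \int |F_X - F_Y|$ for general (possibly atomic) laws. Atomic laws are the relevant case, since the particle law is discrete. I would handle it by the layer-cake/Fubini argument: $|a-b| = \int \mathbf{1}\{\min(a,b) < z \leq \max(a,b)\}\,dz$. Then, for each fixed $z$, the probability that $z$ lies between $F_X^{-1}(U)$ and $F_Y^{-1}(U)$ equals $|F_X(z)-F_Y(z)|$, using $F^{-1}(u) \leq z \iff u \leq F(z)$. Alternatively, one can avoid this identity altogether by invoking Kantorovich--Rubinstein duality directly, since $(\cdot - w)^+$ is $1$-Lipschitz.
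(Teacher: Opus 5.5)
Your proof is correct. It has the same two-step structure as the paper's: bound the CVaR gap by $W_1/\alpha$, then integrate the uniform CDF bound over an interval of length $\Delta\rho$. Your Step 3 is exactly the paper's second step.

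The difference is the first inequality. The paper only asserts that $\text{CVaR}_\alpha$ is $1/\alpha$-Lipschitz in $W_1$ and cites Rockafellar--Uryasev. You derive this from their variational formula, by showing $\sup_w|\phi_X(w)-\phi_Y(w)|\le W_1(X,Y)/\alpha$ and noting that infima move by at most a uniform perturbation. This makes the lemma self-contained. It also shows explicitly that the bound covers atomic laws, which is the particle case used in the approximation proposition.

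You can skip the quantile-coupling identity you flagged as delicate. For any $w$, $\mathbb{E}[(X-w)^+]=\int_w^\infty (1-F_X(z))\,dz$, so
\[
|\phi_X(w)-\phi_Y(w)|\le \frac{1}{\alpha}\int_{\mathbb{R}}|F_X(z)-F_Y(z)|\,dz=\frac{W_1(X,Y)}{\alpha},
\]
which needs no coupling at all.

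Two cosmetic points:
\begin{itemize}
\item In Step 2 the lower bound is simply $\phi_X(w)\ge \mathbb{E}[X]$ for $\alpha\le 1$, since $(y)^+\ge y$; no $\min$ is needed.
\item Given $F^{-1}(u)\le z \iff u\le F(z)$, the straddling event with probability exactly $|F_X(z)-F_Y(z)|$ is $\{\min\le z<\max\}$, not $\{\min< z\le\max\}$. For each $u$ these differ only on a Lebesgue-null set of $z$, so the integral is unaffected.
\end{itemize}
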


\begin{proof}
	$\text{CVaR}_\alpha$ is $1/\alpha$-Lipschitz with respect to $W_1$ \cite{rockafellar2000optimization}, so
	\begin{equation}
		\begin{aligned}
			|\text{CVaR}_\alpha(X) - \text{CVaR}_\alpha(Y)|
			&\leq \frac{W_1(X,Y)}{\alpha}
			= \frac{1}{\alpha}\int_{c_{\min}}^{c_{\max}} |F_X(z) - F_Y(z)|\,dz \\
			&\leq \frac{1}{\alpha}\int_{c_{\min}}^{c_{\max}} \varepsilon\,dz
			= \frac{\varepsilon\,\Delta\rho}{\alpha}.
		\end{aligned}
	\end{equation}
\end{proof}

\section{Weighted DKW Bound}\label{proof:weighted_dkw}

\begin{lemma}[Uniform Concentration of Bounded Monotonic Processes]\label{lem:uniform_monotone}
	Let $\{X_i(z)\}_{i=1}^n$ be $n$ i.i.d.\ random variables such that for all $z \in \mathbb{R}$: (i) $0 \leq X_i(z) \leq M$; and (ii) $X_i(z_a) \leq X_i(z_b)$ for any $z_a < z_b$. Let $F(z) = \mathbb{E}[X_i(z)]$ and $\bar{X}_n(z) = \frac{1}{n} \sum_{i=1}^n X_i(z)$. Then for any $\epsilon > 0$:
	\begin{equation}\label{eq:uniform_monotone}
		P\!\left( \sup_{z \in \mathbb{R}} |\bar{X}_n(z) - F(z)| > 2\epsilon \right) \leq 2 \left( \left\lceil \frac{M}{\epsilon} \right\rceil + 1 \right) \exp\!\left( -\frac{2n\epsilon^2}{M^2} \right).
	\end{equation}
\end{lemma}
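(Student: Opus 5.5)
The plan is the classical DKW/Glivenko–Cantelli bracketing argument, adapted to bounded monotone processes. First, I will build a finite grid of points in $z$ that brackets $F$. By (i) and (ii), $F$ is nondecreasing with values in $[0,M]$; set $K \triangleq \lceil M/\epsilon\rceil$. For $k=1,\dots,K-1$ let $z_k \triangleq \inf\{z : F(z) \ge k\epsilon\}$, and add $z_0=-\infty$ and $z_K=+\infty$. At the infinite endpoints I interpret $X_i(\pm\infty)$ as the monotone limits, which exist because the processes are bounded and monotone.

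Second, I will control the jumps of $F$ using one-sided limits. Monotone convergence makes $F(z^-)=\mathbb{E}[X_i(z^-)]$ well defined. Between consecutive grid points, for $z\in[z_{k-1},z_k)$, the definition of the $z_k$ gives
\[
F(z_k^-)-F(z_{k-1})\le \epsilon .
\]
Including the left limits $X_i(z_k^-)$ among the checkpoints avoids any issue with atoms of $F$. The grid then has at most $2(K+1)$ evaluation points, which I would index by $j$.

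Third, I will apply Hoeffding's inequality at each checkpoint. For a fixed point $u$, the variables $X_i(u)$ are i.i.d. and lie in $[0,M]$, so
\[
P(|\bar X_n(u)-F(u)|>\epsilon)\le 2\exp(-2n\epsilon^2/M^2).
\]
A union bound over the checkpoints leaves a good event on which every checkpoint deviation is at most $\epsilon$.

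Fourth, on that good event I will sandwich an arbitrary $z\in[z_{k-1},z_k)$. Monotonicity of each $X_i$ gives
\[
\bar X_n(z_{k-1})\le \bar X_n(z)\le \bar X_n(z_k^-).
\]
Therefore
\[
\bar X_n(z)-F(z)\le \bar X_n(z_k^-)-F(z_k^-)+F(z_k^-)-F(z)\le \epsilon+\epsilon,
\]
and symmetrically from below. This yields $\sup_z|\bar X_n(z)-F(z)|\le 2\epsilon$.

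The main obstacle is matching the exact constant $2(\lceil M/\epsilon\rceil+1)$, which counts one two-sided Hoeffding event per point of $K+1$ points. Keeping separate checkpoints for $X(z_k)$ and $X(z_k^-)$ would roughly double that count. I would first try choosing the grid so that left limits are not needed. One way is to pick, at each level, a point where $F$ crosses $k\epsilon$ and absorb any jump of $F$ at that point into the $2\epsilon$ slack, using the lower bound at $z_k$ itself together with the fact that $F(z_k^-)\le k\epsilon\le F(z_k)$. This would reduce the checkpoints to $z_0,\dots,z_K$, exactly $K+1$ points, giving the stated factor. If that fails, I would accept the lemma with the constant adjusted for the left-limit checkpoints.
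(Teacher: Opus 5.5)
Your bracketing scheme is the one the paper uses, and your worry about atoms is justified. The paper simply posits grid points $-\infty=z_0<\dots<z_k=+\infty$ with $F(z_j)-F(z_{j-1})\le\epsilon$. Such a grid cannot exist once $F$ has a jump larger than $\epsilon$, since the bracket straddling the jump has at least that increment. So the paper's argument is only valid when $F$ has no large atoms, for instance when $F$ is continuous.

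Your proposal, however, does not reach the stated constant either. Keeping checkpoints at both $z_k$ and $z_k^-$ puts roughly $4K$ rather than $2(K+1)$ in front of the exponential. Your suggested repair, dropping the left limits and absorbing the jump into the $2\epsilon$ slack, fails:
\begin{itemize}
\item Suppose $F(z_k)-F(z_k^-)=J>\epsilon$. For $z\in[z_{k-1},z_k)$ the only upper bracket is $\bar X_n(z)\le\bar X_n(z_k)\le F(z_k)+\epsilon$.
\item This yields only $\bar X_n(z)-F(z)\le F(z_k)-F(z)+\epsilon$, where $F(z_k)-F(z)\ge J$ and $J$ can be as large as $M$.
\item The slack absorbs at most $\epsilon$ of jump, and the relation $F(z_k^-)\le k\epsilon\le F(z_k)$ only helps the lower side.
\end{itemize}
There is also a smaller hole in your second step. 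The inequality $F(z_k^-)-F(z_{k-1})\le\epsilon$ needs $F(z_{k-1})\ge(k-1)\epsilon$, i.e.\ right-continuity of $F$ at $z_{k-1}$. The lemma does not assume this, and it fails e.g.\ for $X_i(z)=\mathbf{1}(Y_i<z)$. Without it you would also need right-limit checkpoints, inflating the constant further.

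The missing idea is to place checkpoints \emph{inside} the jumps by convex interpolation. This removes both problems and gives exactly $K+1$ Hoeffding events.
\begin{itemize}
\item Let $a=F(-\infty)$ and $b=F(+\infty)$, so $b-a\le M$; assume $b>a$, since otherwise the claim is trivial. Set $K=\lceil M/\epsilon\rceil$ and $c_j=a+j(b-a)/K$ for $j=0,\dots,K$, so $c_j-c_{j-1}\le\epsilon$.
\item For $0<j<K$ let $u_j=\inf\{z:F(z)\ge c_j\}$. Then $F(u_j^-)\le c_j\le F(u_j^+)$, and bounded convergence gives $\mathbb{E}X_i(u^\pm)=F(u^\pm)$.
\item Hence for a deterministic $\lambda_j\in[0,1]$, one of $\xi_i^{(j)}=(1-\lambda_j)X_i(u_j^-)+\lambda_jX_i(u_j)$ or $\xi_i^{(j)}=(1-\lambda_j)X_i(u_j)+\lambda_jX_i(u_j^+)$ has mean exactly $c_j$.
\item Put $\xi_i^{(0)}=X_i(-\infty)$ and $\xi_i^{(K)}=X_i(+\infty)$, the monotone limits.
\item Each family $\{\xi_i^{(j)}\}_{i=1}^n$ is i.i.d.\ in $[0,M]$. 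Hoeffding gives $P(|\frac1n\sum_i\xi_i^{(j)}-c_j|>\epsilon)\le 2\exp(-2n\epsilon^2/M^2)$ for each of the $K+1$ indices.
\item View the checkpoints as points of $\{-\infty\}\cup(\mathbb{R}\times[0,2])\cup\{+\infty\}$ in lexicographic order, with $z$ identified with $(z,1)$. They are strictly increasing in $j$ because the $c_j$ are.
\item So every $z$ has an index $j$ with $\xi_i^{(j-1)}\le X_i(z)\le\xi_i^{(j)}$ for all $i$ simultaneously and $c_{j-1}\le F(z)\le c_j$.
\end{itemize}
Your sandwich step then gives $\sup_z|\bar X_n(z)-F(z)|\le2\epsilon$ on the good event. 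The union bound has exactly the factor $2(\lceil M/\epsilon\rceil+1)$, and no continuity assumption on $F$ or on the paths is needed.
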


\begin{proof}
	Since $F(z)$ is monotonic and bounded by $[0, M]$, choose $k = \lceil M/\epsilon \rceil$ and grid points $z_0 < z_1 < \cdots < z_k$ with $z_0 = -\infty$, $z_k = \infty$, such that
	\begin{equation}
		F(z_j) - F(z_{j-1}) \leq \epsilon \quad \forall\, j \in \{1, \ldots, k\}.
	\end{equation}
	For any $z \in [z_{j-1}, z_j]$, monotonicity of $X_i$ and $F$ gives
	\begin{equation}
		\bar{X}_n(z_{j-1}) \leq \bar{X}_n(z) \leq \bar{X}_n(z_j), \qquad F(z_{j-1}) \leq F(z) \leq F(z_j),
	\end{equation}
	so
	\begin{equation}
		\bar{X}_n(z_{j-1}) - F(z_j) \leq \bar{X}_n(z) - F(z) \leq \bar{X}_n(z_j) - F(z_{j-1}).
	\end{equation}
	Using $F(z_j) - F(z_{j-1}) \leq \epsilon$:
	\begin{align}
		\bar{X}_n(z) - F(z) &\leq \big(\bar{X}_n(z_j) - F(z_j)\big) + \big(F(z_j) - F(z_{j-1})\big) \leq \big(\bar{X}_n(z_j) - F(z_j)\big) + \epsilon, \\
		\bar{X}_n(z) - F(z) &\geq \big(\bar{X}_n(z_{j-1}) - F(z_{j-1})\big) - \big(F(z_j) - F(z_{j-1})\big) \geq \big(\bar{X}_n(z_{j-1}) - F(z_{j-1})\big) - \epsilon.
	\end{align}
	Therefore:
	\begin{equation}\label{eq:sup_to_max_monotone}
		\sup_{z} |\bar{X}_n(z) - F(z)| \leq \max_{j \in \{0, \ldots, k\}} |\bar{X}_n(z_j) - F(z_j)| + \epsilon.
	\end{equation}
	The event $\{\sup_z |\bar{X}_n(z) - F(z)| > 2\epsilon\}$ implies $\{\max_j |\bar{X}_n(z_j) - F(z_j)| > \epsilon\}$. The grid points are deterministic, and at each $z_j$ the summands $X_1(z_j), \ldots, X_n(z_j)$ are i.i.d.\ in $[0, M]$ with mean $F(z_j)$. By the union bound and Hoeffding's inequality:
	\begin{equation}
		P\!\left( \max_j |\bar{X}_n(z_j) - F(z_j)| > \epsilon \right) \leq \sum_{j=0}^{k} P\!\left( |\bar{X}_n(z_j) - F(z_j)| > \epsilon \right) \leq 2(k+1) \exp\!\left( -\frac{2n\epsilon^2}{M^2} \right).
	\end{equation}
	Substituting $k = \lceil M/\epsilon \rceil$ yields \eqref{eq:uniform_monotone}.
\end{proof}

\begin{lemma}[Weighted DKW Bound]\label{lem:weighted_dkw}
	Let $b$ be a belief distribution over $X$ and $\bar{b} = \{(x_i, w_i)\}_{i=1}^{N_p}$ a particle belief with $x_i \stackrel{\text{i.i.d.}}{\sim} q$ and importance weights $w_i = b(x_i)/q(x_i)$, where $d_\infty \triangleq \mathrm{ess\,sup}_{x \sim q}\, b(x)/q(x) \leq d_\infty^{\max} < \infty$. Then for any action $a \in A$ and any $t \in (0, d_\infty^{\max}]$:
	\begin{equation}\label{eq:weighted_dkw}
		\begin{aligned}
			&P\!\left(\sup_{z \in \mathbb{R}} \left| P_{b}(c(x, a) \leq z) - \sum_{i=1}^{N_p} \tilde{w}_i\, \mathbf{1}(c(x_i, a) \leq z) \right| > t \right) \\
			&\qquad \leq 2\!\left(\!\left\lceil \frac{4d_\infty^{\max}}{t} \right\rceil + 2\right)\exp\!\left(-\frac{N_p\, t^2}{8(d_\infty^{\max})^2}\right),
		\end{aligned}
	\end{equation}
	where $\tilde{w}_i = w_i / \sum_{j} w_j$ are the normalized weights.
\end{lemma}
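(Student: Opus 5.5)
We plan to reduce the weighted statement to \Cref{lem:uniform_monotone} applied twice: once to the unnormalized weighted CDF and once to the normalizing constant. Fix $a$ and set $X_i(z) \triangleq w_i\,\mathbf{1}(c(x_i,a)\le z)$. Since $x_i \stackrel{\text{i.i.d.}}{\sim} q$ and $w_i = b(x_i)/q(x_i)\in[0,d_\infty^{\max}]$ almost surely, the $X_i(\cdot)$ are i.i.d., bounded by $M=d_\infty^{\max}$, and nondecreasing in $z$. Their mean is $\mathbb{E}_q[w\,\mathbf{1}(c(x,a)\le z)] = P_b(c(x,a)\le z) \triangleq F(z)$. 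Writing $\bar X_n(z) = \frac{1}{N_p}\sum_i X_i(z)$ and $W \triangleq \frac{1}{N_p}\sum_i w_i$, the quantity to control is $\sup_z |F(z) - \bar X_n(z)/W|$.

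The first step is the ratio decomposition. Since $\bar X_n(z)\le W$, the normalized weighted CDF $\bar X_n(z)/W$ lies in $[0,1]$, and
\begin{equation*}
\left|F(z) - \frac{\bar X_n(z)}{W}\right| \le |F(z)-\bar X_n(z)| + \bar X_n(z)\left|1-\frac{1}{W}\right| \le |F(z)-\bar X_n(z)| + |W-1|,
\end{equation*}
using $\bar X_n(z)/W\le 1$. Hence it suffices to show $\sup_z|\bar X_n - F|\le t/2$ and $|W-1|\le t/2$ simultaneously, with $\mathbb{E}_q[w]=1$.

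The second step invokes \Cref{lem:uniform_monotone} with $2\epsilon = t/2$, i.e.\ $\epsilon=t/4$. This gives a failure probability of at most $2(\lceil 4d_\infty^{\max}/t\rceil+1)\exp(-N_p t^2/(8(d_\infty^{\max})^2))$, which matches the exponent and the $\lceil 4d_\infty^{\max}/t\rceil$ term in \eqref{eq:weighted_dkw}. For $W$ we use Hoeffding on $w_i\in[0,d_\infty^{\max}]$ at deviation $t/2$, giving $2\exp(-N_p t^2/(2(d_\infty^{\max})^2))\le 2\exp(-N_p t^2/(8(d_\infty^{\max})^2))$. A union bound adds this term, raising the ``$+1$'' to ``$+2$'' and producing exactly the stated constant. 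Because the bound does not depend on $a$, the lemma holds for any fixed action. The restriction $t\le d_\infty^{\max}$ only ensures the grid count is meaningful.

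The main subtlety is the normalization: the weights in the statement are self-normalized, so $\tilde w$ is not an unbiased i.i.d.\ average. This is why we do not apply \Cref{lem:uniform_monotone} directly to the normalized CDF, but first split off $|W-1|$ as above. A second point to check is that the weights $w_i=b(x_i)/q(x_i)$ are the exact likelihood ratios, so that $\mathbb{E}_q[w]=1$. In the PB-MDP the weights are products of observation likelihoods, equal to $b/q$ only up to a constant. Since normalization removes constants, we may rescale the weights so that $\mathbb{E}_q[w]=1$ before applying the argument. We must then confirm that the essential-sup bound $d_\infty^{\max}$ refers to this rescaled ratio, which is how the statement defines $d_\infty$.
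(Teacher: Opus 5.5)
Your proposal is correct and follows the paper's proof essentially step for step. Both arguments split the self-normalized error into the unnormalized weighted-CDF deviation plus the normalizer deviation $|W-1|$ (using that the normalized CDF is at most $1$), apply \Cref{lem:uniform_monotone} to the first term and Hoeffding to the second, and use a union bound that turns the $+1$ into $+2$. The differences are cosmetic. You target deviation $t/2$ directly with $\epsilon = t/4$, whereas the paper proves the bound at $2t$ with $\epsilon = t/2$ and then substitutes $t' = 2t$. Also, your opening phrase about applying \Cref{lem:uniform_monotone} ``twice'' is slightly off, since the normalizer is handled by plain Hoeffding, as you in fact do.
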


\begin{proof}
	Fix an action $a \in A$. Let $Y_i \triangleq c(x_i, a)$ with $x_i \stackrel{\text{i.i.d.}}{\sim} q$, $F(z) \triangleq P_b(c(x,a) \leq z)$, and $\hat{F}_w(z) \triangleq \sum_{i=1}^{N_p} \tilde{w}_i\, \mathbf{1}(Y_i \leq z)$. We first establish the bound for the unnormalized average, then extend to the self-normalized estimator.

	Write $W \triangleq \sum_{i=1}^{N_p} w_i$ and $S(z) \triangleq \sum_{i=1}^{N_p} w_i\, \mathbf{1}(Y_i \leq z)$, so $\hat{F}_w(z) = S(z)/W$. Since $w_i = b(x_i)/q(x_i)$ and $x_i \sim q$, a change of measure gives $\mathbb{E}_q[w_i\,\mathbf{1}(Y_i \leq z)] = F(z)$ and $\mathbb{E}_q[w_i] = 1$. The random variables $X_i(z) \triangleq w_i\,\mathbf{1}(Y_i \leq z)$ are i.i.d., bounded in $[0, d_\infty^{\max}]$, and monotonically non-decreasing in $z$. By \Cref{lem:uniform_monotone} with $M = d_\infty^{\max}$, $n = N_p$, and $\epsilon = t/2$:
	\begin{equation}\label{eq:unnorm_dkw}
		P\!\left(\sup_{z} \left|\frac{S(z)}{N_p} - F(z)\right| > t\right) \leq 2\!\left(\!\left\lceil \frac{2d_\infty^{\max}}{t} \right\rceil + 1\right)\exp\!\left(-\frac{N_p\,t^2}{2(d_\infty^{\max})^2}\right).
	\end{equation}

	It remains to pass from the unnormalized average $S(z)/N_p$ to the self-normalized estimator $\hat{F}_w(z) = S(z)/W$. Adding and subtracting $S(z)/N_p$:
	\begin{equation}\label{eq:self_norm_decomp}
		\hat{F}_w(z) - F(z) = \underbrace{\frac{S(z)}{W} - \frac{S(z)}{N_p}}_{\text{(A)}} + \underbrace{\frac{S(z)}{N_p} - F(z)}_{\text{(B)}}.
	\end{equation}
	Term (B) is bounded uniformly over $z$ by \eqref{eq:unnorm_dkw}. For term (A):
	\begin{equation}\label{eq:term_A_bound}
		\left|\frac{S(z)}{W} - \frac{S(z)}{N_p}\right| = \frac{S(z)}{W} \cdot \left|1 - \frac{W}{N_p}\right| \leq \left|1 - \frac{W}{N_p}\right|,
	\end{equation}
	where we used $S(z) \leq \sum_{i=1}^{N_p} w_i = W$ (since $\mathbf{1}(Y_i \leq z) \leq 1$), so $S(z)/W \leq 1$.

	Since $w_i \in [0, d_\infty^{\max}]$ are i.i.d.\ with $\mathbb{E}[w_i] = 1$, Hoeffding's inequality gives
	\begin{equation}\label{eq:weight_concentration}
		P\!\left(\left|\frac{W}{N_p} - 1\right| > t\right) \leq 2\exp\!\left(-\frac{2N_p\,t^2}{(d_\infty^{\max})^2}\right).
	\end{equation}

	Define the events $E_1 \triangleq \{\sup_z |S(z)/N_p - F(z)| \leq t\}$ and $E_2 \triangleq \{|W/N_p - 1| \leq t\}$. On $E_1 \cap E_2$, by \eqref{eq:self_norm_decomp}, \eqref{eq:term_A_bound}, and the triangle inequality:
	\begin{equation}
		\sup_z |\hat{F}_w(z) - F(z)| \leq \sup_z |\text{(A)}| + \sup_z |\text{(B)}| \leq t + t = 2t.
	\end{equation}
	Therefore $\{\sup_z |\hat{F}_w(z) - F(z)| > 2t\} \subseteq E_1^c \cup E_2^c$, and by the union bound, \eqref{eq:unnorm_dkw}, and \eqref{eq:weight_concentration}:
	\begin{equation}
		P\!\left(\sup_z |\hat{F}_w(z) - F(z)| > 2t\right) \leq 2\!\left(\!\left\lceil \frac{2d_\infty^{\max}}{t} \right\rceil + 1\right)\exp\!\left(-\frac{N_p\,t^2}{2(d_\infty^{\max})^2}\right) + 2\exp\!\left(-\frac{2N_p\,t^2}{(d_\infty^{\max})^2}\right).
	\end{equation}
	The second term has a larger exponent, so the first dominates. Substituting $t' = 2t$:
	\begin{equation}
		P\!\left(\sup_z |\hat{F}_w(z) - F(z)| > t'\right) \leq 2\!\left(\!\left\lceil \frac{4d_\infty^{\max}}{t'} \right\rceil + 2\right)\exp\!\left(-\frac{N_p\,t'^2}{8(d_\infty^{\max})^2}\right).
	\end{equation}
\end{proof}

\section{Proof of Theorem~\ref{thm:concrete_approximation}}\label{proof:concrete_approximation}

\subsection{State Trajectory Coupling}\label{sec:ss_omega_framework}

Define $\omega = (x_0, z_1, x_1, z_2, \ldots, x_T) \in (X \times Z)^T \times X$ to be a \emph{state trajectory}: a realization of the true hidden states and observations over the planning horizon. Both the true belief $b_\ell(\omega)$ and the particle belief $\bar{b}_\ell(\omega)$ are deterministic functions of the same trajectory $\omega$:
\begin{itemize}
	\item $b_\ell(\omega)$ is the Bayesian posterior given observations $z_{1:\ell}$, computed via \eqref{eq:belief_update};
	\item $\bar{b}_\ell(\omega) = \{(x_i^{(\ell)}, w_i^{(\ell)})\}_{i=1}^{N_p}$ is the particle belief with $x_i^{(\ell)} \stackrel{\text{i.i.d.}}{\sim} q_\ell$ and importance weights $w_i^{(\ell)} = \prod_{n=1}^{\ell} O(z_n \mid x_i^{(n)})$, as defined in \Cref{sec:pb_mdp}.
\end{itemize}
Because the particles are i.i.d.\ from $q_\ell$ at each depth $\ell$, and both $b_\ell(\omega)$ and $\bar{b}_\ell(\omega)$ are functions of the same $\omega$, the Q-functions $Q_{M,t}^\pi = \mathbb{E}_\omega[R_M(\omega)]$ and $Q_{M_P,t}^\pi = \mathbb{E}_\omega[R_{M_P}(\omega)]$ are expectations over the same measure $P_\omega$.

\subsection{Proof of \Cref{prop:cvar_cost_approximation}}\label{sec:coupling_bound}

The proof uses the state trajectory coupling from \Cref{sec:ss_omega_framework}.

\begin{proof}
	Fix $\omega \in \mathcal{G}$. Define the per-trajectory returns:
	\begin{equation}
		R_M(\omega) \triangleq \sum_{\ell=t}^{T} \gamma^{\ell-t}\, c_\alpha(b_\ell(\omega), a_\ell), \qquad
		R_{M_P}(\omega) \triangleq \sum_{\ell=t}^{T} \gamma^{\ell-t}\, \rho_\alpha(\bar{b}_\ell(\omega), a_\ell),
	\end{equation}
	where $a_t = a$ and $a_\ell = \pi(h_\ell)$ for $\ell > t$. By the triangle inequality:
	\begin{equation}\label{eq:pointwise_triangle_app}
		\left| R_M(\omega) - R_{M_P}(\omega) \right|
		\leq \sum_{\ell=t}^{T} \gamma^{\ell-t}\, \left| c_\alpha(b_\ell(\omega), a_\ell) - \rho_\alpha(\bar{b}_\ell(\omega), a_\ell) \right|.
	\end{equation}

	At each step $\ell$, $c_\alpha(b_\ell(\omega), a_\ell)$ and $\rho_\alpha(\bar{b}_\ell(\omega), a_\ell)$ are $\text{CVaR}_\alpha$ of the cost $c(x, a_\ell)$ under $x \sim b_\ell(\omega)$ and $x \sim \bar{b}_\ell(\omega)$, respectively. Since $\omega \in \mathcal{G}$, the CDF discrepancy between these distributions is at most $\varepsilon(\ell, N_p) < \alpha$. By \Cref{lem:cvar_cost_sensitivity}:
	\begin{equation}
		\left| c_\alpha(b_\ell(\omega), a_\ell) - \rho_\alpha(\bar{b}_\ell(\omega), a_\ell) \right| \leq \frac{\Delta\rho}{\alpha}\, \varepsilon(\ell, N_p).
	\end{equation}

	Substituting into \eqref{eq:pointwise_triangle_app}:
	\begin{equation}
		\left| R_M(\omega) - R_{M_P}(\omega) \right| \leq \frac{\Delta\rho}{\alpha} \sum_{\ell=t}^{T} \gamma^{\ell-t}\, \varepsilon(\ell, N_p)
	\end{equation}
	for every $\omega \in \mathcal{G}$. The right-hand side does not depend on $\omega$, so taking expectations preserves the bound:
	\begin{equation}
		\begin{aligned}
			\left| Q_{M,t}^\pi - Q_{M_P,t}^\pi \right|
			&= \left| \mathbb{E}_\omega[R_M(\omega)] - \mathbb{E}_\omega[R_{M_P}(\omega)] \right|
			\leq \mathbb{E}_\omega\!\left[ |R_M(\omega) - R_{M_P}(\omega)| \right] \\
			&\leq \frac{\Delta\rho}{\alpha} \sum_{\ell=t}^{T} \gamma^{\ell-t}\, \varepsilon(\ell, N_p),
		\end{aligned}
	\end{equation}
	with probability at least $P(\mathcal{G}) \geq 1 - \delta_0$.
\end{proof}

\subsection{Completing the Proof of Theorem~\ref{thm:concrete_approximation}}

Under the state trajectory coupling (\Cref{sec:ss_omega_framework}), the particles $x_i^{(\ell)} \stackrel{\text{i.i.d.}}{\sim} q_\ell$ are independent of $\omega$, while the importance weights $w_i^{(\ell)} = \prod_{n=1}^{\ell} O(z_n \mid x_i^{(n)})$ depend on $\omega$ through the observations. For a fixed policy $\pi$ and query action $a$, the action at each step is determined: $a_t = a$ and $a_\ell = \pi(b_\ell)$ for $\ell > t$, so the CDF discrepancy only needs to hold for one action per step.

Fix $\omega$ and $\ell \in \{t, \ldots, T\}$. Conditional on $\omega$, the belief $b_\ell(\omega)$ is a fixed distribution and $d_\infty(b_\ell(\omega) \| q_\ell) \leq d_\infty^{\max}$ (condition (v) of \citet{lim2023optimality}). Applying \Cref{lem:weighted_dkw} for the single action $a_\ell$:
\begin{equation}\label{eq:dkw_conditional}
	\begin{aligned}
		&P\!\left(\sup_{z} \left| P_{b_\ell(\omega)}(c(x, a_\ell) \leq z) - \sum_{i=1}^{N_p} \tilde{w}_i^{(\ell)}\, \mathbf{1}(c(x_i^{(\ell)}, a_\ell) \leq z) \right| > t \;\middle|\; \omega \right) \\
		&\qquad \leq 2\!\left(\!\left\lceil \frac{4d_\infty^{\max}}{t} \right\rceil + 2\right)\exp\!\left(-\frac{N_p\, t^2}{8(d_\infty^{\max})^2}\right).
	\end{aligned}
\end{equation}
Since the right-hand side does not depend on $\omega$, marginalizing over $\omega$ preserves the bound:
\begin{equation}\label{eq:dkw_unconditional}
	P\!\left(\varepsilon(\ell, N_p, \omega) > t \right)
	\leq 2\!\left(\!\left\lceil \frac{4d_\infty^{\max}}{t} \right\rceil + 2\right)\exp\!\left(-\frac{N_p\, t^2}{8(d_\infty^{\max})^2}\right).
\end{equation}
Taking a union bound over the $D+1$ time steps $\ell \in \{t, \ldots, T\}$ and setting the right-hand side equal to $\delta_0/(D+1)$, we solve for $t$. Denoting $C(t) \triangleq 2(\lceil 4d_\infty^{\max}/t \rceil + 2)$, the condition $C(t)(D+1)\exp(-N_p t^2/(8(d_\infty^{\max})^2)) \leq \delta_0$ is satisfied by
\begin{equation}
	\varepsilon(\ell, N_p) \leq d_\infty^{\max}\sqrt{\frac{8\ln\!\left(\frac{C(\varepsilon)(D+1)}{\delta_0}\right)}{N_p}} \quad \text{for all } \ell \in \{t, \ldots, T\},
\end{equation}
with probability at least $1 - \delta_0$, where $C(\varepsilon) = 2(\lceil 4d_\infty^{\max}/\varepsilon \rceil + 2)$. For $\varepsilon \leq d_\infty^{\max}$, $C(\varepsilon) \leq 12d_\infty^{\max}/\varepsilon$, so the bound is implicitly determined but scales as $O(d_\infty^{\max}\sqrt{\ln(d_\infty^{\max}(D+1)/(\varepsilon\,\delta_0))/N_p})$. In particular, the condition $N_p = \tilde{\Omega}((d_\infty^{\max}/\alpha)^2)$ ensures $\varepsilon(\ell, N_p) < \alpha$, as required by \Cref{prop:cvar_cost_approximation}. Substituting the uniform bound into \eqref{eq:approximation_bound}:
\begin{equation}
	\begin{aligned}
	|Q_{M,t}^\pi(b_t, a, \alpha) - Q_{M_P,t}^\pi(\bar{b}_t, a, \alpha)|
	&\leq \frac{\Delta\rho}{\alpha} \sum_{\ell=t}^{T} \gamma^{\ell-t}\, \varepsilon(\ell, N_p)
	\leq \frac{\Delta\rho\, d_\infty^{\max}}{\alpha} \cdot G_{D+1} \cdot \varepsilon,
	\end{aligned}
\end{equation}
where $\varepsilon$ satisfies the implicit bound above and $G_{D+1} = (1 - \gamma^{D+1})/(1-\gamma)$. \qed

\section{Alternative Approximation Bound via Induction}\label{sec:alternative_approx}

We present an alternative bound that explicitly separates the two sources of approximation error---CVaR cost discrepancy ($\varepsilon_C$) and observation distribution mismatch ($\varepsilon_{TV}$)---using backward induction rather than the coupling-based approach of \Cref{prop:cvar_cost_approximation}.

Assume that for each time step $\ell \in \{t, \ldots, T\}$ and all actions $a \in A$, the coupled belief pair $(b_\ell, \bar{b}_\ell)$ satisfies, with probability at least $1 - 2\delta_0$:
\begin{enumerate}
	\item \textbf{Cost CDF discrepancy:} $\displaystyle\sup_{z} \left|P_{b_\ell}(c(x, a) \leq z) - \textstyle\sum_{i=1}^{N_p} \tilde{w}_i \mathbf{1}(c(x_i, a) \leq z)\right| \leq \varepsilon_C$.
	\item \textbf{Observation TV distance:} $d_{TV}\!\left(P_o(\cdot \mid b_\ell, a),\; P_o^{PB}(\cdot \mid \bar{b}_\ell, a)\right) \leq \varepsilon_{TV}$.
\end{enumerate}

\begin{proposition}[Two-Source Approximation Bound]\label{prop:two_source_approx}
	Under the assumptions above with $\varepsilon_C < \alpha$, for any policy $\pi$, realizable belief $b_t$, corresponding particle belief $\bar{b}_t$ with $N_p$ particles, and all actions $a$:
	\begin{equation}\label{eq:two_source_bound}
		|Q_{M,t}^\pi(b_t, a, \alpha) - Q_{M_P,t}^\pi(\bar{b}_t, a, \alpha)| \leq \frac{\Delta\rho}{\alpha}\,\varepsilon_C \cdot G_{D+1} + \gamma\,V_{\max}\,\varepsilon_{TV} \cdot G_D,
	\end{equation}
	with probability at least $1 - 2(T - t + 1)\delta_0$, where $D = T - t$, $V_{\max} = \Delta\rho\,G_T$, and $G_n = \frac{1 - \gamma^n}{1 - \gamma}$.
\end{proposition}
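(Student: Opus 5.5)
We argue by backward induction on $\ell$ from $T$ down to $t$, defining the per-step error $e_\ell \triangleq \sup_a |Q_{M,\ell}^\pi(b_\ell, a, \alpha) - Q_{M_P,\ell}^\pi(\bar{b}_\ell, a, \alpha)|$ on the good event. The aim is the recursion
\begin{equation*}
e_\ell \leq \frac{\Delta\rho}{\alpha}\varepsilon_C + \gamma V_{\max}\varepsilon_{TV} + \gamma e_{\ell+1}, \qquad e_{T} \leq \frac{\Delta\rho}{\alpha}\varepsilon_C .
\end{equation*}
Unrolled over the $D+1$ cost terms and the $D$ transition terms, this recursion gives exactly $\frac{\Delta\rho}{\alpha}\varepsilon_C G_{D+1} + \gamma V_{\max}\varepsilon_{TV} G_D$.

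\textbf{Base case.} At $\ell = T$ the continuation values vanish, so $e_T = |c_\alpha(b_T,a) - \rho_\alpha(\bar{b}_T,a)|$. The cost-CDF assumption with $\varepsilon_C<\alpha$ lets us apply \Cref{lem:cvar_cost_sensitivity} to the two cost distributions (true vs.\ weighted empirical, both supported on $[c_{\min},c_{\max}]$). This gives $e_T \leq (\Delta\rho/\alpha)\varepsilon_C$.

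\textbf{Inductive step.} Write the Q-difference at depth $\ell$ as a cost difference plus $\gamma$ times a difference of continuation expectations. The cost difference is bounded by $(\Delta\rho/\alpha)\varepsilon_C$ as in the base case. For the continuation we add and subtract $\mathbb{E}_{z\sim P_o(\cdot\mid b_\ell,a)}[V_{M_P,\ell+1}^\pi(\bar{b}_{\ell+1}(z))]$, which splits it into two terms:
\begin{itemize}
\item \emph{Value mismatch under a common observation law.} The difference $\mathbb{E}_{P_o}[V_{M,\ell+1}^\pi(b_{\ell+1}(z)) - V_{M_P,\ell+1}^\pi(\bar{b}_{\ell+1}(z))]$ is at most $e_{\ell+1}$ by the inductive hypothesis, applied at $a=\pi(\cdot)$.
\item \emph{Observation-law mismatch.} The difference of expectations of the bounded function $z\mapsto V_{M_P,\ell+1}^\pi(\bar{b}_{\ell+1}(z))$ under $P_o$ versus $P_o^{PB}$. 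This is at most $V_{\max}\, d_{TV}$, hence at most $V_{\max}\varepsilon_{TV}$.
\end{itemize}
For the second term, the bound uses that a function whose oscillation is at most $V_{\max}$ has expectation gap at most (oscillation)$\times d_{TV}$. Here the value function is a discounted sum of at most $T$ costs each of range $\Delta\rho$, so the oscillation is at most $\Delta\rho G_T = V_{\max}$. Centering the function at its midpoint would even give a constant $V_{\max}/2$, so this bound is not tight.

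\textbf{Probability.} The two assumptions hold at each depth with probability at least $1-2\delta_0$. A union bound over the $T-t+1$ depths gives the stated confidence $1-2(T-t+1)\delta_0$.

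\textbf{Main obstacle.} The delicate point is making the first continuation term rigorous. The inductive hypothesis must hold at the successor pairs $(b_{\ell+1}(z), \bar{b}_{\ell+1}(z))$ for all observations $z$ integrated against $P_o$, not just at a single coupled pair. I would handle this by interpreting the assumptions as holding uniformly along the coupled trajectory $\omega$ of \Cref{sec:ss_omega_framework}. Concretely, the good event is defined as the intersection over depths of the per-depth events, and the induction is run pathwise on that event. This is the same reading under which \Cref{prop:cvar_cost_approximation} is stated "with probability", and it explains why the probability statement is per-depth rather than over a branching tree.
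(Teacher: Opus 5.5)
Your proposal follows the paper's proof essentially step for step: the same backward recursion $\beta_\ell=\lambda_\alpha+\mu+\gamma\beta_{\ell+1}$ with $\beta_T=\lambda_\alpha$ (where $\lambda_\alpha=\frac{\Delta\rho}{\alpha}\varepsilon_C$ and $\mu=\gamma V_{\max}\varepsilon_{TV}$), the same add-and-subtract of $\mathbb{E}_M[V^\pi_{M_P,\ell+1}]$ that splits the continuation into an inductive value-mismatch term and a $V_{\max}\varepsilon_{TV}$ observation-shift term, and the same union bound over the $D+1$ depths. The paper handles the successor-pair issue you single out exactly as you propose, by conditioning on $\mathcal{G}=\bigcap_\ell\mathcal{G}_\ell$ and invoking the inductive hypothesis for the value-mismatch term; however, an event on one coupled path does not by itself control that term, which integrates over all successor observations, so this step is equally informal in both arguments unless the two conditions are assumed to hold uniformly over reachable belief pairs.
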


\begin{proof}
	The proof proceeds by backward induction on $\ell$, following \citet{lim2023optimality} and \citet{pariente2026acceleratedonlineriskaversepolicy}. Define $\lambda_\alpha \triangleq \frac{\Delta\rho}{\alpha}\varepsilon_C$, $\mu \triangleq \gamma V_{\max} \varepsilon_{TV}$, and the ``good'' event $\mathcal{G} \triangleq \bigcap_{\ell=t}^{T} \mathcal{G}_\ell$, where $\mathcal{G}_\ell$ is the event that conditions (1)--(2) hold at time $\ell$.

	\textbf{Inductive Hypothesis.} Conditioned on $\mathcal{G}$, for all $\ell \in \{t, \ldots, T\}$:
	\begin{equation}\label{eq:two_source_ih}
		|Q_{M,\ell}^\pi - Q_{M_P,\ell}^\pi| \leq \beta_\ell, \quad \text{where } \beta_\ell = \lambda_\alpha + \mu + \gamma\,\beta_{\ell+1}, \quad \beta_T = \lambda_\alpha.
	\end{equation}

	\textbf{Base Case ($\ell = T$).} The Q-functions reduce to immediate CVaR costs. By \Cref{lem:cvar_cost_sensitivity} and condition (1):
	\begin{equation}
		|Q_{M,T}^\pi - Q_{M_P,T}^\pi| = |c_\alpha(b_T, a) - \rho_\alpha(\bar{b}_T, a)| \leq \lambda_\alpha = \beta_T.
	\end{equation}

	\textbf{Inductive Step.} Assume the bound holds at $\ell+1$. From the Q-function definitions:
	\begin{equation}\label{eq:two_source_decomp}
		\begin{aligned}
			|Q_{M,\ell}^\pi - Q_{M_P,\ell}^\pi|
			&= \big| \big(c_\alpha(b_\ell, a_\ell) + \gamma\,\mathbb{E}_{M}[V_{M,\ell+1}^\pi]\big) - \big(\rho_\alpha(\bar{b}_\ell, a_\ell) + \gamma\,\mathbb{E}_{M_P}[V_{M_P,\ell+1}^\pi]\big) \big| \\
			&\leq \underbrace{|c_\alpha(b_\ell, a_\ell) - \rho_\alpha(\bar{b}_\ell, a_\ell)|}_{\text{(I)}} + \gamma\,\underbrace{|\mathbb{E}_{M}[V_{M,\ell+1}^\pi] - \mathbb{E}_{M_P}[V_{M_P,\ell+1}^\pi]|}_{\text{(II)}}.
		\end{aligned}
	\end{equation}
	Term (I) is bounded by $\lambda_\alpha$ via \Cref{lem:cvar_cost_sensitivity} and condition (1). For term (II), add and subtract $\mathbb{E}_M[V_{M_P,\ell+1}^\pi]$:
	\begin{equation}
		\text{(II)} \leq \underbrace{\mathbb{E}_{M}\!\left[|V_{M,\ell+1}^\pi - V_{M_P,\ell+1}^\pi|\right]}_{\text{(IIa): recursive discrepancy}} + \underbrace{|\mathbb{E}_{M}[V_{M_P,\ell+1}^\pi] - \mathbb{E}_{M_P}[V_{M_P,\ell+1}^\pi]|}_{\text{(IIb): observation distribution shift}}.
	\end{equation}
	Term (IIa) is bounded by $\beta_{\ell+1}$ via the induction hypothesis. For term (IIb), since $V_{M_P,\ell+1}^\pi$ has range $[0, V_{\max}]$ and the observation distributions differ by $d_{TV} \leq \varepsilon_{TV}$ (condition 2):
	\begin{equation}
		|\mathbb{E}_{M}[V_{M_P,\ell+1}^\pi] - \mathbb{E}_{M_P}[V_{M_P,\ell+1}^\pi]| \leq V_{\max}\,\varepsilon_{TV}.
	\end{equation}
	Combining:
	\begin{equation}
		|Q_{M,\ell}^\pi - Q_{M_P,\ell}^\pi| \leq \lambda_\alpha + \gamma(V_{\max}\,\varepsilon_{TV} + \beta_{\ell+1}) = \lambda_\alpha + \mu + \gamma\,\beta_{\ell+1} = \beta_\ell.
	\end{equation}

	\textbf{Unrolling.} The recursion $\beta_\ell = \lambda_\alpha + \mu + \gamma\,\beta_{\ell+1}$ with $\beta_T = \lambda_\alpha$ gives:
	\begin{equation}
		\beta_t = \lambda_\alpha \sum_{k=0}^{D} \gamma^k + \mu \sum_{k=0}^{D-1} \gamma^k = \lambda_\alpha\,G_{D+1} + \mu\,G_D.
	\end{equation}
	By union bound, $P(\mathcal{G}^c) \leq 2(T-t+1)\delta_0$.
\end{proof}

\section{Extended Results}\label{sec:extended_results}

\Cref{tab:extended_results} presents the full experimental results including the expected return $\mathbb{E}[G]$, static CVaR of the cumulative return $\text{CVaR}_{0.1}[G]$, CVaR cost return, dangerous encounters, and goal rate. The expected return and static CVaR use the standard (non-CVaR) immediate cost, i.e., the return $G = \sum_{k=0}^{T-1}\gamma^k c(b_k, a_k)$ where $c(b_k,a_k) = \mathbb{E}_{x \sim b_k}[c(x,a_k)]$, with CVaR computed over the distribution of episode returns. All results are over 500 episodes.

\begin{table}[h]
	\centering
	\caption{Extended results across 500 episodes with $\alpha = 0.1$, $\gamma = 0.95$. $\mathbb{E}[G]$: expected cumulative return (mean $\pm$ 95\% CI). $\text{CVaR}_{0.1}[G]$: static CVaR of the cumulative return distribution (point estimate; confidence intervals omitted due to high variance of this tail statistic). CVaR Cost Return, Danger Encounters, and Goal Rate are as in \Cref{tab:cvar_cost_results}.}\label{tab:extended_results}
	\small
	\setlength{\tabcolsep}{3.5pt}
	\begin{tabular}{llccccc}
		\toprule
		Environment & Planner & $\mathbb{E}[G]$ & $\text{CVaR}_{0.1}[G]$ & CVaR Cost Return & Danger & Goal \\
		\midrule
		\multirow{6}{*}{LightDark}
		& PFT-DPW            & $-96.1 \pm 15.3$ & $-463.4$ & $777.7 \pm 26.8$ & $2.16 \pm 0.09$ & $100\%$ \\
		& ICVaR-PFT-DPW      & $-49.1 \pm 9.6$ & $-290.5$ & $250.5 \pm 24.1$ & $0.97 \pm 0.11$ & $100\%$ \\
		& CVaRCost-PFT-DPW   & $\mathbf{-30.1 \pm 6.5}$ & $\mathbf{-186.1}$ & $\mathbf{144.8 \pm 11.1}$ & $\mathbf{0.61 \pm 0.07}$ & $100\%$ \\
		& POMCPOW            & $-104.1 \pm 15.3$ & $-451.9$ & $820.3 \pm 26.7$ & $2.28 \pm 0.09$ & $100\%$ \\
		& ICVaR-POMCPOW      & $-21.1 \pm 3.5$ & $-76.6$ & $58.6 \pm 6.6$ & $0.31 \pm 0.06$ & $100\%$ \\
		& CVaRCost-POMCPOW   & $\mathbf{-23.3 \pm 3.6}$ & $\mathbf{-86.2}$ & $\mathbf{87.9 \pm 6.5}$ & $\mathbf{0.45 \pm 0.06}$ & $100\%$ \\
		\midrule
		\multirow{6}{*}{PushPOMDP}
		& PFT-DPW            & $-13.6 \pm 1.3$ & $-51.8$ & $47.4 \pm 3.0$ & $2.83 \pm 0.17$ & $68 \pm 4\%$ \\
		& ICVaR-PFT-DPW      & $-17.6 \pm 1.6$ & $-62.5$ & $76.8 \pm 4.4$ & $3.60 \pm 0.22$ & $59 \pm 4\%$ \\
		& CVaRCost-PFT-DPW   & $\mathbf{-15.2 \pm 0.7}$ & $\mathbf{-33.3}$ & $\mathbf{21.0 \pm 0.8}$ & $\mathbf{1.78 \pm 0.11}$ & $27 \pm 4\%$ \\
		& POMCPOW            & $-12.2 \pm 1.3$ & $-44.9$ & $31.2 \pm 2.3$ & $2.79 \pm 0.19$ & $64 \pm 4\%$ \\
		& ICVaR-POMCPOW      & $-12.5 \pm 0.9$ & $-33.1$ & $22.7 \pm 1.3$ & $2.18 \pm 0.14$ & $71 \pm 4\%$ \\
		& CVaRCost-POMCPOW   & $\mathbf{-13.9 \pm 0.5}$ & $\mathbf{-22.4}$ & $\mathbf{17.7 \pm 1.0}$ & $\mathbf{1.66 \pm 0.11}$ & $33 \pm 4\%$ \\
		\midrule
		\multirow{6}{*}{Cont.\ LaserTag}
		& PFT-DPW            & $-13.3 \pm 2.2$ & $-67.3$ & $119.9 \pm 11.6$ & $2.38 \pm 0.26$ & $100\%$ \\
		& ICVaR-PFT-DPW      & $-6.2 \pm 2.0$ & $-60.8$ & $77.6 \pm 9.1$ & $1.59 \pm 0.20$ & $100\%$ \\
		& CVaRCost-PFT-DPW   & $-6.3 \pm 0.7$ & $\mathbf{-20.3}$ & $\mathbf{12.9 \pm 1.4}$ & $\mathbf{0.14 \pm 0.04}$ & $100\%$ \\
		& POMCPOW            & $\mathbf{-1.0 \pm 3.5}$ & $-83.2$ & $188.0 \pm 8.0$ & $4.03 \pm 0.14$ & $100\%$ \\
		& ICVaR-POMCPOW      & $-5.5 \pm 0.3$ & $-11.4$ & $8.1 \pm 0.6$ & $0.02 \pm 0.02$ & $100\%$ \\
		& CVaRCost-POMCPOW   & $\mathbf{-4.3 \pm 0.5}$ & $\mathbf{-11.5}$ & $\mathbf{5.4 \pm 0.5}$ & $\mathbf{0.04 \pm 0.03}$ & $100\%$ \\
		\midrule
		\multirow{6}{*}{PacMan}
		& PFT-DPW            & $-5.2 \pm 1.3$ & $-43.0$ & $57.3 \pm 3.1$ & $0.73 \pm 0.04$ & --- \\
		& ICVaR-PFT-DPW      & $-7.7 \pm 1.4$ & $-48.9$ & $57.5 \pm 3.1$ & $0.76 \pm 0.04$ & --- \\
		& CVaRCost-PFT-DPW   & $-14.3 \pm 1.0$ & $\mathbf{-42.7}$ & $\mathbf{39.5 \pm 1.8}$ & $\mathbf{0.53 \pm 0.04}$ & --- \\
		& POMCPOW            & $-9.6 \pm 2.0$ & $-67.6$ & $95.6 \pm 5.0$ & $0.90 \pm 0.03$ & --- \\
		& ICVaR-POMCPOW      & $\mathbf{-6.3 \pm 1.3}$ & $-41.6$ & $43.4 \pm 2.0$ & $0.63 \pm 0.04$ & --- \\
		& CVaRCost-POMCPOW   & $-14.5 \pm 1.1$ & $\mathbf{-45.5}$ & $\mathbf{37.7 \pm 1.5}$ & $\mathbf{0.57 \pm 0.04}$ & --- \\
		\bottomrule
	\end{tabular}
\end{table}

The extended results reveal several patterns. First, CVaR cost planners consistently achieve the best (least negative) static CVaR of the cumulative return across all environments, even though they optimize the CVaR cost return rather than the static CVaR directly. This suggests that per-step tail risk reduction also reduces trajectory-level tail risk. Second, the expected return $\mathbb{E}[G]$ shows a nuanced picture: in LightDark, CVaR cost planners achieve better expected return (e.g., CVaRCost-POMCPOW $-23.3$ vs.\ POMCPOW $-104.1$), indicating that avoiding dangerous states also improves expected performance. In ContinuousLaserTag, CVaRCost-PFT-DPW has slightly worse expected return ($-6.3$ vs.\ $-1.0$ for POMCPOW) because it avoids the zero-expected-value danger zones that occasionally yield positive rewards, but achieves far better tail risk ($-20.3$ vs.\ $-83.2$ static CVaR). In PacMan, CVaR cost planners trade slightly worse expected return for substantially better tail risk and fewer ghost collisions.

\section{Ablation Studies}\label{sec:ablation_studies}

We study the sensitivity of CVaR cost planning to two key parameters: the risk level $\alpha$ and the particle count $N_p$. All ablation experiments use PFT-DPW as the base planner on LightDark and ContinuousLaserTag, with 300 episodes per configuration and the same parameters as in \Cref{sec:experiments} unless otherwise noted.

\subsection{Sensitivity to Risk Level $\alpha$}

\Cref{fig:alpha_ablation} shows the CVaR cost return as $\alpha$ varies over $\{0.01, 0.05, 0.1, 0.25, 0.5, 1.0\}$ with $N_p = 200$ particles.

\begin{figure}[h]
	\centering
	\includegraphics[width=\textwidth]{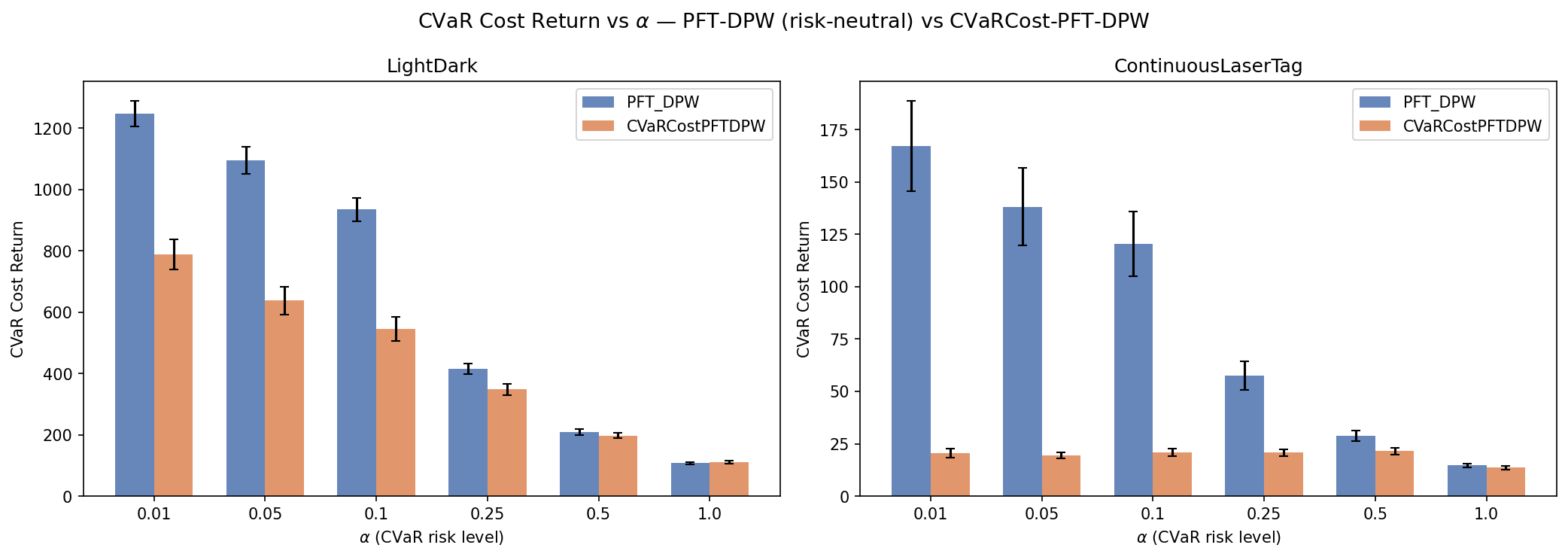}
	\caption{CVaR cost return vs.\ risk level $\alpha$ for PFT-DPW (risk-neutral) and CVaRCost-PFT-DPW on LightDark (left) and ContinuousLaserTag (right). Error bars show 95\% confidence intervals over 300 episodes. At $\alpha = 1$, both planners converge to the same performance, confirming that CVaR cost recovers standard expectation-based planning.}
	\label{fig:alpha_ablation}
\end{figure}

Two observations stand out. First, at $\alpha = 1$ the CVaR cost planner recovers risk-neutral performance: in LightDark, CVaRCost-PFT-DPW ($109.9$) matches PFT-DPW ($107.3$) with overlapping confidence intervals, and similarly in ContinuousLaserTag ($13.6$ vs.\ $14.6$). This confirms the theoretical property that CVaR cost reduces to expected cost when $\alpha = 1$. Second, as $\alpha$ decreases the gap between CVaRCost-PFT-DPW and PFT-DPW widens, with CVaRCost-PFT-DPW consistently achieving lower CVaR cost return. In ContinuousLaserTag, CVaRCost-PFT-DPW maintains a nearly flat CVaR cost return ($\approx 20$) across all $\alpha \leq 0.5$, while PFT-DPW degrades sharply from $14.6$ at $\alpha = 1$ to $167.1$ at $\alpha = 0.01$. This demonstrates that CVaR cost planning effectively suppresses tail risk regardless of the specific $\alpha$ value, while the risk-neutral planner's CVaR cost return increases as the metric focuses on more extreme tails.

\subsection{Sensitivity to Particle Count $N_p$}

\Cref{fig:particle_ablation} shows the CVaR cost return as $N_p$ varies over $\{50, 100, 200, 500\}$ with $\alpha = 0.1$.

\begin{figure}[h]
	\centering
	\includegraphics[width=\textwidth]{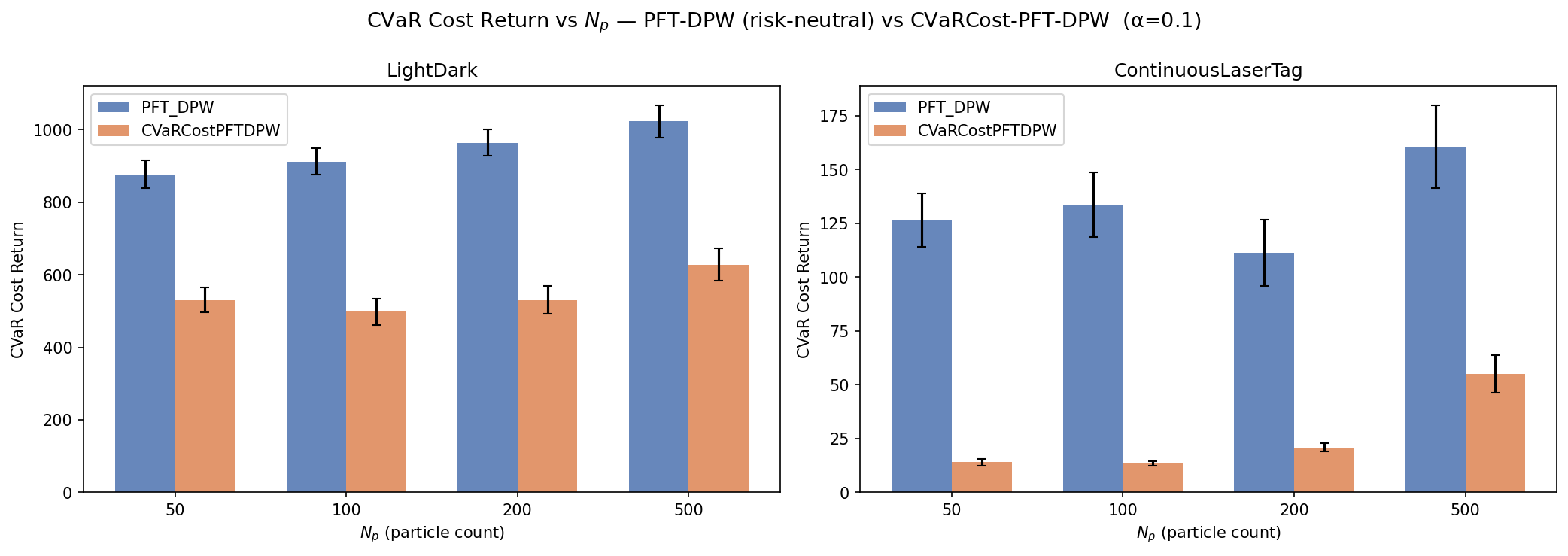}
	\caption{CVaR cost return vs.\ particle count $N_p$ for PFT-DPW (risk-neutral) and CVaRCost-PFT-DPW on LightDark (left) and ContinuousLaserTag (right). Error bars show 95\% confidence intervals over 300 episodes. Fixed $\alpha = 0.1$.}
	\label{fig:particle_ablation}
\end{figure}

The CVaR cost advantage over the risk-neutral baseline is robust across all particle counts. In LightDark, CVaRCost-PFT-DPW maintains a consistent advantage over PFT-DPW at every $N_p$ (e.g., $498.0$ vs.\ $912.6$ at $N_p = 100$). In ContinuousLaserTag, CVaRCost-PFT-DPW achieves substantially lower CVaR cost return than PFT-DPW across all particle counts, though both planners' CVaR cost return increases at $N_p = 500$, likely due to the computational overhead of maintaining more particles within the fixed 4-second planning timeout, which reduces the effective search depth.

\section{Hyperparameter Tuning}\label{sec:hyperparameter_tuning}

All planners (PFT-DPW, CVaRCost-PFT-DPW, ICVaR-PFT-DPW, POMCPOW, CVaRCost-POMCPOW, ICVaR-POMCPOW) were tuned independently per environment using Optuna (50 trials, 10 episodes per trial), minimizing the CVaR cost return for all planners to ensure a fair comparison. The tuned parameters are the search tree depth, UCB exploration constant, and progressive widening coefficients ($k_a$, $\alpha_a$, $k_o$, $\alpha_o$). For each (environment, planner) pair, we selected the better result between the tuned and default parameters. \Cref{tab:tuned_params} reports the final hyperparameters used in all experiments.

\begin{table}[h]
	\centering
	\caption{Tuned hyperparameters per (environment, planner) pair. All planners use $\gamma = 0.95$, $\alpha = 0.1$, $N_p = 200$ particles, and 4-second planning timeout.}\label{tab:tuned_params}
	\small
	\setlength{\tabcolsep}{3pt}
	\begin{tabular}{llrrrrrr}
		\toprule
		Environment & Planner & Depth & Expl.\ Const. & $k_a$ & $\alpha_a$ & $k_o$ & $\alpha_o$ \\
		\midrule
		\multirow{6}{*}{LightDark}
		& PFT-DPW          & 2  & 9534.9  & 1  & 0.28 & 6  & 0.49 \\
		& CVaRCost-PFT-DPW & 2  & 650.3   & 10 & 0.50 & 10 & 0.22 \\
		& ICVaR-PFT-DPW    & 2  & 9657.0  & 6  & 0.01 & 1  & 0.19 \\
		& POMCPOW          & 15 & 8000.9  & 6  & 0.35 & 6  & 0.36 \\
		& CVaRCost-POMCPOW & 2  & 11595.5 & 4  & 0.48 & 7  & 0.46 \\
		& ICVaR-POMCPOW    & 3  & 5659.7  & 10 & 0.39 & 1  & 0.19 \\
		\midrule
		\multirow{6}{*}{PushPOMDP}
		& PFT-DPW          & 15 & 403.2   & 6  & 0.30 & 9  & 0.49 \\
		& CVaRCost-PFT-DPW & 4  & 4108.1  & 9  & 0.23 & 7  & 0.45 \\
		& ICVaR-PFT-DPW    & 5  & 2512.5  & 6  & 0.17 & 6  & 0.40 \\
		& POMCPOW          & 10 & 3727.5  & 5  & 0.40 & 2  & 0.21 \\
		& CVaRCost-POMCPOW & 2  & 2585.7  & 7  & 0.20 & 1  & 0.28 \\
		& ICVaR-POMCPOW    & 2  & 2127.3  & 5  & 0.33 & 2  & 0.16 \\
		\midrule
		\multirow{6}{*}{Cont.\ LaserTag}
		& PFT-DPW          & 6  & 931.9   & 1  & 0.13 & 1  & 0.02 \\
		& CVaRCost-PFT-DPW & 2  & 240.0   & 3  & 0.11 & 2  & 0.24 \\
		& ICVaR-PFT-DPW    & 2  & 568.1   & 1  & 0.50 & 5  & 0.14 \\
		& POMCPOW          & 11 & 192.3   & 1  & 0.30 & 4  & 0.10 \\
		& CVaRCost-POMCPOW & 3  & 613.4   & 7  & 0.38 & 10 & 0.17 \\
		& ICVaR-POMCPOW    & 4  & 238.4   & 5  & 0.32 & 8  & 0.41 \\
		\midrule
		\multirow{6}{*}{PacMan}
		& PFT-DPW          & 2  & 5461.6  & 7  & 0.33 & 5  & 0.42 \\
		& CVaRCost-PFT-DPW & 5  & 6747.8  & 9  & 0.08 & 6  & 0.33 \\
		& ICVaR-PFT-DPW    & 3  & 5078.4  & 8  & 0.22 & 3  & 0.13 \\
		& POMCPOW          & 7  & 1143.3  & 1  & 0.11 & 1  & 0.01 \\
		& CVaRCost-POMCPOW & 6  & 4151.2  & 3  & 0.42 & 1  & 0.25 \\
		& ICVaR-POMCPOW    & 4  & 2256.5  & 6  & 0.25 & 2  & 0.29 \\
		\bottomrule
	\end{tabular}
\end{table}

\section{Bounding \texorpdfstring{$V^\pi_{M,0}(b_0, \alpha)$}{the Value Function} from Realized State Trajectories}\label{sec:realized_bound}

The main experiments (\Cref{sec:experiments}) estimate per-step CVaR using each planner's operating particle belief. We now ask whether the value function $V^\pi_{M,0}(b_0, \alpha)$ defined by the recursion \eqref{eq:cvar_cost_q_function_def} can be evaluated using only the realized state-dependent trajectories $\{(x_k^{(e)}, a_k^{(e)})\}_{k=0, e=1}^{T-1, N_{\mathrm{ep}}}$ from the $N_{\mathrm{ep}}$ episode rollouts, i.e., \emph{without} any belief reconstruction.

Unrolling \eqref{eq:cvar_cost_q_function_def} from $t = 0$ under policy $\pi$ with terminal condition $V^\pi_{M,T} \equiv 0$:
\begin{equation}\label{eq:value_unrolled}
	V^\pi_{M,0}(b_0, \alpha) \;=\; \mathbb{E}_H\!\left[\sum_{k=0}^{T-1} \gamma^k\, c_\alpha(b_k, a_k)\right], \qquad c_\alpha(b_k, a_k) = \underset{x \sim b_k}{\text{CVaR}_\alpha}[c(x, a_k)],
\end{equation}
where $\mathbb{E}_H$ is expectation over the random state-observation history $H = (x_0, z_1, x_1, \ldots, z_T, x_T)$ generated by $\pi$ from $b_0$ under the POMDP dynamics. The per-step inner CVaR $c_\alpha(b_k, a_k)$ depends on the posterior $b_k = b_k(z_{1:k}, a_{0:k-1})$, and realized state samples from separate episodes cannot reconstruct $b_k$: samples $\{x_k^{(e)}\}_e$ come from distinct observation histories, and the importance reweighting $\tilde{w}_i \propto \prod_n O(z_n \mid x_i^{(n)})$ that defines a belief is absent. Direct evaluation of $V^\pi_{M,0}(b_0, \alpha)$ from realized trajectories alone is therefore impossible. We give a rigorous upper bound via the law of total CVaR.

\begin{lemma}[Law of Total CVaR]\label{lem:total_cvar}
	Let $X$ be a random variable on $(\Omega, \mathcal{A}, P)$ with $\mathbb{E}[|X|] < \infty$, and let $\mathcal{F} \subseteq \mathcal{A}$ be a sub-$\sigma$-algebra. For any $\alpha \in (0, 1]$:
	\begin{equation}\label{eq:total_cvar_bound}
		\text{CVaR}_\alpha(X) \;\geq\; \mathbb{E}\!\left[\text{CVaR}_\alpha(X \mid \mathcal{F})\right].
	\end{equation}
\end{lemma}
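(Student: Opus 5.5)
The plan is to work directly from the Rockafellar--Uryasev variational formula \eqref{eq:cvar_def}, applied both unconditionally and conditionally. The conditional CVaR is
\begin{equation*}
\text{CVaR}_\alpha(X \mid \mathcal{F}) \triangleq \operatorname*{ess\,inf}_{W\ \mathcal{F}\text{-measurable}} \left\{ W + \tfrac{1}{\alpha}\mathbb{E}[(X-W)^+ \mid \mathcal{F}] \right\}.
\end{equation*}
Equivalently, it is the regular conditional distribution of $X$ given $\mathcal{F}$ plugged pointwise into \eqref{eq:cvar_def}. This random variable is $\mathcal{F}$-measurable and integrable, since $\mathbb{E}[X\mid\mathcal{F}] \le \text{CVaR}_\alpha(X\mid\mathcal{F})$ and the choice $W=\mathbb{E}[X\mid\mathcal{F}]$ gives an integrable upper bound.

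The key step is to take any deterministic $w\in\mathbb{R}$. Since a constant is in particular $\mathcal{F}$-measurable, it is a feasible choice in the conditional infimum, so almost surely
\begin{equation*}
\text{CVaR}_\alpha(X\mid\mathcal{F}) \le w + \tfrac{1}{\alpha}\mathbb{E}[(X-w)^+\mid\mathcal{F}].
\end{equation*}
Taking expectations and using the tower property gives
\begin{equation*}
\mathbb{E}[\text{CVaR}_\alpha(X\mid\mathcal{F})] \le w + \tfrac{1}{\alpha}\mathbb{E}[(X-w)^+].
\end{equation*}
The left-hand side does not depend on $w$, so taking the infimum over $w\in\mathbb{R}$ on the right yields exactly $\text{CVaR}_\alpha(X)$ by \eqref{eq:cvar_def}, which proves \eqref{eq:total_cvar_bound}.

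In words, the unconditional problem uses a single threshold $w$, while the conditional problem may adapt its threshold to the information in $\mathcal{F}$. This is an inf-of-expectation versus expectation-of-inf (Jensen-type) inequality.

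The main obstacle is measure-theoretic rather than conceptual: making $\text{CVaR}_\alpha(X\mid\mathcal{F})$ well defined and measurable. My first approach is to define it through a regular conditional distribution $P_{X\mid\mathcal{F}}(\omega,\cdot)$, which exists because $X$ is real-valued, and set $\text{CVaR}_\alpha(X\mid\mathcal{F})(\omega)=\inf_w\{w+\alpha^{-1}\int (x-w)^+P_{X\mid\mathcal{F}}(\omega,dx)\}$. Measurability then follows because the infimum can be restricted to rational $w$, by continuity of the objective in $w$. The pointwise inequality for fixed $w$ holds $\omega$-by-$\omega$ for the version $\int (x-w)^+P_{X\mid\mathcal{F}}(\omega,dx)=\mathbb{E}[(X-w)^+\mid\mathcal{F}](\omega)$ a.s. Integrability of $(X-w)^+$ follows from $\mathbb{E}|X|<\infty$, which justifies the tower step. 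For $\alpha=1$ the statement reduces to the equality $\mathbb{E}[X]=\mathbb{E}[\mathbb{E}[X\mid\mathcal{F}]]$, which is consistent with the result.
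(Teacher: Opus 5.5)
Your proposal is correct and is essentially the paper's own proof. Both fix a deterministic threshold $w$ and bound the conditional CVaR a.s.\ by $w+\frac{1}{\alpha}\mathbb{E}[(X-w)^+\mid\mathcal{F}]$. Both then take expectations using the tower property and infimize over $w$, which is the expectation-of-infimum $\leq$ infimum-of-expectation argument. Your measurability and integrability remarks (regular conditional distribution, restricting the infimum to rational $w$) add rigor that the paper passes over.
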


\begin{proof}
	Define $f(w, \omega) \triangleq w + \tfrac{1}{\alpha}\mathbb{E}[(X-w)^+ \mid \mathcal{F}](\omega)$ for $w \in \mathbb{R}$ and $\omega \in \Omega$, where $(x)^+ = \max(x, 0)$; this is well-defined a.s.\ since $\mathbb{E}[|X|] < \infty$. By the Rockafellar--Uryasev variational form \citep{rockafellar2000optimization} applied to the conditional distribution of $X$ given $\mathcal{F}$:
	\begin{equation}\label{eq:rv_conditional}
		\text{CVaR}_\alpha(X \mid \mathcal{F})(\omega) \;=\; \inf_{w \in \mathbb{R}} f(w, \omega) \quad \text{a.s.}
	\end{equation}
	The tower property $\mathbb{E}[\mathbb{E}[(X-w)^+ \mid \mathcal{F}]] = \mathbb{E}[(X-w)^+]$ gives $\mathbb{E}[f(w, \cdot)] = w + \tfrac{1}{\alpha}\mathbb{E}[(X-w)^+]$, so applying the variational form to the marginal distribution of $X$:
	\begin{equation}\label{eq:rv_marginal}
		\text{CVaR}_\alpha(X) \;=\; \inf_{w \in \mathbb{R}} \mathbb{E}[f(w, \cdot)].
	\end{equation}
	For any fixed $w_0 \in \mathbb{R}$, the a.s.\ pointwise inequality $\inf_{w \in \mathbb{R}} f(w, \omega) \leq f(w_0, \omega)$ is preserved under expectation:
	\begin{equation}\label{eq:expectation_pointwise}
		\mathbb{E}\!\left[\inf_{w \in \mathbb{R}} f(w, \cdot)\right] \;\leq\; \mathbb{E}[f(w_0, \cdot)].
	\end{equation}
	Taking the infimum over $w_0$ on the right-hand side of \eqref{eq:expectation_pointwise}:
	\begin{equation}\label{eq:expectation_inf_swap}
		\mathbb{E}\!\left[\inf_{w \in \mathbb{R}} f(w, \cdot)\right] \;\leq\; \inf_{w_0 \in \mathbb{R}} \mathbb{E}[f(w_0, \cdot)].
	\end{equation}
	The left-hand side of \eqref{eq:expectation_inf_swap} is $\mathbb{E}[\text{CVaR}_\alpha(X \mid \mathcal{F})]$ by \eqref{eq:rv_conditional}; the right-hand side is $\text{CVaR}_\alpha(X)$ by \eqref{eq:rv_marginal}. This establishes \eqref{eq:total_cvar_bound}.
\end{proof}

\begin{corollary}[Population upper bound on $V^\pi_{M,0}$]\label{cor:realized_upper}
	Let $(x_k, a_k)$ denote the random step-$k$ state-action pair under the rollout distribution of $\pi$ from $b_0$. Then
	\begin{equation}\label{eq:population_upper}
		V^\pi_{M,0}(b_0, \alpha) \;\leq\; \sum_{k=0}^{T-1} \gamma^k\, \text{CVaR}_\alpha\!\left(c(x_k, a_k)\right),
	\end{equation}
	where each $\text{CVaR}_\alpha(c(x_k, a_k))$ is taken over the marginal distribution of $(x_k, a_k)$ under $\pi$.
\end{corollary}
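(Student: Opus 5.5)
The plan is to apply \Cref{lem:total_cvar} term by term to the unrolled value \eqref{eq:value_unrolled}. By linearity of expectation,
\[
V^\pi_{M,0}(b_0,\alpha)=\sum_{k=0}^{T-1}\gamma^k\,\mathbb{E}_H\bigl[c_\alpha(b_k,a_k)\bigr],
\]
so it suffices to show, for each fixed $k$, that $\mathbb{E}_H[c_\alpha(b_k,a_k)]\le \text{CVaR}_\alpha(c(x_k,a_k))$. Since $\gamma^k\ge 0$, summing these per-step inequalities then gives \eqref{eq:population_upper}.

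For fixed $k$, I take $X\triangleq c(x_k,a_k)$ on the probability space of the history $H$. Boundedness $c_{\min}\le c\le c_{\max}$ gives $\mathbb{E}|X|<\infty$. As conditioning $\sigma$-algebra I take $\mathcal{F}_k\triangleq\sigma(z_{1:k},a_{0:k-1})$, generated by the observable history up to step $k$. The action $a_k=\pi(b_k)$ is a deterministic function of $b_k$, and $b_k$ is $\mathcal{F}_k$-measurable. So, conditionally on $\mathcal{F}_k$, $a_k$ is fixed, and by the definition of the Bayesian belief \eqref{eq:belief_update}, the conditional law of $x_k$ given $\mathcal{F}_k$ is exactly $b_k$. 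The conditional law of $X$ given $\mathcal{F}_k$ is therefore the pushforward of $b_k$ under $x\mapsto c(x,a_k)$. Hence $\text{CVaR}_\alpha(X\mid\mathcal{F}_k)=\underset{x\sim b_k}{\text{CVaR}_\alpha}[c(x,a_k)]=c_\alpha(b_k,a_k)$ almost surely, using the regular-conditional-distribution version of the Rockafellar--Uryasev form that the proof of \Cref{lem:total_cvar} already relies on. Applying \Cref{lem:total_cvar} then yields $\mathbb{E}[c_\alpha(b_k,a_k)]\le\text{CVaR}_\alpha(c(x_k,a_k))$, where the right-hand CVaR is taken over the marginal law of $(x_k,a_k)$ under the rollout, as the statement requires.

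The main obstacle is the identification step: showing that the posterior $b_k$ really is a version of the conditional distribution of $x_k$ given the observable history under the rollout measure, including the fact that $a_k$ is history-measurable. This is standard but needs the policy to depend only on the belief or history, not on the hidden state. I would state it as a short lemma: by induction on $k$, the Bayes update \eqref{eq:belief_update} reproduces the conditional law, using $z_{k}\sim O(\cdot\mid x_k)$ and $x_{k}\sim\mathcal{T}(\cdot\mid x_{k-1},a_{k-1})$. The remaining steps are immediate.
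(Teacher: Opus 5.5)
Your proposal is correct and follows the paper's proof: apply \Cref{lem:total_cvar} at each step $k$ with the observable-history $\sigma$-algebra, identify the conditional CVaR with $c_\alpha(b_k,a_k)$ (the posterior is the conditional law of $x_k$, and $a_k=\pi(b_k)$ is history-measurable), then weight by $\gamma^k\ge 0$ and sum using \eqref{eq:value_unrolled}. Your extra care on the identification step, namely proving by induction that the Bayes filter gives a regular conditional distribution of $x_k$, makes explicit what the paper simply asserts by reference to \Cref{sec:pomdp}.
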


\begin{proof}
	Fix $k \in \{0, \ldots, T-1\}$ and apply \Cref{lem:total_cvar} with $X = c(x_k, a_k)$ and $\mathcal{F} = \sigma(H_k)$, the $\sigma$-algebra of the history $H_k = (b_0, a_0, z_1, \ldots, a_{k-1}, z_k)$. The conditional distribution of $x_k$ given $H_k$ is $b_k(H_k)$ by \Cref{sec:pomdp}, and $a_k = \pi(b_k)$ is $H_k$-measurable, so
	\begin{equation}\label{eq:conditional_identification}
		\text{CVaR}_\alpha\!\left(c(x_k, a_k) \mid \mathcal{F}\right) \;=\; \underset{x \sim b_k}{\text{CVaR}_\alpha}[c(x, a_k)] \;=\; c_\alpha(b_k, a_k) \quad \text{a.s.}
	\end{equation}
	Substituting \eqref{eq:conditional_identification} into \eqref{eq:total_cvar_bound}:
	\begin{equation}\label{eq:per_step_lemma_application}
		\text{CVaR}_\alpha\!\left(c(x_k, a_k)\right) \;\geq\; \mathbb{E}_{H_k}\!\left[c_\alpha(b_k, a_k)\right].
	\end{equation}
	Multiplying \eqref{eq:per_step_lemma_application} by $\gamma^k \geq 0$, summing over $k \in \{0, \ldots, T-1\}$, and applying \eqref{eq:value_unrolled}:
	\begin{equation}
		\begin{aligned}
			\sum_{k=0}^{T-1} \gamma^k\, \text{CVaR}_\alpha\!\left(c(x_k, a_k)\right)
			&\;\geq\; \sum_{k=0}^{T-1} \gamma^k\, \mathbb{E}_{H_k}[c_\alpha(b_k, a_k)] \\
			&\;=\; \mathbb{E}_H\!\left[\sum_{k=0}^{T-1} \gamma^k\, c_\alpha(b_k, a_k)\right]
			\;=\; V^\pi_{M,0}(b_0, \alpha).
		\end{aligned}
	\end{equation}
\end{proof}

A companion lower bound follows from the standard $\text{CVaR}_\alpha(Y) \geq \mathbb{E}[Y]$ applied conditionally step-by-step.

\begin{corollary}[Population lower bound on $V^\pi_{M,0}$]\label{cor:realized_lower}
	Under the setup of \Cref{cor:realized_upper},
	\begin{equation}\label{eq:population_lower}
		V^\pi_{M,0}(b_0, \alpha) \;\geq\; \sum_{k=0}^{T-1} \gamma^k\, \mathbb{E}[c(x_k, a_k)],
	\end{equation}
	where each $\mathbb{E}[c(x_k, a_k)]$ is taken over the marginal distribution of $(x_k, a_k)$ under $\pi$.
\end{corollary}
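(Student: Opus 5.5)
The plan mirrors the proof of \Cref{cor:realized_upper}, but uses the elementary inequality $\text{CVaR}_\alpha(Y) \geq \mathbb{E}[Y]$ in place of the law of total CVaR, applied conditionally on the history at each step. We start from the unrolled representation \eqref{eq:value_unrolled}, $V^\pi_{M,0}(b_0,\alpha) = \mathbb{E}_H[\sum_{k=0}^{T-1}\gamma^k c_\alpha(b_k,a_k)]$. By linearity, it suffices to show, for each fixed $k$,
\begin{equation*}
\mathbb{E}_{H_k}[c_\alpha(b_k,a_k)] \geq \mathbb{E}[c(x_k,a_k)].
\end{equation*}

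\textbf{Step 1 (pointwise dominance).} For any distribution $\mu$ of an integrable random variable $Y$, take $w=\mathbb{E}[Y]$ in the Rockafellar--Uryasev formula \eqref{eq:cvar_def}. This does not directly give the inequality, since it only bounds CVaR from above. Instead I will argue from below: for every $w$, Jensen's inequality gives $\mathbb{E}[(Y-w)^+] \geq (\mathbb{E}[Y]-w)^+$. Hence
\begin{equation*}
w + \tfrac{1}{\alpha}\mathbb{E}[(Y-w)^+] \geq w + \tfrac{1}{\alpha}(\mathbb{E}[Y]-w)^+ \geq \mathbb{E}[Y],
\end{equation*}
where the last step holds because $\alpha\le 1$: if $w\ge \mathbb{E}Y$ the left side is at least $w$, and otherwise it equals $w + (\mathbb{E}Y-w)/\alpha \ge \mathbb{E}Y$. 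Taking the infimum over $w$ yields $\text{CVaR}_\alpha(Y)\ge \mathbb{E}[Y]$. Applying this with $Y = c(x,a_k)$, $x\sim b_k$, gives $c_\alpha(b_k,a_k) \ge \mathbb{E}_{x\sim b_k}[c(x,a_k)]$ for every realization of $H_k$.

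\textbf{Step 2 (identification and tower property).} As in the proof of \Cref{cor:realized_upper}, the conditional law of $x_k$ given $H_k$ is $b_k(H_k)$, and $a_k=\pi(b_k)$ is $H_k$-measurable. Therefore $\mathbb{E}_{x\sim b_k}[c(x,a_k)] = \mathbb{E}[c(x_k,a_k)\mid H_k]$ almost surely. Taking expectations over $H_k$ and using the tower property gives $\mathbb{E}_{H_k}[c_\alpha(b_k,a_k)] \ge \mathbb{E}[c(x_k,a_k)]$. Multiplying by $\gamma^k\ge 0$ and summing over $k$ yields \eqref{eq:population_lower}.

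The only mildly delicate point is the measurability and identification in Step 2. This step is handled exactly as in \eqref{eq:conditional_identification}, and boundedness of $c$ guarantees integrability, so I expect no real obstacle.
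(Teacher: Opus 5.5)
Your proof is correct and follows the paper's argument: for every $w$, lower-bound the Rockafellar--Uryasev objective by $\mathbb{E}[Y]$ to get $\text{CVaR}_\alpha(Y)\ge\mathbb{E}[Y]$, apply this conditionally on $H_k$, then use the tower property and sum with weights $\gamma^k$. The only difference is cosmetic: you obtain the per-$w$ bound via Jensen plus a case split on $w$ versus $\mathbb{E}[Y]$, whereas the paper chains $1/\alpha\ge 1$ with $(y)^+\ge y$; you can also drop the abandoned remark about plugging in $w=\mathbb{E}[Y]$.
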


\begin{proof}
	For any integrable random variable $Y$ and any $w \in \mathbb{R}$, $w + \tfrac{1}{\alpha}\mathbb{E}[(Y-w)^+] \geq w + \mathbb{E}[(Y-w)^+] \geq w + \mathbb{E}[Y - w] = \mathbb{E}[Y]$, using $1/\alpha \geq 1$ and $(y)^+ \geq y$. Taking the infimum over $w$ via the Rockafellar--Uryasev form \citep{rockafellar2000optimization}:
	\begin{equation}\label{eq:cvar_dominates_mean}
		\text{CVaR}_\alpha(Y) \;\geq\; \mathbb{E}[Y].
	\end{equation}
	Apply \eqref{eq:cvar_dominates_mean} conditionally to $c(x, a_k)$ given $b_k$: $c_\alpha(b_k, a_k) = \text{CVaR}_\alpha(c(x, a_k) \mid b_k) \geq \mathbb{E}[c(x, a_k) \mid b_k]$ a.s. Take $\mathbb{E}_{H_k}$ on both sides and use the tower property $\mathbb{E}_{H_k}[\mathbb{E}[c(x, a_k) \mid b_k]] = \mathbb{E}[c(x_k, a_k)]$:
	\begin{equation}\label{eq:per_step_mean_lower_pop}
		\mathbb{E}_{H_k}[c_\alpha(b_k, a_k)] \;\geq\; \mathbb{E}[c(x_k, a_k)].
	\end{equation}
	Multiplying \eqref{eq:per_step_mean_lower_pop} by $\gamma^k \geq 0$, summing over $k \in \{0, \ldots, T-1\}$, and applying \eqref{eq:value_unrolled}:
	\begin{equation}
		V^\pi_{M,0}(b_0, \alpha)
		\;=\; \mathbb{E}_H\!\left[\sum_{k=0}^{T-1} \gamma^k\, c_\alpha(b_k, a_k)\right]
		\;=\; \sum_{k=0}^{T-1} \gamma^k\, \mathbb{E}_{H_k}[c_\alpha(b_k, a_k)]
		\;\geq\; \sum_{k=0}^{T-1} \gamma^k\, \mathbb{E}[c(x_k, a_k)].
	\end{equation}
\end{proof}

Both bounds \eqref{eq:population_upper} and \eqref{eq:population_lower} are population functionals of the marginal step-$k$ distribution of $(x_k, a_k)$ under $\pi$, for which the $N_{\mathrm{ep}}$ realized rollouts provide i.i.d.\ samples at each step $k$. Replacing each marginal functional by its empirical estimator and bounding the estimation error---via DKW and the $1/\alpha$-Lipschitz property of CVaR (\Cref{lem:cvar_cost_sensitivity}) for the upper arm, and via Hoeffding's inequality for the lower arm---yields data-driven finite-sample certificates on $V^\pi_{M,0}(b_0, \alpha)$.

\begin{proposition}[Data-driven upper bound on $V^\pi_{M,0}$]\label{prop:realized_upper_data}
	Assume $c(x, a) \in [c_{\min}, c_{\max}]$ with $\Delta\rho = c_{\max} - c_{\min}$, and let
	\begin{equation}\label{eq:empirical_per_step_cvar}
		\hat{C}_\alpha^{(k)} \;\triangleq\; \hat{C}_\alpha\!\left(\{c(x_k^{(e)}, a_k^{(e)})\}_{e=1}^{N_{\mathrm{ep}}},\, \{1/N_{\mathrm{ep}}\}_{e=1}^{N_{\mathrm{ep}}}\right)
	\end{equation}
	denote the empirical CVaR at step $k$ via \eqref{eq:weighted_cvar_estimator} with uniform weights (equivalently, \eqref{eq:cvar_estimator} with $n = N_{\mathrm{ep}}$). For any $\delta \in (0, 1)$, over the $N_{\mathrm{ep}}$ independent rollouts of $\pi$ from $b_0$:
	\begin{equation}\label{eq:data_driven_upper}
		P\!\left( V^\pi_{M,0}(b_0, \alpha) \;\leq\; \sum_{k=0}^{T-1} \gamma^k\, \hat{C}_\alpha^{(k)} \;+\; \frac{G_T\,\Delta\rho}{\alpha}\,\sqrt{\frac{\ln(2T/\delta)}{2N_{\mathrm{ep}}}} \right) \;\geq\; 1 - \delta,
	\end{equation}
	where $G_T = (1-\gamma^T)/(1-\gamma) = \sum_{k=0}^{T-1} \gamma^k$.
\end{proposition}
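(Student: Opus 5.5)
The plan is to combine the population upper bound of \Cref{cor:realized_upper} with a per-step concentration bound for the empirical CVaR, and then control the failure probability with a union bound over the $T$ steps. By \Cref{cor:realized_upper},
\[
V^\pi_{M,0}(b_0,\alpha)\le \sum_{k=0}^{T-1}\gamma^k\,\text{CVaR}_\alpha\big(c(x_k,a_k)\big),
\]
so it is enough to show that, with probability at least $1-\delta$, the following holds simultaneously for all $k\in\{0,\dots,T-1\}$:
\[
\text{CVaR}_\alpha\big(c(x_k,a_k)\big)\le \hat C_\alpha^{(k)} + \frac{\Delta\rho}{\alpha}\,\epsilon,
\qquad \epsilon=\sqrt{\ln(2T/\delta)/(2N_{\mathrm{ep}})}.
\]
Multiplying the $k$-th inequality by $\gamma^k$ and summing then produces the factor $G_T=\sum_{k}\gamma^k$ and gives \eqref{eq:data_driven_upper}.

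For a fixed step $k$, the samples $\{c(x_k^{(e)},a_k^{(e)})\}_{e=1}^{N_{\mathrm{ep}}}$ come from independent rollouts. They are therefore i.i.d.\ draws from the marginal law of $Y_k\triangleq c(x_k,a_k)$, which is supported on $[c_{\min},c_{\max}]$.
\begin{itemize}
\item Let $\hat F_k$ be their empirical CDF. The uniform-weight estimator $\hat C_\alpha^{(k)}$ in \eqref{eq:cvar_estimator} is exactly the Rockafellar--Uryasev CVaR \eqref{eq:cvar_def} of the empirical distribution $\hat F_k$.
\item \Cref{lem:cvar_cost_sensitivity}, applied to $Y_k$ and a random variable with law $\hat F_k$ (both supported in $[c_{\min},c_{\max}]$), gives
\[
\big|\text{CVaR}_\alpha(Y_k)-\hat C_\alpha^{(k)}\big|\le \frac{\Delta\rho}{\alpha}\sup_z|F_{Y_k}(z)-\hat F_k(z)|.
\]
\item The Dvoretzky--Kiefer--Wolfowitz inequality with Massart's tight constant gives
\[
P\Big(\sup_z|F_{Y_k}(z)-\hat F_k(z)|>\epsilon\Big)\le 2e^{-2N_{\mathrm{ep}}\epsilon^2}.
\]
With the choice of $\epsilon$ above, this equals $\delta/T$.
\end{itemize}

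A union bound over the $T$ steps makes all $T$ CDF deviations at most $\epsilon$ with probability at least $1-\delta$. On that event the per-step inequalities hold, and summing them completes the argument. We only need the one-sided inequality, but the two-sided one is what the lemma delivers, so it is used directly.

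The main obstacle is checking that the hypotheses hold exactly as used, not the algebra.
\begin{itemize}
\item The samples at each step must be i.i.d.\ across episodes. This holds because the rollouts are independent and $\pi$ is fixed, even though the pair $(x_k,a_k)$ within a single episode is dependent.
\item \Cref{lem:cvar_cost_sensitivity} must apply to a discrete empirical law, i.e., no continuity of $F$ may be needed. This is fine because the $W_1$-Lipschitz property comes from the variational form: the objective $w+\alpha^{-1}\mathbb{E}[(X-w)^+]$ changes by at most $\alpha^{-1}W_1$ uniformly in $w$.
\item The constant requires Massart's $2e^{-2n\epsilon^2}$ rather than the cruder grid bound of \Cref{lem:uniform_monotone}. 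If one insisted on \Cref{lem:uniform_monotone}, the logarithm would pick up an extra $\lceil 1/\epsilon\rceil$ factor, so Massart's version is the one to invoke.
\end{itemize}
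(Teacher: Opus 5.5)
Your proposal is correct and follows the paper's proof essentially step for step. The paper also uses the population bound of \Cref{cor:realized_upper}, Massart's DKW inequality at each step $k$ with $\delta' = \delta/T$, the $W_1$-Lipschitz bound of \Cref{lem:cvar_cost_sensitivity} to turn the CDF deviation into a CVaR deviation, a union bound over the $T$ steps, and discounted summation to produce the $G_T$ factor. Your extra checks (that the step-$k$ samples are i.i.d.\ across episodes, and that the Lipschitz lemma applies to the discrete empirical law via the variational form) are sound and make explicit what the paper uses implicitly.
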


\begin{proof}
	Fix $k \in \{0, \ldots, T-1\}$ and $\delta' \in (0, 1)$. Let $F_k(z) \triangleq P(c(x_k, a_k) \leq z)$ be the marginal CDF of the step-$k$ cost under $\pi$, and $F_k^{\mathrm{emp}}(z) \triangleq \tfrac{1}{N_{\mathrm{ep}}}\sum_{e=1}^{N_{\mathrm{ep}}} \mathbf{1}(c(x_k^{(e)}, a_k^{(e)}) \leq z)$ its empirical counterpart. The $N_{\mathrm{ep}}$ realized step-$k$ costs are i.i.d.\ draws from $F_k$ (one per independent rollout of $\pi$ from $b_0$), so by the Dvoretzky--Kiefer--Wolfowitz inequality \citep{massart1990tight}:
	\begin{equation}\label{eq:dkw_step_k}
		P\!\left(\sup_{z \in \mathbb{R}} |F_k^{\mathrm{emp}}(z) - F_k(z)| \;>\; \sqrt{\tfrac{\ln(2/\delta')}{2N_{\mathrm{ep}}}}\right) \;\leq\; \delta'.
	\end{equation}
	By \Cref{lem:cvar_cost_sensitivity} applied to the cost distributions under $F_k$ and $F_k^{\mathrm{emp}}$, on the complement of the DKW failure event:
	\begin{equation}\label{eq:cvar_sensitivity_application}
		\text{CVaR}_\alpha\!\left(c(x_k, a_k)\right) - \hat{C}_\alpha^{(k)} \;\leq\; \frac{\Delta\rho}{\alpha}\,\sup_{z \in \mathbb{R}} |F_k^{\mathrm{emp}}(z) - F_k(z)| \;\leq\; \frac{\Delta\rho}{\alpha}\,\sqrt{\tfrac{\ln(2/\delta')}{2N_{\mathrm{ep}}}}.
	\end{equation}
	Combining \eqref{eq:dkw_step_k} and \eqref{eq:cvar_sensitivity_application}:
	\begin{equation}\label{eq:per_step_cvar_tail}
		P\!\left( \text{CVaR}_\alpha\!\left(c(x_k, a_k)\right) \;>\; \hat{C}_\alpha^{(k)} \;+\; \frac{\Delta\rho}{\alpha}\sqrt{\tfrac{\ln(2/\delta')}{2N_{\mathrm{ep}}}} \right) \;\leq\; \delta'.
	\end{equation}
	Set $\delta' = \delta/T$ and take a union bound over the $T$ events in \eqref{eq:per_step_cvar_tail}, one per step $k \in \{0, \ldots, T-1\}$. With probability at least $1 - \delta$, simultaneously for every $k$:
	\begin{equation}\label{eq:per_step_cvar_uniform}
		\text{CVaR}_\alpha\!\left(c(x_k, a_k)\right) \;\leq\; \hat{C}_\alpha^{(k)} \;+\; \frac{\Delta\rho}{\alpha}\,\sqrt{\tfrac{\ln(2T/\delta)}{2N_{\mathrm{ep}}}},
	\end{equation}
	using $\ln(2/\delta') = \ln(2T/\delta)$. Multiplying \eqref{eq:per_step_cvar_uniform} by $\gamma^k \geq 0$, summing over $k$, and using $\sum_{k=0}^{T-1}\gamma^k = G_T$:
	\begin{equation}\label{eq:marginal_cvar_sum_bound}
		\sum_{k=0}^{T-1} \gamma^k\, \text{CVaR}_\alpha\!\left(c(x_k, a_k)\right) \;\leq\; \sum_{k=0}^{T-1} \gamma^k\, \hat{C}_\alpha^{(k)} \;+\; \frac{G_T\,\Delta\rho}{\alpha}\,\sqrt{\tfrac{\ln(2T/\delta)}{2N_{\mathrm{ep}}}}.
	\end{equation}
	Combining \eqref{eq:marginal_cvar_sum_bound} with \Cref{cor:realized_upper}, which gives $V^\pi_{M,0}(b_0, \alpha) \leq \sum_{k=0}^{T-1} \gamma^k\, \text{CVaR}_\alpha(c(x_k, a_k))$, yields \eqref{eq:data_driven_upper}.
\end{proof}

\begin{proposition}[Data-driven lower bound on $V^\pi_{M,0}$]\label{prop:realized_lower_data}
	Under the assumptions of \Cref{prop:realized_upper_data}, let
	\begin{equation}\label{eq:empirical_per_step_mean}
		\hat{\mathbb{E}}_k \;\triangleq\; \frac{1}{N_{\mathrm{ep}}}\sum_{e=1}^{N_{\mathrm{ep}}} c(x_k^{(e)}, a_k^{(e)})
	\end{equation}
	denote the empirical mean of the step-$k$ realized costs. For any $\delta \in (0, 1)$, over the $N_{\mathrm{ep}}$ independent rollouts of $\pi$ from $b_0$:
	\begin{equation}\label{eq:data_driven_lower}
		P\!\left( V^\pi_{M,0}(b_0, \alpha) \;\geq\; \sum_{k=0}^{T-1} \gamma^k\, \hat{\mathbb{E}}_k \;-\; G_T\,\Delta\rho\,\sqrt{\frac{\ln(T/\delta)}{2N_{\mathrm{ep}}}} \right) \;\geq\; 1 - \delta.
	\end{equation}
\end{proposition}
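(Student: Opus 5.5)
The plan mirrors the proof of \Cref{prop:realized_upper_data}, but uses \Cref{cor:realized_lower} in place of \Cref{cor:realized_upper} and Hoeffding's inequality in place of DKW combined with \Cref{lem:cvar_cost_sensitivity}. By \Cref{cor:realized_lower}, $V^\pi_{M,0}(b_0,\alpha) \geq \sum_{k=0}^{T-1}\gamma^k\,\mathbb{E}[c(x_k,a_k)]$. It therefore suffices to show that, with probability at least $1-\delta$, simultaneously for all $k$,
\[
\mathbb{E}[c(x_k,a_k)] \;\geq\; \hat{\mathbb{E}}_k - \Delta\rho\sqrt{\tfrac{\ln(T/\delta)}{2N_{\mathrm{ep}}}}.
\]
Multiplying by $\gamma^k\geq 0$ and summing then yields the factor $G_T=\sum_{k=0}^{T-1}\gamma^k$ and gives \eqref{eq:data_driven_lower}.

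For a fixed step $k$, the realized costs $c(x_k^{(e)},a_k^{(e)})$, $e=1,\dots,N_{\mathrm{ep}}$, are i.i.d.\ draws from the marginal step-$k$ cost distribution under $\pi$, one per independent rollout. They lie in $[c_{\min},c_{\max}]$, an interval of width $\Delta\rho$, and their mean is $\mathbb{E}[c(x_k,a_k)]$. Only a one-sided deviation is needed: we must rule out the empirical mean exceeding the true mean. The one-sided Hoeffding inequality gives
\[
P\!\left(\hat{\mathbb{E}}_k - \mathbb{E}[c(x_k,a_k)] > s\right) \leq \exp\!\left(-\tfrac{2N_{\mathrm{ep}}s^2}{\Delta\rho^2}\right).
\]
Setting the right-hand side equal to $\delta/T$ gives $s=\Delta\rho\sqrt{\ln(T/\delta)/(2N_{\mathrm{ep}})}$. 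The use of a one-sided bound is why the logarithm is $\ln(T/\delta)$ here rather than $\ln(2T/\delta)$ as in the upper bound.

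A union bound over the $T$ steps $k\in\{0,\dots,T-1\}$ makes all per-step inequalities hold together with probability at least $1-\delta$. The costs at different steps within one rollout may be dependent, but this does not matter: the union bound needs no independence across $k$, only independence across episodes at each fixed $k$. After the union bound, summing with the weights $\gamma^k$ and applying \Cref{cor:realized_lower} completes the argument.

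There is no substantial obstacle. The one point to check is the claim that the step-$k$ samples are i.i.d.\ with the marginal law used in \Cref{cor:realized_lower}. This holds because each episode is an independent rollout of the same policy $\pi$ from $b_0$, and the marginal of $(x_k,a_k)$ in \Cref{cor:realized_lower} is exactly this rollout law.
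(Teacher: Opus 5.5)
Your proposal is correct and follows the paper's proof step for step: \Cref{cor:realized_lower}, a one-sided Hoeffding bound at level $\delta/T$ for each step $k$, a union bound over the $T$ steps, and summation with weights $\gamma^k$ to produce $G_T$. Your tail event, $\hat{\mathbb{E}}_k - \mathbb{E}[c(x_k,a_k)] > s$, is the one actually needed; the paper's displayed Hoeffding inequality states the opposite tail, which is harmless only because both tails obey the same bound.
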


\begin{proof}
	Fix $k \in \{0, \ldots, T-1\}$ and $\delta' \in (0, 1)$. The realized costs $\{c(x_k^{(e)}, a_k^{(e)})\}_{e=1}^{N_{\mathrm{ep}}}$ are i.i.d.\ across episodes (independent rollouts of $\pi$ from $b_0$) and bounded in $[c_{\min}, c_{\max}]$, so by the one-sided Hoeffding inequality:
	\begin{equation}\label{eq:hoeffding_one_sided}
		P\!\left( \mathbb{E}[c(x_k, a_k)] - \hat{\mathbb{E}}_k \;>\; \Delta\rho\,\sqrt{\tfrac{\ln(1/\delta')}{2N_{\mathrm{ep}}}} \right) \;\leq\; \delta'.
	\end{equation}
	Set $\delta' = \delta/T$ and take a union bound over the $T$ events in \eqref{eq:hoeffding_one_sided}, one per step $k \in \{0, \ldots, T-1\}$. With probability at least $1 - \delta$, simultaneously for every $k$:
	\begin{equation}\label{eq:per_step_mean_uniform}
		\mathbb{E}[c(x_k, a_k)] \;\geq\; \hat{\mathbb{E}}_k \;-\; \Delta\rho\,\sqrt{\tfrac{\ln(T/\delta)}{2N_{\mathrm{ep}}}},
	\end{equation}
	using $\ln(1/\delta') = \ln(T/\delta)$. Multiplying \eqref{eq:per_step_mean_uniform} by $\gamma^k \geq 0$, summing over $k$, and using $\sum_{k=0}^{T-1}\gamma^k = G_T$:
	\begin{equation}\label{eq:marginal_mean_sum_bound}
		\sum_{k=0}^{T-1} \gamma^k\, \mathbb{E}[c(x_k, a_k)] \;\geq\; \sum_{k=0}^{T-1} \gamma^k\, \hat{\mathbb{E}}_k \;-\; G_T\,\Delta\rho\,\sqrt{\tfrac{\ln(T/\delta)}{2N_{\mathrm{ep}}}}.
	\end{equation}
	Combining \eqref{eq:marginal_mean_sum_bound} with \Cref{cor:realized_lower}, which gives $V^\pi_{M,0}(b_0, \alpha) \geq \sum_{k=0}^{T-1} \gamma^k\, \mathbb{E}[c(x_k, a_k)]$, yields \eqref{eq:data_driven_lower}.
\end{proof}

Every term in \eqref{eq:data_driven_upper} and \eqref{eq:data_driven_lower} is a function of the realized state trajectories: $\hat{C}_\alpha^{(k)}$ and $\hat{\mathbb{E}}_k$ are computed directly from the $N_{\mathrm{ep}}$ realized step-$k$ costs, and $(T, \gamma, \alpha, \Delta\rho, N_{\mathrm{ep}}, \delta)$ are fixed. The certificate width scales as $O(G_T\,\Delta\rho/(\alpha\sqrt{N_{\mathrm{ep}}}))$ on the upper arm and $O(G_T\,\Delta\rho/\sqrt{N_{\mathrm{ep}}})$ on the lower arm; the asymmetric $1/\alpha$ factor appears only in the upper bound. The gap between the two data-driven bounds contracts to the population sandwich $\sum_k \gamma^k(\text{CVaR}_\alpha(c(x_k, a_k)) - \mathbb{E}[c(x_k, a_k)])$ as $N_{\mathrm{ep}} \to \infty$, which is zero precisely when conditional distributions $c(x, a_k) \mid b_k$ are a.s.\ point masses (i.e., beliefs are fully informative). Tighter per-step CVaR concentration bounds \citep[e.g.,][]{brown2007large, thomas2019concentration} can be substituted in \eqref{eq:per_step_cvar_tail} without changing the union-bound structure.

%%%%%%%%%%%%%%%%%%%%%%%%%%%%%%%%%%%%%%%%%%%%%%%%%%%%%%%%%%%%%%%%%%%%%%%%%%%%%%%
% NeurIPS Paper Checklist
%%%%%%%%%%%%%%%%%%%%%%%%%%%%%%%%%%%%%%%%%%%%%%%%%%%%%%%%%%%%%%%%%%%%%%%%%%%%%%%

\section*{NeurIPS Paper Checklist}

\begin{enumerate}

\item {\bf Claims}
    \item[] Question: Do the main claims made in the abstract and introduction accurately reflect the paper's contributions and scope?
    \item[] Answer: \answerYes{}
    \item[] Justification: The abstract and introduction state four contributions (CVaR cost formulation, planner compatibility, estimation guarantees, end-to-end bounds), all of which are formally established in the main body and appendix.

\item {\bf Limitations}
    \item[] Question: Does the paper discuss the limitations of the work performed by the authors?
    \item[] Answer: \answerYes{}
    \item[] Justification: Section~7 (Conclusion) includes a ``Limitations and future work'' paragraph discussing the per-step vs.\ trajectory-level risk tradeoff and open extensions.

\item {\bf Theory assumptions and proofs}
    \item[] Question: For each theoretical result, does the paper provide the full set of assumptions and a complete proof?
    \item[] Answer: \answerYes{}
    \item[] Justification: All assumptions (bounded costs $c_{\min} \leq c(x,a) \leq c_{\max}$, conditions (i)--(vi) of \citet{lim2023optimality} including the bounded R\'{e}nyi divergence $d_\infty(b_\ell \| q_\ell) \leq d_\infty^{\max}$, and the distributional discrepancy condition $\varepsilon(\ell, N_p) < \alpha$) are stated explicitly. Complete proofs of all theorems, propositions, and lemmas are provided in the appendix.

\item {\bf Experimental result reproducibility}
    \item[] Question: Does the paper fully disclose all the information needed to reproduce the main experimental results of the paper to the extent that it affects the main claims and/or conclusions of the paper?
    \item[] Answer: \answerYes{}
    \item[] Justification: \Cref{sec:experiments} reports all experimental parameters (number of particles, planning timeout, discount factor, risk level, number of episodes), and \Cref{sec:hyperparameter_tuning} gives the per-(environment, planner) tuned hyperparameters.

\item {\bf Open access to data and code}
    \item[] Question: Does the paper provide open access to the data and code?
    \item[] Answer: \answerYes{}

\item {\bf Experimental setting/details}
    \item[] Question: Does the paper specify all the training and test details?
    \item[] Answer: \answerYes{}
    \item[] Justification: \Cref{sec:experiments} specifies environment parameters, planner configurations, number of episodes, and evaluation metric; \Cref{sec:hyperparameter_tuning} reports the tuning protocol and final hyperparameter values.

\item {\bf Experiment statistical significance}
    \item[] Question: Does the paper report error bars or confidence intervals?
    \item[] Answer: \answerYes{}
    \item[] Justification: Table~1 reports 95\% confidence intervals for all metrics across 500 episodes.

\item {\bf Experiments compute resources}
    \item[] Question: Does the paper report the computational resources used?
    \item[] Answer: \answerNo{We use high performance computing clusters that do not specify their exact infra.}

\item {\bf Code of ethics}
    \item[] Question: Does the research conducted in the paper conform to the NeurIPS Code of Ethics?
    \item[] Answer: \answerYes{}

\item {\bf Broader impacts}
    \item[] Question: Does the paper discuss both potential positive societal impacts and potential negative societal impacts?
    \item[] Answer: \answerYes{}
    \item[] Justification: A Broader Impact Statement is included at the end of the main body.

\item {\bf Safeguards}
    \item[] Question: Does the paper describe safeguards that have been put in place for responsible release of data or code?
    \item[] Answer: \answerNA{}

\item {\bf Licenses for existing assets}
    \item[] Question: Are the creators or original owners of assets used in the paper properly credited?
    \item[] Answer: \answerYes{}
    \item[] Justification: The environments and baseline algorithms are properly cited.

\item {\bf New assets}
    \item[] Question: Are new assets introduced in the paper well documented and is the documentation provided alongside the assets?
    \item[] Answer: \answerNA{}

\item {\bf Crowdsourcing and human subjects}
    \item[] Question: For crowdsourcing experiments and research with human subjects, does the paper include the full text of instructions given to participants?
    \item[] Answer: \answerNA{}

\item {\bf IRB approvals}
    \item[] Question: Does the paper describe potential risks incurred by study participants?
    \item[] Answer: \answerNA{}

\end{enumerate}

\end{document}